\documentclass[onefignum,onetabnum]{siamonline250106}
\newcommand{\bs}{\left\{}
\newcommand{\es}{\right.}
\newcommand{\ba}{\begin{array}}
\newcommand{\ea}{\end{array}}
\newcommand{\be}{\begin{equation}}
\newcommand{\ee}{\end{equation}}
 
\usepackage{enumitem}
\usepackage{subcaption}
\usepackage{bbm}
\usepackage{cite} 
\usepackage{amsmath,amssymb,amsfonts}
\usepackage{bbold}
\usepackage{algorithmic}
\usepackage{algorithm}
\usepackage{graphicx}
\usepackage{textcomp} 
\usepackage{xcolor} 
\usepackage{bm}
\usepackage{tikz}

\newsiamremark{remark}{Remark}

\newcommand{\R}[0]{\mathbb{R}}

\newcommand{\bbeta}{\boldsymbol{\beta}}

\newcommand{\balpha}{\boldsymbol{\alpha}}
\newcommand{\bepsilon}{\boldsymbol{\epsilon}}
\newcommand{\orange}{\color{orange}}

\newcommand{\red}{\color{red}}
\newcommand{\green}{\color{green}}
\newcommand{\cH}[0]{\mathcal{H}}

\newcommand{\cS}[0]{{\cal S}}
\newcommand{\sgn}{{\rm sign}}

\date{\today}

\title{Analytical study of the optimal combination of binary classifiers based on classifiers-induced partitioning of the training set}

\author{
Jean-Marc Brossier
\thanks{CNRS, Univ. Grenoble Alpes, Grenoble-INP, GIPSA-lab, Grenoble, France. 
\email{jean-marc.brossier@gipsa-lab.grenoble-inp.fr}}
\and 
Olivier Lafitte
\thanks{Université Sorbonne Paris Nord, LAGA, UMR 7539.
 IRL CNRS-CRM 3457. Université de Montréal. Canada. 
 \email{lafitte@math.univ-paris13.fr}.}}

\headers{Analytical study of the optimal combination of binary classifiers}{J.-M. Brossier, O. Lafitte}

\begin{document} 

\maketitle

\begin{abstract}
  This paper studies an optimal linear combination of binary classifiers based on a logical structuration of the dataset via truth tables. The given classifiers  partition data into equivalence classes, allowing for a rigorous analysis of the convexified empirical risk through a multidimensional generalization of classification calibrated functions. We establish sufficient conditions for the existence and uniqueness of the  (global) point of minimum of the convexified empirical risk for any list of classifiers (when the number of classifiers is large, there frequently could be no point of minimum). In the case of three classifiers, our analysis allows to list all the configurations leading to either a unique solution, infima or non-unique points of minimum. Furthermore, we derive explicit analytical formulae for optimal weights using Exponential (Boost) and Logistic (Logit) loss functions, bypassing iterative optimization. 
  The stability of the resulting classifier and the analysis of data quality can be evaluated through the introduction of the notion of $\phi$-frontiers.
\end{abstract}

\begin{keywords}
  Ensemble learning, combination of binary classifiers, convex optimization, truth tables, classifiers-induced partitioning
\end{keywords}

\begin{AMS}
  68T05, 26B25, 62H30
\end{AMS}

\section{Introduction}
\label{sec:introduction}

For supervised classification problems, a way of building a good classifier is to construct a linear combination of less accurate ones that achieves a better accuracy on a given training set.

Many algorithms combine weak classifiers (slightly better than a random guess) to build a strong classifier (that improves accuracy and robustness) by taking advantage of the diversity of weak classifiers. This kind of algorithms is referred to as ensemble Learning.
Two major categories are Bagging and Boosting. 
Bagging algorithms (Bootstrap Aggregating) build multiple instances of the same classifier on bootstrap samples of the dataset and combines them ({\it Bagging Predictors} were introduced by L. Breiman \cite{breiman1996bagging}, {\it random forest} \cite{breiman2001random}  is a famous instance). 
Boosting algorithms combine several weak classifiers sequentially, with each new classifier focusing on the errors of the previous ones; the most well-known and first practical implementation is AdaBoost (Adaptive Boosting) proposed by Freund and Schapire\cite{Freund1995,FreundSchapireExp,FreundSchapire1997}. 
See for example \cite{bach2024learning} (p.303) for a more recent presentation of abstract boosting with a predefined list of classifiers.
Two well-known extensions are Gradient Boosting \cite{Friedman2001} and XGBoost \cite{chen2016xgboost}.

Other types of ensemble learning algorithms include Stacking\cite{wolpert1992stacked} that combines predictions from several heterogeneous models using a meta-classifier for the final prediction and voting\cite{polikar2006ensemble} that combines predictions from different classifiers by majority or weighted voting.

This paper concerns the identification of a resulting classifier $\sgn \sum_1^m \beta_j h_j$ from a linear combination of classifiers in a given list 
\begin{equation}
  \label{eq:Hm}
\cH_m:=\{h_1,\ldots, h_m\}
\end{equation}
known \textit{a priori}.
In our setup, we consider only logical classifiers, that is binary functions $h_j$ from a set of features ${\cal X}$ to $\left\{-1,+1\right\}$.

Optimality of the resulting classifier is understood to mean it minimizes a risk based on the $0/1$ logical loss function as well as fairly general convexifications of it (using classification calibrated loss functions). Note that our study does not construct sequentially the $h_j$ and is thus different from boosting algorithms.

Although the number of misclassified points, the logical $0/1$ risk, is a natural choice, its minimum associated with $\sum_1^m \beta_j h_j$ is achieved at an infinite number of values of $\bbeta:=(\beta_1,\cdots,\beta_m)$ and thus cannot lead to a choice of a unique new classifier (it is an ill-posed problem in the sense of Hadamard). Moreover, in an optimization context, it is neither convex nor continuous, which is problematic from an algorithmic point of view.

That  is where the classification calibrated functions leading to a convexified risk are useful.
In chapter 7 of \cite{SchapireFreundBook2012} and \cite{bartlett06}, different choices of loss functions are used. 

In this paper, we generalize the convexified risk introduced by Bartlett \cite{bartlett06} to a $m$-dimensional convexified risk. 
It is an extension of the elementary three-classifier approach presented in \cite{labro2021,labro2022GRETSI} where the examples were aggregated into a partition of $8$ subsets associated to the correctness or incorrectness of the respective decisions of the three classifiers: each subset contains the examples which return a given answer (true or false) for each classifier, we called this {\bf structuring the truth table}.

This approach relates to the concept of partitioning the input space found in "Mixture of Experts" \cite{jacobs1991adaptive} or "Local Ensembles" strategies (see for example \cite{woods1997combination}), where the behavior of classifiers determines regions of interest. However, instead of training local models, we use this partitioning to aggregate the training set information into a "truth table" of size $2^m$, extracting equivalence classes of indistinguishable examples.

This  partitioning is generalized here to $m$  elementary classifiers, where one defines a truth table whose columns are associated to $2^m$ distinct subsets of indistinguishable (for the given set of $m$ classifiers) examples.
Seen through this prism (see definition \ref{def:tt}), knowledge of the training set reduces to $2^m$ weighted examples: this structuring leads to a compression of the training set; the resulting reduction in complexity comes at the price of the loss of some information.
Note also that this renders tractable the $0/1$ risk by reducing the identification of the minimum of the risk to the minimum of at most $2^m$ values, independently of the number of examples. 

A sort of partitioning according to the same decisions of a single given weak classifier the training set is used in the work of  Schapire \& Singer \cite{schapire1998improved}, but they do not mix different classifiers for the partitioning.

Minimizing the convexified risk amounts to optimize a function with $m$ variables $\beta_j$ and $2^m$ parameters, independently of the number of examples $n$; it depends only on the proportion of examples in each subset. 
It is then  possible to carry out a complete study of the cost function (both analytically and computationally) to determine the best combination of elementary classifiers. In particular, it becomes possible to establish sufficient, or necessary and sufficient conditions for the existence and the uniqueness of a point of minimum, which to the best of our knowledge is new. 

When it is possible to calculate this point of minimum, we obtain directly and analytically the optimal weighting coefficients, i.e. the resulting classifier and the number of misclassified points.

We can then study analytically the stability of the resulting decision via the introduction of $\phi$-frontiers, that is when a small modification of the training set changes the decision. 

Furthermore, we characterize the quality of the data based on the sensitivity of the decision to the choice of the classification calibrated loss function.

As an important byproduct, we show that there exist training sets and classifiers for which there are cases of existence and non uniqueness of the point of minimum and of non existence (the function has an infimum), hence no possibility of constructing an optimal classifier.
Moreover, detecting these cases of infimum is numerically hard: different classical optimization packages (generally designed for $\alpha$-convex functions) can indeed return different "solutions" for the search of a minimum of a function that has an infimum.

Remark that:
(i) Our study is based upon the existence of examples for which contradictions occur between the classifiers chosen, this allows to get a new classifier which classifies correctly more examples. To our knowledge, this remark seems not to have been used before.
(ii) Once the list of classifiers is fixed, we have no way to distinguish two elements of the training set which return the same decisions on all classifiers.

Writing the convexified  risk using the data structuring  we propose is a way  of setting the problem that  opens up many possibilities, including:
\begin{itemize}
  \item identification of  situations in which a unique minimum exists. We have a necessary and sufficient condition (hence an exhaustive inventory of these situations)  in  the case of three classifiers  and   sufficient condition   in  the   general   case, see sections  \ref{sec:ExistUnique}   and \ref{sec:existence-uniqueness-3},
  \item identification of cases of non existence or non uniqueness of a point of minimum. In this case, different regularizations yield a unique point of minimum. However, this limit does not construct a suitable classifier and falls down in a category of $\phi$-frontier points (the data are not of good quality as mentioned above). 
  \item compute analytically the point of minimum of the convexified cost function for different choices of the loss function in  the case of three classifiers,
  \item adding a classifier (section \ref{sec:introductory-example}),  which is straightforward with the proposed structure, and improves the performance of an existing classifier set.
\end{itemize}

\vspace{.5cm}

The paper is organized as follows: \\

Section \ref{sec:structuration} of this paper describes the structuration of the data (the set of examples) according to the given family of classifiers (in a predefined list) and deduce an original form for the empirical risk. 

Section \ref{sec:convexification} describes and uses the convexified risk. In this section, we introduce a multidimensional generalization of the classification calibrated function of Bartlett {\it et al} \cite{bartlett06}.
This generalization introduces a set of parameters $\balpha$ 
in $\R_+^{2^m}$ of sum equal to one.

In section \ref{sec:optimal-classifier}, we give the equations of the unique point of minimum of the convexified risk, when it exists, which is function of $\balpha$ and which leads to the definition of frontiers {\it i.e.} hypersurfaces in the set of parameters $\balpha$ across which the resulting optimal classifier changes. 

Section \ref{sec:labelleprovince} proposes an exhaustive study in the case of three classifiers. 
We list all the cases of infimum or of multiple minima (theorem \ref{theo:zoology-europe}). 

In particular
\begin{itemize}
  \item we get a complete list of cases for which the infimum is zero.
  \item when we have a point of infimum, a regularization yields (of course) a unique point of minimum which corresponds to a $\bbeta^\star$  with a large norme. We will see later that it corresponds to a corner point (an uncertain decision).
\end{itemize}

For the loss functions $x\rightarrow e^{-x}$ (boost) and $x\rightarrow \log_2(1+e^{-x})$ (logit) we calculate the exact
solution of the minimization problem and we use these results in section \ref{sec:zoll} to draw an image of the
frontiers (hypersurfaces in the parameter $\balpha$ for which there is a change of decision when changing $\balpha$) in
a few cases.

All these ideas rely on a way of structuring the data based on contradictions between
classifiers, which is the aim of section \ref{sec:structuration}.

\section{Structuration of the data}
\label{sec:structuration}

In order to do this, we introduce the general notations for the supervised learning problem for two classes. 

\subsection{Notations and setup of the problem}
\label{sec:notations}
We deal with supervised learning with ${\cal X}={\R}^d$ for space characteristics associated with two labels $\left\{-1,+1\right\}$, attached to two classes.

The goal is to classify an object (decide if its label is $\pm 1$) from $x\in{\cal X}$. 
A supervised algorithm uses
$\cS =\left\{(x_i,y_i), i=1\ldots n \right\} \subset {\cal X}\times \left\{-1,+1\right\} $ to estimate a classifier $h : {\cal X} \rightarrow \left\{-1,+1\right\}$.

The empirical risk of the classifier $h$ on $\cS $ writes $\frac 1n \sum_{(x_i,y_i)\in \cS}  \mathbb{1}_{y_ih(x_i)<0}$. 
From now on, we consider a family of classifiers $\cH_m:=\{h_1,\ldots, h_m\}$.  The empirical logical ($0/1$) for the classifier $\sgn\sum \beta_j h_j$ is thus the function from $\R^m$ to $\R$ given by
\begin{equation}
  \label{eq:logical-risk}
  {\cal R}^{\cS,{\cal H}_m}_{\mathbb{1}_{\bullet<0}}(\bbeta)  :=
  \frac 1n \sum_{(x_i,y_i)\in \cS}  \mathbb{1}_{y_i(\sum \beta_j h_j)(x_i)<0}.
\end{equation}

\begin{definition}[Truth table]
  \label{def:tt}
  The truth table of a family of classifiers $\cH_m:=\{h_1,\ldots, h_m\}$ is the partition of $\cS $ in $2^m$ classes associated to $\cH_m$, where each set of the partition groups the examples which return the same decision for all classifiers, the number of such examples is called an entry of the truth table. These entries can be  replaced by proportions.
\end{definition}

Note that, throughout this paper, we call \textit{decision} the correctness with respect to the expected result and not the actual value predicted by the classifier: 
for a classifier $h_j$, one defines the decision $G_j(x_i,y_i)=y_i h_j(x_i)$, therefore we note $+1$ the correct decisions, such as $G_j(x_i,y_i)=+1$.

Two cases can occur which leads to interesting results:

\begin{definition}
  \label{def:equivalent-perfect}
  \mbox{}
  \begin{enumerate}
    \item A classifier is perfect if it returns the same decision for all examples. 
    \item   Two classifiers are equivalent if they return the same decision for each example. 
  \end{enumerate}
\end{definition}

Remark that in all this study, replacing a classifier by its negation does not change anything; in particular, in definition \ref{def:equivalent-perfect} we include the case of a classifier returning the wrong decision for all examples and we also include the case of two classifiers contradictory on all examples. 

\subsection{Introductory example with one to three classifiers}
\label{sec:introductory-example}

\begin{figure}[H]
  \begin{center} 
    \begin{tikzpicture}[scale=1]
    \filldraw [red] (-5.0,1.5) circle (3pt); \draw(-5.0,1.5) node[below]{{\tiny $+,+,-$}} ;
    \filldraw [blue] (-4.,-2) rectangle ++(5pt,5pt); \draw(-4.,-2) node[below]{{\tiny $+,+,+$}} ;
    \filldraw [blue] (-3.3,-1.3) rectangle ++(5pt,5pt); \draw(-3.3,-1.3) node[below]{{\tiny $+,+,+$}} ;
    \filldraw [red] (-2.6,.5) circle (3pt); \draw(-2.6,.5) node[below]{{\tiny $+,+,-$}} ;
    \filldraw [blue] (-1.4,-1.2) rectangle ++(5pt,5pt); \draw(-1.4,-1.2) node[below]{{\tiny $+,+,+$}} ;
    \filldraw [red] (-0.8,2.) circle (3pt); \draw(-0.8,2.) node[below]{{\tiny $+,+,+$}} ;
    \filldraw [blue] (+0.2,0.) rectangle ++(5pt,5pt); \draw(+0.2,0.) node[below]{{\tiny $+,-,+$}} ;
    \filldraw [blue] (+1.0,1.1) rectangle ++(5pt,5pt); \draw (+1.0,1.1) node[below]{{\tiny $-,-,-$}} ;
    \filldraw [red] (+1.6,1.6) circle (3pt); \draw (+1.6,1.6) node[below]{{\tiny $+,+,+$}} ;
    \filldraw [red] (+2.3,0.8) circle (3pt); \draw (+2.3,0.8) node[below]{{\tiny $-,+,+$}} ;
    \filldraw [blue](+2.5,-1.2) rectangle ++(5pt,5pt); \draw (+2.5,-1.2) node[below]{{\tiny $+,+,+$}} ;
    \filldraw [blue] (+3,0.2) rectangle ++(5pt,5pt); \draw (+3,0.2) node[below]{{\tiny $+,-,-$}} ;
    \filldraw [blue](+3.8,0.9) rectangle ++(5pt,5pt); \draw (+3.8,0.9) node[below]{{\tiny $+,-,-$}} ;
    \filldraw [red] (+4.0,2.5) circle (3pt); \draw (+4.0,2.5) node[below]{{\tiny $+,+,+$}} ;
    \filldraw [blue](+4.2,0.2) rectangle ++(5pt,5pt); \draw (+4.2,0.2) node[below]{{\tiny $+,-,-$}} ;
    \filldraw [blue](+4.6,-.8) rectangle ++(5pt,5pt); \draw (+4.6,-.8) node[below]{{\tiny $+,+,-$}} ;
    \draw[line width=1,orange, dotted] (-5,-1.5)--(+5,+2.2) ;
    \draw[line width=1,black, dashed] (-5,0.2)--(+5,-.4) ; 
    \draw[line width=1,green] (-5,3.2)--(+5,-1.2) ;
    \draw[color=orange] (-5.2,-1.5) node[below]{Classifier $h_1$} ;
    \draw[color=black] (-5.2,0.2) node[below]{Classifier $h_2$} ;
    \draw[color=green] (-5.2,2.7) node[below]{Classifier $h_3$} ;
    \end{tikzpicture}
  \end{center}  
  \caption{\label{fig:ex-3-with-table}$16$ examples and $3$ classifiers which yields the truth table \ref{table:ex-3-classifers}. The label $y_i$ of blue squares is $-1$, the label of red circles is $+1$. Labels such as $+,-,+$ mean $G_1=+1,G_2=-1,G_3=+1$: $h_1$ is true, $h_2$ is false, $h_3$ is true.}
\end{figure}

We construct truth tables associated with $\cS $ and with a set of three classifiers shown in figure \ref{fig:ex-3-with-table}.

\paragraph*{One classifier}
For $1$ classifier $\{h_1\}$: $h_1$ separates all but $2$ examples (figure \ref{fig:ex-3-with-table}).
For this given set of $16$ examples $\cS $ and classifier $h_1$: $\sharp \{G_1(x_i,y_i)=-1\}=2$  and $\sharp \{G_1(x_i,y_i)=+1\}=14$, $n=16=2+14$, where $\sharp A$ denotes the number of elements of the set $A$. 
This begins a binary classification tree associated with a truth table with two columns. Note that it corresponds to the root of the tree in figure \ref {fig:binary-tree-3}.  

\paragraph*{Two classifiers}
For $2$ classifiers $\{h_1,h_2\}$, each column of the truth table is split into two columns hence leading to a truth table with four columns whose sum of elements is $n$. 

\paragraph*{Three classifiers}
For $3$ classifiers $\{h_1,h_2,h_3\}$, each column is associated with the set of points in $\cS $ that match a given configuration $\bepsilon=(\epsilon_1,\epsilon_2,\epsilon_3)\in \{-1,1\}^3$ of the three classifiers, the column header provides the cardinal of this set: $\sharp \{i: G_1(x_i,y_i)= \epsilon_1, G_2(x_i,y_i)= \epsilon_2, G_3(x_i,y_i)= \epsilon_3 \}$.
This corresponds to the leaves of the tree on figure \ref{fig:binary-tree-3}.

The construction of all the columns is shown in the binary tree in figure \ref{fig:binary-tree-3}. The resulting truth table is shown in table \ref{table:ex-3-classifers}.  
The structure of the tables are independent of the order in the list of classifiers. 

\begin{figure}[H]
\begin{center} 
  \begin{tikzpicture}[scale=.8]

    \draw (0,0) node[below]{
      \begin{small}\fbox{$
      \begin{array}{rrr}
                                            & {\red 2} & {\red 14} \\
        {\orange G_1} \;  \hspace{-.35cm}   & -1 & +1              \\
      \end{array} 
      $}\end{small}} ;

      \draw[line width=.7,->] (0,-0.16)--(0,+.1) ;
      \draw[line width=.7,->] (0,-0.15)--(4.6,+.15) ;
      \draw (0,+2.) node[below]{
        \begin{small} \fbox{$\begin{array}{rrr}
                                             & {\color{red}1} & {\color{red}1} \\
          {\orange G_1} \;  \hspace{-.35cm}  & -1             & -1 \\
          {G_2}  \;\hspace{-.35cm}           & -1             & +1  \\
        \end{array} 
        $}\end{small}} ;

      \draw (+6,+2.) node[below]{
        \begin{small}\fbox{$\begin{array}{rrr}
                                          & {\red 4} & {\red 10} \\
        {\orange G_1} \;  \hspace{-.35cm} & +1       & +1 \\
        {G_2}  \; \hspace{-.35cm}         & -1       & +1 \\
        \end{array} $}\end{small}} ;

      \draw[line width=.7,->] (0,1.83)--(0,+2.12) ;
      \draw (0,+4.5) node[below]{
        \begin{small}\fbox{$\begin{array}{rrr}
                                                  & {\red 1} & {\red 0} \\
          {\orange G_1} \;  \hspace{-.35cm}       & -1       & -1   \\
          {G_2}  \; \hspace{-.35cm}               & -1  & -1  \\
          {\green G_3}  \; \hspace{-.35cm}        & -1  & +1  
        \end{array} $}\end{small}} ;

        \draw[line width=.7,->] (0,1.84)--(1.6,+2.11) ;
        \draw (3.,+4.5) node[below]{
          \begin{small}\fbox{$\begin{array}{rrr}
                                              & {\red 0} & {\color{red}1} \\
            {\orange G_1} \;  \hspace{-.35cm} & -1  & -1   \\
            {G_2}  \; \hspace{-.35cm}  & +1   & +1   \\
            {\green G_3}  \; \hspace{-.35cm}  & -1  & +1  
          \end{array} $}\end{small}} ;

          \draw[line width=.7,->] (6,1.83)--(6,+2.11) ;
          \draw (6.,+4.5) node[below]{
            \begin{small}\fbox{$\begin{array}{rrr}
                                               & {\red 3} & {\red 1} \\
              {\orange G_1} \; \hspace{-.35cm} & +1  & +1            \\
              {G_2}  \; \hspace{-.35cm}  & -1  & -1                  \\
              {\green G_3}  \; \hspace{-.35cm} & -1  & +1  
            \end{array} $}\end{small}} ;

          \draw[line width=.7,->] (6,1.84)--(7.6,+2.11) ;
          \draw (9.,+4.5) node[below]{
            \begin{small}\fbox{$\begin{array}{rrr}
                                               & {\red 3} & {\color{red}7} \\
              {\orange G_1} \; \hspace{-.35cm} & +1       & +1             \\
              {G_2}  \; \hspace{-.35cm}        & +1       & +1             \\
              {\green G_3}  \; \hspace{-.35cm} & -1       & +1
            \end{array} $}\end{small}} ;
  \end{tikzpicture}
\end{center}
\caption{\label{fig:binary-tree-3} Binary tree and values for the three classifiers of  figure \ref{fig:ex-3-with-table}.}
\end{figure}

In the next section, we extend this structuration of the data to $m$ classifiers,  which yields a new insight on the examples of $\cS$.

\subsection{The truth table as a partition of the dataset}
\label{sec:classification-tree}

For $m$ classifiers, for each example $(x_i,y_i)$, the classifier $h_j$ can be true ($\epsilon_j=G_j(x_i,y_i)=+1$) or
false ($\epsilon_j=-1$).  For each answer of the type (TFFTF...FT), we can group all the examples that return this
answer. This defines the partition of $\cS$ of definition \ref{def:tt}.

Two labelings are natural (table or tree): the first one relies on the global list $\{h_1,\ldots, h_m\}$, and the second one is
associated with a tree and relies on the ordered list $\{h_1\}$, $\{h_1,h_2\}$, ..., $\{h_1,\ldots, h_m\}$. The mapping
between the two labelings is straightforward (see \eqref{eq:labelings}).

The two different labelings we propose for the truth table are as follows:

\paragraph{Labeling of the truth table for a list of classifiers}
The answer of the $m$ classifiers is given by  $\bepsilon=(\epsilon_1,\ldots, \epsilon_m) \in \{-1,1\}^m$. To each $\bepsilon$ one associates 
$$
k(\bepsilon):=\sum_{j=1}^m 2^{m-j}\frac{1+\epsilon_j}{2}.
$$
This transformation is a bijection from $\{-1,+1\}^m$ to $\{0,2^m-1\}$, whose reciprocal is denoted by $k\rightarrow \bepsilon(k)$.

The truth table associated with $(\cS, \cH_m)$ is characterized by 
$$
\balpha^{\cS,\cH_m} = (\alpha_0^{\cS, \cH_m},\ldots, \alpha_{2^m-1}^{\cS, \cH_m})
$$ 
where
$$
\alpha^{\cS,\cH_m}_{k(\bepsilon)}=\alpha^{\cS, \cH_m}(\bepsilon)=\frac{\sharp\{i: y_ih_j(x_i)=\epsilon_j, \forall j \}}{\sharp \cS }
$$
is the proportion ($\alpha_k^{\cS, \cH_m}\in \mathbb{Q}_+$ and $\sum_{k=0}^{2^m-1} \alpha_k^{\cS, \cH_m}=1$) of examples of $\cS $ in the logical configuration $\bepsilon$.
          
Introduce the set of real positive parameters which is used in the sequel 
\begin{equation}
  {\cal A}^m = \R_+^{2^m}\cap \{\sum_{k=0}^{2^m-1} \alpha_k=1\}.
  \label{eq:domain}
\end{equation}

The configuration $\balpha^{\cS,\cH_m}$ belongs to ${\cal A}^m$.
  
Note that in this section and in this labeling, $\alpha_1$ (for example) has different expressions depending on the choice of $m$.
While we will mostly use the present notations in this paper, we introduce in the next paragraph  a labeling which overcomes this notational problem.

\paragraph{Tree structure labeling of the truth table}
When $m$ is given in advance, the labeling proposed in the previous subsection is natural.

If one wants to refine the analysis by increasing the number of classifiers, an incremental labeling introduced in \cite{kempen} for
genealogical purposes in the $16$th century, called later the Sosa-Stradonitz numbering, is a very good choice which allows to label, independently of the number of classifiers, each class of the classification tree.  
This alternative labeling is illustrated in figure \ref{fig:alt-label} (for example $c_4=c_8+c_9, c_6=c_{12}+c_{13},\cdots$). 
 
\begin{figure}[H]
\begin{center}
\begin{tikzpicture}[
  grow = down,level distance = 3em, edge from parent/.style = {draw, -latex},
  every node/.style = {font=\scriptsize},
  level/.style={sibling distance=54mm/#1,dot/.default = 10pt,minimum size=7pt},
  sloped
  ]
  \tikzset{vertex/.style={circle, inner sep=2pt, outer sep=0pt, minimum width=.6cm}}
  \node[vertex] [rectangle,draw] (z){$c_1=16$}
    child {node[vertex] [rectangle,draw]  {$c_2=14$}
      child {node[vertex] [rectangle,draw] {$c_4=10$}
        child {node[vertex] [rectangle,draw] 
        {$\begin{array}{c}c_{8}\\ \shortparallel\\ 7\end{array}$}}
        child {node[vertex] [rectangle,draw] 
        {$\begin{array}{c}c_{9}\\ \shortparallel\\ 3\end{array}$}}}
      child {node[vertex] [rectangle,draw] (e) {$c_5=4$}
        child {node[vertex] [rectangle,draw] 
        {$\begin{array}{c}c_{10}\\ \shortparallel\\ 1\end{array}$}}
        child {node[vertex] [rectangle,draw] (g) 
        {$\begin{array}{c}c_{11}\\ \shortparallel\\ 3\end{array}$}}}}
    child {node[vertex] [rectangle,draw] (h) {$c_3=2$}
      child {node[vertex] [rectangle,draw] (i) {$c_6=1$}
        child {node[vertex] [rectangle,draw] (j) 
        {$\begin{array}{c}c_{12}\\ \shortparallel\\ 1\end{array}$}}
        child {node[vertex] [rectangle,draw] (k) 
        {$\begin{array}{c}c_{13}\\ \shortparallel\\ 0\end{array}$}}}
      child {node[vertex] [rectangle,draw] (l) {$c_{7}=1$}
        child {node[vertex] [rectangle,draw] (m) 
        {$\begin{array}{c}c_{14}\\ \shortparallel\\ 0\end{array}$}}
        child {node[vertex] [rectangle,draw] (n) 
        {$\begin{array}{c}c_{15}\\ \shortparallel\\ 1\end{array}$} 
        }}
  };  
\end{tikzpicture}
\end{center}
\caption{Alternative labeling of the classification tree}
\label{fig:alt-label}
\end{figure}

\begin{table}[H]
  $$ 
\begin{array}{ccccccccc}
  \hline
  & \alpha_0 & \alpha_7 & \alpha_4 & \alpha_3 & \alpha_2 & \alpha_5 & \alpha_1 & \alpha_6 \\
  & c_8      &c_{15}    & c_{12}   & c_{11}   & c_{10}   & c_{13}   & c_9      & c_{14} \\
  \hline 
  {\color{orange} G_1} \;  \hspace{-.35cm} & -1  & +1  & +1  & -1  & -1  & +1  & -1 & +1 \\ 
  {\color{black} G_2}  \; \hspace{-.35cm}  & -1  & +1  & -1  & +1  & +1  & -1  & -1 & +1 \\
  {\color{green} G_3}  \; \hspace{-.35cm}  & -1  & +1  & -1  & +1  & -1  & +1  & +1 & -1 \\
  \hline
  \mbox{Cardinal} & 1        & 7        & 3        & 1        & 0        & 1        & 0        & 3    \\
\end{array} 
$$ 
\caption{\label{table:ex-3-classifers}Truth table for the partition associated to the $m=3$ classifiers and $n=16$ examples given in figure \ref{fig:ex-3-with-table}. Note that $n=16= \sum_{i=8}^{15} c_i$ and $\alpha_i = c_{8+i}/n$.
}
\end{table}

\begin{proposition}
  \label{prop:properties}
  We have the following properties:

  \noindent (i) The truth table is independent of the order of the list $\cH_m:=\{h_1,\ldots, h_m\}$.

  \noindent (ii) When two classifiers $h_j,h_k$ return the same answer for all examples, the partition generated by $\cH_m$ has
  $2^{m-1}$ empty columns and the other columns return the partition generated by $\cH_m \setminus h_k$. 
  This is also the case when $h_k$ is the negation of $h_j$. 
\end{proposition}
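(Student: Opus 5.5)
We will prove both items by describing each cell of the partition through the map that sends an example to its vector of decisions, and then tracking how this map changes under the two operations in the statement.

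For each example $(x_i,y_i)\in\cS$, set $\bepsilon_i=(G_1(x_i,y_i),\ldots,G_m(x_i,y_i))\in\{-1,+1\}^m$. The truth table of Definition \ref{def:tt} is then the family of fibres $\{i:\bepsilon_i=\bepsilon\}$ for $\bepsilon\in\{-1,+1\}^m$.

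For (i), let $\sigma$ be a permutation of $\{1,\ldots,m\}$ and reorder the list as $h_{\sigma(1)},\ldots,h_{\sigma(m)}$. The decision vectors become $\bepsilon_i\circ\sigma$. The map $\bepsilon\mapsto\bepsilon\circ\sigma$ is a bijection of $\{-1,+1\}^m$, so the fibre for $\bepsilon$ under the original order equals the fibre for $\bepsilon\circ\sigma$ under the new order. Hence the two partitions of $\cS$ coincide as families of subsets, and so do their cardinals and proportions. Only the index $k(\bepsilon)$ attached to each cell changes, which is the sense in which the structure is order-independent.

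For (ii), assume $y_ih_j(x_i)=y_ih_k(x_i)$ for every $i$, so that $\epsilon_{i,j}=\epsilon_{i,k}$. Any $\bepsilon$ with $\epsilon_j\neq\epsilon_k$ then has an empty fibre. There are exactly $2\cdot 2^{m-2}=2^{m-1}$ such configurations: choose the $m-2$ free coordinates, then one of the two unequal pairs $(\epsilon_j,\epsilon_k)$. The remaining $2^{m-1}$ configurations satisfy $\epsilon_j=\epsilon_k$. Deleting coordinate $k$ maps them bijectively onto $\{-1,+1\}^{m-1}$, and under this bijection each fibre is exactly the fibre of the truth table for $\cH_m\setminus h_k$. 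If instead $h_k=-h_j$, then $\epsilon_{i,k}=-\epsilon_{i,j}$ for every $i$. The same argument goes through with the conditions $\epsilon_j\neq\epsilon_k$ and $\epsilon_j=\epsilon_k$ replaced by $\epsilon_j=\epsilon_k$ and $\epsilon_j=-\epsilon_k$.

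The only step requiring care is the count $2^{m-1}$ together with the bijection obtained by deleting a coordinate. Everything else follows directly from the definition, so I do not expect a genuine obstacle.
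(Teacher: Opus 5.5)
Your proof is correct and follows essentially the same route as the paper. For (i), the paper encodes your relabeling $\bepsilon\mapsto\bepsilon\circ\sigma$ as a matrix $M(\sigma)$ acting on the Sosa--Stradonitz counts. For (ii), it singles out the same forced-empty columns, those with $\epsilon_p(k)+\epsilon_q(k)=0$ (resp.\ $\epsilon_p(k)=\epsilon_q(k)$), while your count $2\cdot 2^{m-2}$ and the coordinate-deletion bijection onto the truth table of $\cH_m\setminus h_k$ make explicit what the paper leaves implicit.
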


\begin{proof}\mbox{}

 \noindent (i)  for a permutation $\sigma$ of the classifiers $h_1,\cdots,h_m$, there exists a matrix $M(\sigma)$ of size $2^{m+1}\times 2^{m+1}$ such that 
  $(c_1^{\sigma(\cH_m)},\cdots,c_{2^{m+1}}^{\sigma(\cH_m)})^\top = M(\sigma) (c_1^{\cH_m},\cdots,c_{2^{m+1}}^{\cH_m})^\top $.
  This does not change the structure of the problem. 

 \noindent (ii)
  We can identify the cases where we have two equivalent classifiers or two contradictory classifiers: we check that the classifiers $h_p, h_q$, $p<q$ are equivalent in the list $\{h_1,\ldots, h_m\}$ with respect to the set of examples $\cS $ if and only if, for all $k$ such that $\epsilon_p(k)+\epsilon_q(k)=0$, $\alpha_k=0$ and two classifiers are contradictory if and only if, for all $k$ such that $\epsilon_p(k)=\epsilon_q(k)$, $\alpha_k=0$. 
For example $h_1$ and $h_3$ are equivalent if and only if $c_9=c_{11}=c_{12}=c_{14}=0$, $h_2$ and $h_3$ are contradictory if and only if $c_8=c_{11}=c_{12}=c_{15}=0$. 
Note that, when there is a pair of classifiers which provide the same logical answer or the opposite one, we can suppress one of the two classifiers.
\end{proof}

We can also perform the two following elementary operations:

{\bf Adding a classifier} $h_{m+1}$ to the list  $\{h_1,\ldots, h_m\}$ (easy use of the Sosa-Stradonitz labeling). 
  It corresponds to a splitting of each column of the truth table into two new columns. 
  Let $\bepsilon \in \{-1,1\}^{m+1}$. 
  For $\bepsilon'\in \{-1,1\}^m$, one has indeed
  
  $$n \balpha^{S, \cH_{m+1}}((\bepsilon’, 1)) 
  =\sharp\{i, (y_i h_j(x_i))_{j=1,\ldots, m}=\bepsilon’, y_ih_{m+1}(x_i)=1\}$$
  and 
  $$n \balpha^{S, \cH_{m+1}}((\bepsilon’, -1)) =\sharp\{i, (y_i h_j(x_i))_{j=1,\ldots, m}=\bepsilon’, y_ih_{m+1}(x_i)=-1\}.$$
  
  Notice that 
$\balpha^{S, \cH_m}(\bepsilon’)= \balpha^{S, \cH_{m+1}}((\bepsilon’, -1))+ \balpha^{S, \cH_{m+1}}((\bepsilon’, 1))$
  where $(\bepsilon’,\pm 1)$ describes $\{-1,1\}^{m+1}$.
  
  Note that in the Sosa-Stradonitz notations for the nodes of the tree, this corresponds to splitting
  \begin{equation}
    \label{eq:labelings}
 n \alpha^{\cS,\cH_m}_{k(\bepsilon)} = c^{\cS,\cH_{m}}_{k(\bepsilon)} =   c^{\cS,\cH_{m+1}}_{2k(\bepsilon)} + c^{\cS,\cH_{m+1}}_{2k(\bepsilon)+1}.
  \end{equation}

{\bf Removing a classifier} from the list. We put the classifier we want to remove at the end of the list through the permutation $\sigma_j$ of $\cH_m$ exchanging $h_j$ and $h_m$, and we obtain $c_l^{\cH_m \setminus h_j}= c_{2l}^{\sigma_j(\cH_m)}+c_{2l+1}^{\sigma_j(\cH_m)}$ for all $l\in \{1,\cdots,2^m-1\}$.\\

\subsection{Rewriting the  logical risk}
\label{sec:rewriting-the-risk}

For each $\bbeta\in\R^m$ one introduces, for $\bepsilon \in \{-1,+1\}^m$, thus for
$k(\bepsilon)\in \{ 0, \cdots, 2^{m-1}-1 \}$
\begin{equation}
  \begin{array}{lll}
    X_{k(\bepsilon)}(\bbeta)   &=& \sum_{j=1}^m \epsilon_j \beta_j = \bepsilon \cdot \bbeta \\
    X_{2^m-1-k(\bepsilon)}(\bbeta) &=& - X_{k(\bepsilon)},
  \end{array}
  \label{eq:iiXk}
\end{equation}

which leads to the definition of pairs of decisions:

\begin{definition}[pair]
  \label{def:pair}
  Let $\balpha=(\alpha_0,\ldots, \alpha_{2^m-1})\in {\cal A}^m$.  
   A pair with label $k_l \in 0 \ldots 2^{m-1}-1$ is the couple 
$$(\alpha_{k_l}, \alpha_{2^m -1 - k_l}) := 
  \left( \alpha(\bepsilon(k_l)),\alpha(-\bepsilon(k_l))\right),$$ 
  thanks to
$k(\bepsilon)+k(-\bepsilon)=\sum_{j=1}^m 2^{m-j}(\frac{1+\epsilon_j}{2}+\frac{1-\epsilon_j}{2})= 2^{m}-1$, $\forall \bepsilon\in \{-1,1\}^m$.
\end{definition}

This defines $X_k(\bbeta)$ for all $k \in \{ 0, \cdots, 2^{m}-1 \}$.
For the sake of simplicity of notations, we will write $X_k$ instead of $X_k(\bbeta)$ throughout the paper.
For example
$X_0 = -\beta_1 - \beta_2,
X_1 = -\beta_1 +\beta_2, ...$, 
of figure \ref{fig:sectors} follow this definition for $m=2$.\\

As ${\cal R}^{\cS,{\cal H}_m}_{\mathbb{1}_{\bullet<0}}(\bbeta) =  \min {\cal R}^{\cS,{\cal H}_m}_{\mathbb{1}_{\bullet<0}}$  is a union of some cones (for example, two different sectors in $\R^2$ or different infinite tetrahedrons in $\R^3$), there is no criterion to choose a precise value of $\bbeta$ and, since different values of $\bbeta$ may lead to the minimum value of the empirical risk, these different solutions  could provide different answers on a new example.

As examples returning the same answers for all the classifiers of ${\cal H}_m$ cannot be distinguished, the complexity depends only on $m$, which allows to rewrite the empirical risk as a sum on the partition of size $2^m$ given by the truth table. {\bf In particular, the complexity does not grow when the number $n$ of examples grows}.

Since the same value of $\bepsilon$ is associated with all $(x_i,y_i)$ in one column of the truth table, 
${\cal R}^{\cS,{\cal H}_m}_{\mathbb{1}}(\bbeta)$ can be rewritten on the columns of the truth table, each being characterized by $G_j(x_i,y_i)=\epsilon_j,\forall j$:
$$
{\cal R}^{\cS,{\cal H}_m}_{\mathbb{1}_{\bullet<0}}(\bbeta) =
\frac 1n \sum_{(x_i,y_i)\in \cS}   
\mathbb{1}_{(\sum \beta_j G_j(x_i,y_i))<0} = 
\sum_{\bepsilon \in \{-1,1\}^m}
\sum_{i:G_j(x_i,y_i)=\epsilon_j,\forall j} 
\frac 1n \mathbb{1}_{\bepsilon\cdot\bbeta<0}.
$$
Counting all the examples in each column of the truth table, one has:
$$
\sum_{\bepsilon \in \{-1,1\}^m} \left(
\sum_{i:G_j(x_i,y_i)=\epsilon_j,\forall j} 
\frac 1n \right) \mathbb{1}_{\bepsilon\cdot\bbeta<0} = 
\sum_{\bepsilon \in \{-1,1\}^m} 
\frac{\sharp \{ i:G_j(x_i,y_i)=\epsilon_j,\forall j\}}{n}
\mathbb{1}_{\bepsilon\cdot\bbeta<0}.
$$
The function of $\bbeta$ is then characterized by the signs of all $X_k$:
$$ {\cal R}^{\cS,{\cal H}_m}_{\mathbb{1}_{\bullet<0}}(\bbeta) = 
\sum_{\bepsilon \in \{-1,1\}^m}
\alpha^{\cS, \cH_m}(\bepsilon) \mathbb{1}_{\bepsilon\cdot \bbeta<0} = 
\sum_{k=0}^{2^{m}-1}  \alpha^{\cS, \cH_m}_k \mathbb{1}_{X_k<0}.
$$ 

As $\bepsilon(2^m-k)=-\bepsilon(k)$, we pair $\alpha^{\cS, \cH_m}(\bepsilon(k))$ and $\alpha^{\cS, \cH_m}(\bepsilon(2^m-k))$ associating $X_k = X(\bepsilon(k))$ (and $-X_k= X(-\bepsilon(k))), k\in 0\cdots 2^{m-1}-1$ which yields, for $\bbeta \in \R^m \setminus \cup \{X_k=0\}$:

$$ 
{\cal R}^{\cS,{\cal H}_m}_{\mathbb{1}_{\bullet<0}}(\bbeta) =
\sum_{k=0}^{2^{m-1}-1} 
\alpha_k^{\cS, \cH_m} \mathbb{1}_{X_k<0} 
+ \alpha_{2^m-1-k}^{\cS, \cH_m} \mathbb{1}_{X_k \geq 0}.
$$
This is the structuration of the data that we deal with from now on. 

Note that in this structuration, we deal with two different sets of signs: 
the signs of $y_i h_j(x_i)=G_j(x_i,y_i)$ in $\{-1,+1\}^m$ which relate to the set of examples and decisions and the signs of $X_k, k=0\cdots 2^{m-1}-1$ which separate regions of $\R^m$.

The classical method is to replace $\mathbb{1}_{\bullet < 0}$ by $\phi(\bullet)$ to convexify the logical risk in order to use the powerfulness of convex optimization tools.

The convexified risk is denoted in the sequel 
\begin{equation}
{\cal R}^{\cS,{\cal H}_m}_\phi(\bbeta) =
\sum_{k=0}^{2^{m-1}-1} 
\alpha_k^{\cS, \cH_m} \phi(X_k) 
+ \alpha_{2^m-1-k}^{\cS, \cH_m} \phi(-X_k).
\label{mBartlettSH}
\end{equation}

Note that, for $m=1$, 
\begin{equation}
{\cal R}^{\cS,{\cal H}_1}_\phi(\bbeta) =
\alpha_0^{\cS, \cH_1} \phi(\beta_1) 
+ \alpha_1^{\cS, \cH_1} \phi(-\beta_1) = 
\alpha_0^{\cS, \cH_1} \phi(\beta_1) 
+ (1-\alpha_0^{\cS, \cH_1}) \phi(-\beta_1).
\end{equation}

\section{General classification calibrated functions}
\label{sec:convexification}

The main tool used to study this convexified logical risk is the natural generalization of an object introduced by Bartlett {\it et al} \cite{bartlett06}. This abstract object does not refer to any dataset.

\subsection{The convexified $\phi$-risk of Bartlett}

 This object is the generic conditional $\phi$-risk\footnote{The corresponding notation of Bartlett is $C_\eta(\alpha) = \eta \phi(\alpha)+(1-\eta) \phi(-\alpha)$.} 

\begin{equation}
  \label{eq:GCPhiRisk}
  {\bf C}_\phi^{(\alpha_0,1-\alpha_0)} (\beta) = \alpha_0 \phi(\beta) + (1-\alpha_0) \phi(-\beta).
\end{equation}

They deduced what they called optimal conditional $\phi$-risk 
$$
H(\alpha_0)=\inf_\mathbb{R} {\bf C}_\phi^{(\alpha_0,1-\alpha_0)} (\beta),
$$ 
and Bartlett {\it al} \cite{bartlett06} (p.141) introduced a function $\beta^\star(\alpha_0)$ from $(0,1)$ to $\R$ which is the $\arg \min$ of $H$. 

They restricted their study to loss functions $\phi$ that are {\it classification calibrated} which is equivalent, in the case $\phi$ convex, to $\phi$ differentiable at $0$ and $\phi'(0)<0$. They used such functions to deduce results on the control of the generalized $\phi$-risk by the empirical $\phi$-risk (\cite{bartlett06}, theorem 4, p.148). 

Note that the conditional $\phi$-risk has a unique point of minimum for $\alpha_0(1-\alpha_0)\neq 0$ hence $H$ is well defined for $\alpha_0(1-\alpha_0)\neq 0$.

Note that when $\alpha_0\in\{0,1\}$, $H(\alpha_0)=0$, but it is the infimum of ${\bf C}_\phi^{(0,1)}$ or ${\bf C}_\phi^{(1,0)}$.

In this study we use functions $\phi$ more regular than the classification calibrated functions introduced in section
2.2 of \cite{bartlett06} and we generalize the notion of conditional $\phi$-risk of \cite{bartlett06} to the multidimensional
case for loss functions which are classsification calibrated and at least $C^1$.

\begin{definition}[regularly classification calibrated function]
  A function $\phi$ is said to be regularly classification calibrated if, 
  in addition to being classification calibrated, it is of class $C^1$, 
  strictly convex, strictly decreasing ($\phi'<0$), 
  and of even part infinite at infinity.
\label{def-phi} 
\end{definition}

Note that this function satisfies $\phi'\rightarrow l$ at $-\infty$, where $l$ is strictly negative (possibly infinite).

Three usual choices for $\phi$ are depicted on figure \ref{fig:classic-phi}. 
A common choice in neural networks is $x \rightarrow \phi(x)=\mbox{\small hinge}(x)= \max (0,1-x)$.
However, even if its local derivative is very simple, this function does not have the required properties: it is neither $C^1$ nor strictly convex. The two other functions of figure \ref{fig:classic-phi}, $x \rightarrow \exp(-x)$ and $x \rightarrow \log_2(1+e^{-x})$, are regular $C^1$ and strictly convex and yield analytic results (see section \ref{sec:labelleprovince}).
None of these functions has a minimum however their even part is infinite at infinity and thus has a minimum (see \cite{bartlett06}).
\begin{figure}[H]
  \begin{center}
          \begin{tikzpicture}[scale=.7]
            \draw[color=black] (-3,0)--(0,0) ;
            \draw (.2,1) node[right] {$1$} ; 
            \draw (0,0) node[below] {0} ;

            \draw[color=red] (0,0)--(3,0);\draw[color=red] (-3,1) -- (0,1);\draw[color=red] (0,0) -- (0,1) ;
            \draw[color=red] (-1.5,1.7) node[below]{$\mathbbm{1}_{x<0}$} ;

            \draw[color=blue] (-3,4)--(1,0);\draw[color=blue] (.98,0.02) -- (3,0.02) ;
            \draw[color=blue] (-2.5,2.7) node[below]{$\mbox{\small hinge}(x)$} ;

            \draw[color=brown,dotted]plot [color=red,domain=-1:3] ( \x, {exp(-\x)} );
            \draw[color=brown] (0,3.) node[below]{$\exp(-x)$} ;
            \draw[color=green,dashed]plot [domain=-3:3] ( \x, {ln(1+exp(-\x))/ln(2)} );
            \draw[color=green] (0,3.7) node[below]{$\log_2(1+\exp(-x))$} ;
          \end{tikzpicture}
  \caption{Classical classification calibrated functions $\phi$\label{fig:classic-phi}. Note that $\mbox{\small hinge}(x)\sim_{\pm\infty} \ln(2) \log_2(1+\exp(-x))$ }
\end{center}
  \end{figure}

\subsection{Generalization}
\label{sec:generalization-of-classification-calibrated-functions}

One generalizes ${\bf C}_\phi^{(\alpha_0,1-\alpha_0)} (\beta) $ to a function of $m$ variables $\bbeta\in\R^m$ and $2^m$ parameters $\balpha\in {\cal A}^m$.

\begin{definition}[generic multidimensional $\phi$-risk]
Let $\phi$ be regularly classification calibrated (definition \ref{def-phi}). A natural generalization to $m$ variables of the generic conditional $\phi-$risk of Bartlett {\it et al} \cite{bartlett06} is the function of $m$ variables 
\begin{equation}
  \begin{array}{lll}
    {\bf C}_\phi^{\balpha}(\bbeta)
    &=& \sum_{k=0}^{2^m-1}\alpha_k \phi(X_k) \\
    &=&   \sum_{k=0}^{2^{m-1}-1}\alpha_k \phi(X_k)+\alpha_{2^m-1-k} \phi(-X_k)
  \end{array}
  \label{mBartlett}
  \end{equation}
with $\sum\alpha_k=1$.
It is called in the sequel the generic multidimensional $\phi$-risk, of parameter $\balpha$.
\label{def-phi-m}
\end{definition}  

Remark that one may link the multidimensional $\phi$-risk with the conditional $\phi$-risk introduced by Bartlett \cite{bartlett06} through 
\begin{equation}
  \label{def:Cab}
    {\bf C}_\phi^{\balpha}(\bbeta)
    = \sum_{l=0}^{2^{m-1}-1}
    (\alpha_l +\alpha_{2^m-1-l}) {\bf C}_\phi^{(\eta_l,1-\eta_l)}(X_l)
  \end{equation}
where $\eta_l = \alpha_l/(\alpha_l +\alpha_{2^m-1-l})$.

\subsection{Existence and uniqueness of a point of minimum of ${\bf C}_\phi^{\alpha}$}
\label{sec:ExistUnique}

Unlike the one-dimensional $\phi$-risk, the cases of unique point of minimum of the multidimensional $\phi$-risk are less simple. 

However, 
a very simple sufficient condition on $\balpha$ (namely $\alpha_j>0, \forall j$) implies existence and uniqueness of the point of minimum of the multidimensional $\phi$-risk under the hypotheses of definition \ref{def-phi} (see lemma \ref{lemma:exist-unique}).

We are then able to identify the sign of the corresponding $X_k$ and the regions of stability of these signs. Recall that these signs are what is needed for constructing the resulting classifier when $\balpha$ is deduced from a truth table. 

\subsection{Regular points}

\begin{definition}[regular points in ${\cal A}^m$]
  Assume $\phi$ satisfies definition \ref{def-phi}. 
  The $2^m$-uplet $\balpha\in{\cal A}^m$ is a regular point for $\phi$ when 
the function ${\bf C}_\phi^{\balpha}$ has a unique point of minimum denoted by $\bbeta^{\min}_\phi(\balpha)$. 
It is the generalization of $\beta^\star(\alpha_0)$.

The functions $\bbeta^{\min}_\phi$ and $X_{\phi,k}^{min}:= \sum_{j=1}^m \epsilon_j (k) (\bbeta^{\min}_\phi)_j, k=0\ldots 2^{m-1}-1$ depend only on $\phi$, $m$ and $\balpha$, and are continuous in $\balpha$.
\label{def:regular}
\end{definition} 

Note that the regularity of a point $\balpha$ is an intrinsic property (it does not depend on $\phi$) and that it is true when all components of $\balpha$ are non zero.

The latter comes from the fact that in this case, the function ${\bf C} _\phi^{\balpha}$ is infinite at infinity.
The former comes from the properties of $\phi$. 

In the case $m=3$, we propose an exhaustive study of all the points of ${\cal A}^3$ in section \ref{sec:labelleprovince}. 
This exhaustive study shows that, for $m=3$, the condition $\balpha$ regular does not depend on $\phi$ satisfying definition \ref{def-phi}.  

This exhaustive study is beyond our reach for larger values of $m$. We can still show that $m+1$ non-zero coefficients at least are needed for the existence and uniqueness of a point of minimum. See lemma \ref{lemma:MinimumCoeff} that follows.

\begin{lemma}
    \label{lemma:MinimumCoeff}
  For all $m\geq 3$, 
  \begin{itemize} 
    \item If less than $m+1$ coefficients are non-zero, there cannot be a unique point of minimum.
    \item  The condition $\alpha_{2^{m}-1}\prod_{k=0}^{m-1}\alpha_{2^k} > 0$ and all others are zero leads to a unique point of minimum. 
    It is the case also for $\alpha_0 \prod_{k=0}^{m-1}\alpha_{2^{m}-1-2^k} >0$.
    These are cases of existence and uniqueness of the point of minimum for exactly $m+1$ non-zero coefficients.
    \item The condition $\alpha_{0}\prod_{k=0}^{m-1}\alpha_{2^k} > 0$ and all others are zero leads to an infimum.
    
    It is the case also for $\alpha_{2^{m}-1} \prod_{k=0}^{m-1}\alpha_{2^{m}-1-2^k} >0$.
  These are also cases of exactly $m+1$ non-zero coefficients. 
  \end{itemize}
\end{lemma}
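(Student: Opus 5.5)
The plan is to reduce everything to the geometry of the linear forms $\bbeta\mapsto\bepsilon(k)\cdot\bbeta$ attached to the non-zero coefficients. Set $K=\{k:\alpha_k>0\}$ and $\ell_k(\bbeta)=\bepsilon(k)\cdot\bbeta$, so that ${\bf C}_\phi^{\balpha}(\bbeta)=\sum_{k\in K}\alpha_k\phi(\ell_k(\bbeta))$. From definition \ref{def-phi} I will use the following facts about $\phi$:
\begin{itemize}
\item $\phi>0$;
\item $\phi(x)\to 0$ as $x\to+\infty$;
\item $\phi(x)\to+\infty$ as $x\to-\infty$, at least linearly, since $\phi$ is convex with $\phi'<0$.
\end{itemize}
The first two are exactly what is used for $\exp(-x)$ and $\log_2(1+e^{-x})$, and the whole infimum argument rests on them. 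Granting them, two general criteria organise the proof. (a) If there is $d\neq 0$ with $\ell_k(d)>0$ for all $k\in K$, then ${\bf C}_\phi^{\balpha}(td)\to 0=\inf{\bf C}_\phi^{\balpha}$ as $t\to+\infty$. Since ${\bf C}_\phi^{\balpha}>0$, this infimum is not attained. (b) If the $\ell_k$, $k\in K$, span $(\R^m)^*$ and every $d\neq0$ has $\ell_k(d)<0$ for some $k\in K$, then ${\bf C}_\phi^{\balpha}$ has a unique point of minimum. Indeed, by compactness of the unit sphere, $\max_k(-\ell_k(d))\geq c>0$ there. This gives ${\bf C}_\phi^{\balpha}(td)\geq \min_K\alpha_k\,\phi(-ct)\to+\infty$, hence coercivity and existence. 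Strict convexity of $\phi$ composed with a spanning family of forms gives strict convexity of ${\bf C}_\phi^{\balpha}$, hence uniqueness.

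For the first item, suppose $\sharp K\leq m$. If the forms $\ell_k$, $k\in K$, do not span $(\R^m)^*$, there is $d\neq0$ in their common kernel. Then ${\bf C}_\phi^{\balpha}$ is constant along the lines $\bbeta+\R d$, so a minimum, if any, is not unique. This case covers in particular the situation where both members of a pair $(\alpha_k,\alpha_{2^m-1-k})$ are non-zero, since they give the same form up to sign. Otherwise $\sharp K=m$ and the forms are independent, so one can solve $\ell_k(d)=1$ for all $k\in K$. Criterion (a) then gives an infimum and no minimum.

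For the second and third items I identify the indices. Since $k(\bepsilon)=\sum_j 2^{m-j}\frac{1+\epsilon_j}{2}$, the index $2^{m}-1$ is $\bepsilon=(1,\ldots,1)$ and $0$ is $(-1,\ldots,-1)$. The index $2^k$ is the configuration with a single $+1$, at position $j=m-k$. Writing $\mathbf 1=(1,\ldots,1)$ and $S=\mathbf 1\cdot\bbeta$, the forms are therefore $\ell_{2^m-1}=S$, $\ell_0=-S$ and $\ell_{2^{m-j}}(\bbeta)=u_j\cdot\bbeta$ with $u_j=2e_j-\mathbf 1$. The matrix with rows $u_j$ is $2I-J$, whose eigenvalues are $2$ and $2-m\neq0$ for $m\geq3$, so the $u_j$ already span. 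The key identity is
\begin{equation*}
\sum_j u_j=(2-m)\mathbf 1 .
\end{equation*}
In the second item, where the non-zero forms are $\ell_{2^m-1}$ and the $\ell_{2^k}$, it reads $(m-2)\ell_{2^m-1}+\sum_j\ell_{2^{m-j}}=0$. This is a linear relation with all coefficients strictly positive. Hence if one form were $\geq0$ at $d$ with none negative, all would vanish, and spanning forces $d=0$. Criterion (b) applies and gives existence and uniqueness.

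In the third item the form $S$ is replaced by $-S$. Taking $d=-\mathbf 1$ gives $u_j\cdot d=m-2>0$ and $\ell_0(d)=m>0$, so criterion (a) yields an infimum equal to $0$. The two remaining configurations follow from the symmetry $\bbeta\mapsto-\bbeta$, which exchanges $\alpha_k$ and $\alpha_{2^m-1-k}$ in ${\bf C}_\phi^{\balpha}$. It maps $\alpha_0\prod\alpha_{2^m-1-2^k}$ to the second item and $\alpha_{2^m-1}\prod\alpha_{2^m-1-2^k}$ to the third.

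The main obstacle is criterion (b), the coercivity step. The existence of a positive linear relation among a spanning family of forms must be turned into a quantitative lower bound, and this is where $\phi\to+\infty$ at $-\infty$ is needed. I expect the rest, namely the index bookkeeping and the sign checks, to be routine.
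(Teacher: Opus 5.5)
Your proof is correct, and for the first and third bullets it is essentially the paper's argument. The paper also splits the first bullet according to whether the active $X_k$ form a coordinate system of $\R^m$. For $\alpha_0\prod_k\alpha_{2^k}>0$ it writes $X_0=\frac{1}{m-2}\sum_k X_{2^k}$ and notes that every partial derivative in the coordinates $X_{2^k}$ is strictly negative, which amounts to moving along your direction $-\mathbf 1$.

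The genuine difference is in the existence-and-uniqueness bullet. The paper solves the Euler equations: it writes each $X_{2^k}$ as $(\phi')^{-1}\bigl(\frac{\alpha_{2^m-1}}{(m-2)\alpha_{2^k}}\phi'(X_{2^m-1})\bigr)$ and substitutes into the linear constraint. This gives a strictly increasing scalar function $F(X_{2^m-1})$, and the paper must track the domain of $(\phi')^{-1}$ through the possibly finite limit $l$ of $\phi'$ at $-\infty$ to obtain the limits $\mp\infty$ of $F$. You bypass all of this. The strictly positive relation $(m-2)\ell_{2^m-1}+\sum_j\ell_{2^{m-j}}=0$ among spanning forms excludes any nonzero direction on which all active forms are nonnegative. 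Compactness of the sphere and strict convexity then finish the argument.

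What the paper's route buys is a one-dimensional equation for the minimizer, in the spirit of the explicit boost/logit formulas of Theorem~\ref{equations-minimum-frontiers:lb}. What yours buys is brevity and generality. Add the remark that, once the forms span, a $d\neq0$ with all $\ell_k(d)\geq 0$ is a strict descent direction, and apply Stiemke's theorem of the alternative. Your criteria then show that ${\bf C}_\phi^{\balpha}$ has a unique point of minimum iff the active forms span $(\R^m)^*$ and satisfy a linear relation with all coefficients strictly positive. This is a condition on the support of $\balpha$ alone. It gives, for every $m$, the $\phi$-independence of regularity that the paper obtains only for $m=3$ by enumeration, and it contains Lemma~\ref{lemma:exist-unique}.

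As you flag, $\phi>0$ and $\phi\to0$ at $+\infty$ are not literally in Definition~\ref{def-phi}, though the paper uses them implicitly as well. Only a lower bound on $\phi$ is really needed, to discard the other terms in (b). Non-attainment in (a) still holds with $\lim_{+\infty}\phi$ in place of $0$.
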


\begin{proof}\mbox{}

\begin{itemize}
  \item For the proof of the first item, it is enough to notice that $\{ X_j,\alpha_j\neq 0\}$ is not a system of coordinates of $\R^m$ if $\{ j:\alpha_j\neq 0\}$ is of cardinal less than $m-1$, hence there cannot be uniqueness. 

When $\{ j:\alpha_j\neq 0\}$ is of cardinal $m$ and if $\{ X_j,\alpha_j\neq 0\}$ is a system of coordinates of $\R^m$, the function has an infimum equal to $0$. 

When $\{ j:\alpha_j\neq 0\}$ is of cardinal $m$ and if $\{ X_j,\alpha_j\neq 0\}$ is not a system of coordinates of $\R^m$, there cannot be uniqueness. 

\item Owing to $X_0 = \frac{\sum_{k=0}^{m-1} X_{2^k}}{m-2}$ and 
  $X_{2^m-1} = - \frac{\sum_{k=0}^{m-1} X_{2^k}}{m-2}$,
  the function

  $\alpha_{0}\phi(X_{0})+ \sum_{k=0}^{m-1} \alpha_{2^k} \phi(X_{2^k})$
  has all its partial derivatives with respect to $X_{2^k}$ strictly negative, hence no solution for the Euler equations: this function has an infimum which proves the second item. 

\item For the function $\alpha_{2^m-1}\phi(X_{2^m-1})+ \sum_{k=0}^{m-1} \alpha_{2^k} \phi(X_{2^k})$, a possible point of minimum solves  
  $$ F(X_{2^m-1}) := 
  (m-2) X_{2^m-1} + \sum_{k=0}^{m-1} (\phi')^{-1} \left(\frac{\alpha_{2^m-1}}{\alpha_{2^k}(m-2)}\phi'(X_{2^m-1})\right)=0
  $$ 

The limit of $\frac{\alpha_{2^m-1}}{\alpha_{2^k}(m-2)}\phi'(X_{2^m-1})$ is $\frac{\alpha_{2^m-1}}{\alpha_{2^k}(m-2)}l$. If there exists a $k$ such that this limit is $>l$, there exists $X_*$ such that $\frac{\alpha_{2^m-1}}{\alpha_{2^k}(m-2)}\phi'(X_*)=l$, We choose the largest $X_*$ by inspection of all such $k$ and we call $k_0$ one of the corresponding $k$. When $X_{2^{m-1}}\rightarrow X_*$, $X_{2^{k_0}}$ solution of $\phi'(X_{2^{k_0}})=\frac{\alpha_{2^m-1}}{\alpha_{2^{k_0}}(m-2)}\phi'(X_{2^{m-1}})$ goes to $-\infty$, hence $\sum_{k=0}^{m-1}(\phi')^{-1}(\frac{\alpha_{2^m-1}}{\alpha_{2^k}(m-2)}\phi'(X_{2^{m-1}}))\rightarrow -\infty$ hence $F\rightarrow -\infty$ for $X\rightarrow X_*^+$. 

If $\frac{\alpha_{2^m-1}}{\alpha_{2^k}(m-2)}l>l$ for all $k$, then $X_{2^m-1}\rightarrow -\infty$ implies that, for all $k$, $X_{2^k}$ goes to a finite limit. Hence $F \rightarrow -\infty$ when $X_{2^m-1}\rightarrow -\infty$. 

For the other limit $X_{2^m-1}\rightarrow +\infty$,  $F$ goes to $+\infty$. 
As $F$ is continuous, strictly increasing, the existence and uniqueness of the solution of $F(X)=0$ follows. 
\end{itemize}

\end{proof}

Remark that $\{X_{2^k}\}_{0\leq k \leq m-1}$ is a system of coordinates in $\R^m$, hence each $\beta_p$ is a linear combination of $\{X_{2^k}\}_{0\leq k \leq m-1}$. 

\subsubsection{General definitions and properties} 

\begin{proposition}
  \label{prop:point-de-min}
  Assume $\balpha$ is a regular point in ${\cal A}^m$ for $\phi$ regularly classification calibrated and that $\phi$ of class $C^2$.
  
  The functions $\bbeta^{\min}_\phi$ and $X_{\phi,k}^{min}$ are of class $C^1$ on ${\cal A}^m \cap \{\alpha_k>0 \, ,\forall k \}$.
\end{proposition}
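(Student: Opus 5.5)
The plan is to apply the implicit function theorem to the Euler equations $\nabla_{\bbeta} {\bf C}_\phi^{\balpha}(\bbeta)=0$, viewed as a system in $(\balpha,\bbeta)$. On the open set ${\cal U}:=\{\alpha_k>0,\ \forall k\}$ (open relative to the affine hyperplane $\sum\alpha_k=1$, or simply in $\R^{2^m}$), define
$$
F(\balpha,\bbeta):=\nabla_{\bbeta}{\bf C}_\phi^{\balpha}(\bbeta)=\sum_{k=0}^{2^m-1}\alpha_k\,\phi'(\bepsilon(k)\cdot\bbeta)\,\bepsilon(k).
$$
Since $\phi$ is $C^2$, $F$ is $C^1$ in $\bbeta$, and it is affine (hence smooth) in $\balpha$, so $F$ is $C^1$ jointly. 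By the remark following definition \ref{def:regular}, every $\balpha\in{\cal U}$ is regular: ${\bf C}_\phi^{\balpha}$ is strictly convex, $C^1$ and infinite at infinity. Its unique minimiser $\bbeta^{\min}_\phi(\balpha)$ is therefore characterised as the unique zero of $F(\balpha,\cdot)$.

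The key step is the invertibility of the Jacobian
$$
D_{\bbeta}F(\balpha,\bbeta)=\sum_{k=0}^{2^m-1}\alpha_k\,\phi''(\bepsilon(k)\cdot\bbeta)\,\bepsilon(k)\bepsilon(k)^\top .
$$
This is where I expect the main obstacle: strict convexity of $\phi$ only gives $\phi''\ge 0$, not $\phi''>0$ everywhere (think of $x^4$-type degeneracies). I would handle this as follows. For $v\neq 0$, the quadratic form is $v^\top D_{\bbeta}F\,v=\sum_k\alpha_k\phi''(X_k)(\bepsilon(k)\cdot v)^2$. All $\alpha_k>0$, and the vectors $\bepsilon(k)$ span $\R^m$, so invertibility holds as soon as $\phi''(X_k)>0$ for enough indices $k$ whose $\bepsilon(k)$ span $\R^m$. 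The zeros of $\phi''$ form a closed set with empty interior, because $\phi'$ is strictly increasing. However, they may still be hit at the minimiser.

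If this cannot be excluded in general, my fallback is to assume, as is implicitly intended and holds for the exponential and logistic losses, that $\phi''>0$. Then $D_{\bbeta}F$ is a positive combination of rank-one matrices $\bepsilon(k)\bepsilon(k)^\top$ whose sum spans $\R^m$. It is therefore positive definite, hence invertible.

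With invertibility in hand, the implicit function theorem gives, near each $\balpha_0\in{\cal U}$, a $C^1$ map $\balpha\mapsto\tilde\bbeta(\balpha)$ with $F(\balpha,\tilde\bbeta(\balpha))=0$. Uniqueness of the critical point of a strictly convex function gives $\tilde\bbeta=\bbeta^{\min}_\phi$ locally. Hence $\bbeta^{\min}_\phi$ is $C^1$ on ${\cal U}$, and so is each $X^{min}_{\phi,k}=\bepsilon(k)\cdot\bbeta^{\min}_\phi$, being a linear function of it.
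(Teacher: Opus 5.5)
Your route is the paper's: its proof notes that ${\bf C}_\phi^{\balpha'}$ is strictly convex for $\balpha'$ near $\balpha$, so the minimiser solves the Euler equations, and then invokes the implicit function theorem. It never checks that $D_{\bbeta}F$ is invertible; it tacitly treats strict convexity as nondegeneracy of the Hessian. The obstacle you isolated is therefore a real gap in the paper's proof. Your hedge ``if this cannot be excluded in general'' resolves negatively: without a nondegeneracy assumption the proposition is false.

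Here is a counterexample. Take $\phi'(x)=-\int_x^{+\infty}s^2e^{-s^2}\,ds$ and $\phi(x)=-\int_x^{+\infty}\phi'(s)\,ds$. This $\phi$ is $C^\infty$, positive and strictly decreasing. It is strictly convex, since $\phi''(x)=x^2e^{-x^2}$ vanishes only at $0$. It has $\phi'(0)=-\sqrt{\pi}/4<0$ and $\phi(x)\ge\phi(0)+\frac{\sqrt{\pi}}{4}|x|$ for $x\le 0$, so it satisfies definition \ref{def-phi}.

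At the uniform point $\balpha_u=2^{-m}(1,\dots,1)$ one has $\sum_k\alpha_k\bepsilon(k)=0$. Hence $\bbeta^{\min}_\phi(\balpha_u)=0$, and there the Hessian $2^{-m}\phi''(0)\sum_k\bepsilon(k)\bepsilon(k)^\top$ vanishes. Now move along $\balpha(t)=\balpha_u+t(e_{2^m-1}-e_0)$, where $e_j$ is the canonical basis of $\R^{2^m}$ and $|t|<2^{-m}$. Suppose $\bbeta^{\min}_\phi(\balpha(t))$ were differentiable at $t=0$. Then all $X_k=O(t)$, so $\phi'(X_k)=\phi'(0)+O(t^3)$. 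The gradient at the minimiser would then be $2t\,\phi'(0)(1,\dots,1)+O(t^3)\neq 0$ for small $t\neq 0$, which is a contradiction. In fact the minimiser behaves like $t^{1/3}$. For $m=2$ you can see this directly on the one-dimensional minimiser, $\beta^\star(\eta)\approx\bigl(3|\phi'(0)|(2\eta-1)\bigr)^{1/3}$ near $\eta=\frac12$.

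So an extra hypothesis is genuinely required. It suffices that $\phi''>0$ everywhere, or at least $\phi''(X^{\min}_{\phi,k}(\balpha))>0$ for a family of indices whose $\bepsilon(k)$ span $\R^m$; both hold for boost and logit. Under that hypothesis your proof is complete and correct. It supplies exactly what the paper's two-line argument omits:
\begin{itemize}
\item the positive-definiteness computation $v^\top D_{\bbeta}F\,v=\sum_k\alpha_k\phi''(X_k)(\bepsilon(k)\cdot v)^2$;
\item the joint $C^1$ regularity of $F$, which is affine in $\balpha$ and $C^1$ in $\bbeta$ because $\phi\in C^2$;
\item the identification of the implicit-function branch with $\bbeta^{\min}_\phi$, via uniqueness of the critical point of a strictly convex function.
\end{itemize}
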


\begin{proof}
Assume $\balpha \in {\cal A}^m \cap \{\alpha_k>0 \, ,\forall k \}$.
  There exists a neighborhood of $\balpha$ included in ${\cal A}^m \cap \{\alpha_k>0 \, ,\forall k \}$ such that ${\bf C}_\phi^{\balpha'}$ is strictly convex for $\balpha'$ in this neighborhood, hence existence and uniqueness of the point of minimum of ${\bf C}_\phi^{\balpha'}$ solving the Euler equations, the regularity of $X_{\phi,k}^{min}$ and $\bbeta^{\min}_\phi$ follows from the implicit functions theorem.
\end{proof}

Note that the point of minimum depends on $\phi$, we enforce this property in the notation (unlike the notation
proposed by Bartlett).

\begin{corollary}
    If $\balpha$ is a regular point in ${\cal A}^m$ (definition \ref{def:regular}), there exists a  neighborhood of $\balpha \in \R_+^{2^m}$ in which this definition holds. 
    
    If, for $\balpha$, definition \ref{def:regular} does not hold, there exists a point in any  neighborhood of $\balpha \in \R_+^{2^m}$ for which this definition holds. 
  \end{corollary}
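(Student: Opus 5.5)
The plan is to handle the two assertions separately. The second one follows from density, and the first from an intrinsic characterization of regularity in terms of the support $S(\balpha):=\{\bepsilon\in\{-1,1\}^m:\ \alpha(\bepsilon)>0\}$.

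For the second assertion, let $\balpha$ be any point of $\R_+^{2^m}$ and let $\mathbf{u}$ be the uniform vector with $u_k=2^{-m}$. For $t\in(0,1]$, the point $\balpha_t=(1-t)\balpha+t\mathbf{u}$ has all components strictly positive. By the remark following definition \ref{def:regular}, ${\bf C}_\phi^{\balpha_t}$ is then infinite at infinity and strictly convex, so it has a unique point of minimum and $\balpha_t$ is regular. Since $\balpha_t\to\balpha$ as $t\to 0$, every neighborhood of $\balpha$ contains a regular point. If one works in $\R_+^{2^m}$ rather than ${\cal A}^m$, multiplying ${\bf C}_\phi^{\balpha}$ by a positive constant does not change its minimizers, so normalization is harmless.

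For the first assertion, I would prove the following characterization: $\balpha$ is regular if and only if
\[
(\star)\qquad \forall \mathbf d\in\R^m\setminus\{0\},\ \exists\,\bepsilon\in S(\balpha) \text{ such that } \bepsilon\cdot\mathbf d<0 .
\]
\textbf{Sufficiency.} Note that $\phi$ is convex with $\phi'\to l<0$ at $-\infty$, so $\phi(x)\to+\infty$ as $x\to-\infty$, and $\phi$ is bounded below on $[0,\infty)$ because it is decreasing. Condition $(\star)$ and compactness of the unit sphere give $\delta>0$ such that $\min_{\bepsilon\in S}\bepsilon\cdot\mathbf d\le -\delta$ for all $|\mathbf d|=1$. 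Hence ${\bf C}_\phi^{\balpha}(r\mathbf d)\ge \min_S\alpha\,\phi(-r\delta)+\text{const}\to\infty$ uniformly in $\mathbf d$, which is coercivity, so a minimizer exists. Condition $(\star)$ also implies that $\{\bepsilon\}_{\bepsilon\in S}$ spans $\R^m$: otherwise some $\mathbf d\neq0$ would be orthogonal to all of them, contradicting $(\star)$. Strict convexity of $\phi$ then makes $\bbeta\mapsto\sum_{S}\alpha(\bepsilon)\phi(\bepsilon\cdot\bbeta)$ strictly convex, which gives uniqueness.

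\textbf{Necessity.} If some $\mathbf d\neq0$ has $\bepsilon\cdot\mathbf d\ge0$ for all $\bepsilon\in S$, then $t\mapsto{\bf C}_\phi^{\balpha}(\bbeta+t\mathbf d)$ is nonincreasing because $\phi$ is decreasing. There are two cases. If some $\bepsilon\cdot\mathbf d>0$, this function is strictly decreasing, so no minimum exists. If all $\bepsilon\cdot\mathbf d=0$, it is constant, so any minimum is non-unique. Either way $\balpha$ is not regular.

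With $(\star)$ in hand, openness is immediate. If $\balpha'\in\R_+^{2^m}$ satisfies $|\balpha'-\balpha|_\infty<\min_{S(\balpha)}\alpha(\bepsilon)$, then every positive component of $\balpha$ remains positive, so $S(\balpha')\supseteq S(\balpha)$. Condition $(\star)$ is monotone under enlarging $S$, so $\balpha'$ is regular. This also confirms that regularity does not depend on $\phi$.

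The main obstacle is the characterization itself, specifically the necessity direction in the degenerate case and the uniform coercivity estimate. The delicate point is that coercivity relies only on $\phi(-\infty)=+\infty$ and $\phi$ bounded below on $\R_+$, not on any growth rate, so the hypotheses of definition \ref{def-phi} must be used exactly there.
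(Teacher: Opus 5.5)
Your proof is correct. For the second assertion it is the paper's argument: regular points are dense because every $\balpha$ with all components positive is regular.

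For the first assertion the routes differ. The paper does not really argue it: it only states that having a unique point of minimum is ``an open property''. Its only supporting reasoning, in the proof of Proposition~\ref{prop:point-de-min}, works in the interior $\{\alpha_k>0,\ \forall k\}$. It says nothing about regular points on the boundary, such as $\alpha_0\alpha_3\alpha_5\alpha_6>0$ with the other entries zero.

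You instead prove the intrinsic criterion $(\star)$: $\balpha$ is regular iff its support $S(\balpha)$ positively spans $\R^m$. Openness then follows because a small perturbation inside $\R_+^{2^m}$ cannot delete terms of ${\bf C}_\phi^{\balpha}$; it can only enlarge the support. This makes explicit why openness holds here, and it gives more than the corollary asks:
\begin{itemize}
\item It shows for every $m$ that regularity does not depend on $\phi$. The paper asserts this but checks it only for $m=3$.
\item It is a necessary and sufficient condition for existence and uniqueness for arbitrary $m$, which the paper calls beyond reach.
\item It agrees with Theorem~\ref{theo:zoology-europe}: $\{0,3,5,6\}$ and $\{1,2,4,7\}$ are the two sets of four cube vertices summing to zero. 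It also agrees with Lemma~\ref{lemma:MinimumCoeff}.
\end{itemize}

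One justification needs fixing. ``$\phi$ is bounded below on $[0,\infty)$ because it is decreasing'' is a non sequitur: being decreasing only bounds $\phi$ from above there. The lower bound has to come from nonnegativity of $\phi$. Bartlett's losses take values in $[0,\infty)$, and the paper uses $\phi>0$ in the proof of Lemma~\ref{lemma:full-complete}.

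This is not cosmetic. Definition~\ref{def-phi}, read literally, admits $\phi(x)=\log(1+e^{-x})-\theta x$ with $\theta>0$. For $m=1$ and $\theta=1$, ${\bf C}_\phi^{(\alpha_0,1-\alpha_0)}(\beta)\to-\infty$ as $\beta\to-\infty$ whenever $0\le\alpha_0<1/3$. So your coercivity step and your density step both rely on $\phi\ge 0$, as does the paper's remark that strictly positive $\balpha$ are regular. State that assumption explicitly and use it as the source of the lower bound.
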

  
  \begin{proof}
    The property of having a unique point of minimum is an open property, that is there exists a neighborhood of a point where this property is true in which it is still true. 

    If, for $\balpha$, definition \ref{def:regular} does not hold, that means that some components of $\balpha$ are zero, hence any neighborhood of $\balpha$ contains a point $\balpha'$ for which all components of $\balpha'$ are non zero: case of existence of an unique minimum.
  \end{proof}

The following definition allows a first glance at classification calibrated functions:

\begin{definition}[properties of pairs]
  \leavevmode
\begin{itemize} 
    \item  For $k\leq 2^{m-1}-1$, the pair $k$ is full iff $\alpha_k\alpha_{2^m-1-k}>0$, the pair $k$ is empty iff $\alpha_k=\alpha_{2^m-1-k}=0$, the pair $k$ is trivial iff $\alpha_k\alpha_{2^m-1-k}=0$ and $\alpha_k + \alpha_{2^m-1-k} > 0$.
    \item A family $k_1,\cdots,k_m$ of $m$ pairs is complete if all these pairs are full and if $\bbeta \rightarrow (X_{k_1},\cdots,X_{k_m})$ is a bijection.  
  \end{itemize}
 \label{def:full-complete}
\end{definition}

\begin{lemma}
  \label{lemma:full-complete}
If at least one pair is full, the infimum of  ${\bf C}_\phi^{\balpha}$ is strictly positive.
\end{lemma}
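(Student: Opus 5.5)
Since every term of ${\bf C}_\phi^{\balpha}$ is nonnegative, the plan is to bound the whole sum from below by the contribution of the full pair alone. Recall that $\phi$ is strictly decreasing and convex; its values are nonnegative in all the examples considered, and I will use $\phi\ge 0$ (the intended setting for a loss function, which makes $\inf {\bf C}_\phi^{\balpha}\ge 0$ meaningful). If this were not assumed, the statement should be read with the one-dimensional lower bound below replacing $0$.

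Let $k$ be a full pair, so $\alpha_k\alpha_{2^m-1-k}>0$. Using the decomposition \eqref{def:Cab} and dropping all the other terms, which are nonnegative because $\phi\ge 0$, one gets for every $\bbeta\in\R^m$
$$
{\bf C}_\phi^{\balpha}(\bbeta)\ \ge\ \alpha_k\phi(X_k)+\alpha_{2^m-1-k}\phi(-X_k)\ \ge\ \inf_{t\in\R}\big(\alpha_k\phi(t)+\alpha_{2^m-1-k}\phi(-t)\big)=:c_k .
$$
This reduces the multidimensional statement to the one-dimensional conditional $\phi$-risk of Bartlett, namely $(\alpha_k+\alpha_{2^m-1-k})H(\eta_k)$ with $\eta_k\in(0,1)$.

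It remains to show $c_k>0$. First, $g(t)=\alpha_k\phi(t)+\alpha_{2^m-1-k}\phi(-t)$ is continuous and positive. Second, it tends to $+\infty$ at both ends. As $t\to-\infty$, $\phi(t)\to+\infty$, because $\phi$ is convex with $\phi'\le\phi'(0)<0$. Symmetrically, as $t\to+\infty$, $\phi(-t)\to+\infty$. This is the key use of the fullness of the pair: both coefficients are positive, so each tail is penalized. Hence $g$ attains its minimum on a compact interval, at some $t_0$, and $g(t_0)>0$ provided $\phi>0$ everywhere.

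The main obstacle is exactly this positivity of $\phi$. A strictly decreasing convex $\phi\ge 0$ cannot vanish at any point $t$, since otherwise $\phi(t+1)<0$. So $\phi>0$, $g(t_0)\ge\min(\alpha_k,\alpha_{2^m-1-k})\,\phi(|t_0|)>0$, and therefore $\inf{\bf C}_\phi^{\balpha}\ge c_k>0$.
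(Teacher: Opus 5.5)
Your proof is correct and follows the paper's route: discard the other terms (nonnegative since $\phi>0$) and bound ${\bf C}_\phi^{\balpha}$ below by $(\alpha_k+\alpha_{2^m-1-k})H(\eta_k)$; the only addition is that you actually justify $H(\eta_k)>0$ (coercivity from both coefficients being positive, plus $\phi>0$), which the paper simply asserts. One small aside: your parenthetical about dropping $\phi\ge 0$ is not right, since without it the other terms could not be discarded either and the statement itself can fail, but this plays no role in your argument.
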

\begin{proof}
  If a pair $k$ is full, the multidimensional $\phi$-risk 
  ${\bf C}_\phi^{\balpha}$ is always greater than 
  $$(\alpha_k+\alpha_{2^m-1-k})H(\frac{\alpha_k}{\alpha_k+\alpha_{2^m-1-k}}) > 0.$$
  Hence, if the infimum is zero, no pair is full.

 As $\phi>0$, for all $\bbeta$,  ${\bf C}_\phi^{\balpha}(\bbeta) \geq (\alpha_k+\alpha_{2^m-1-k}){\bf C}_\phi^{(\frac{\alpha_k}{\alpha_k+\alpha_{2^m-1-k}},\frac{\alpha_{2^m-1-k}}{\alpha_k+\alpha_{2^m-1-k}})}(X_k)$, hence the result and the consequence on no full pair. 
\end{proof}

A simple criterion for $\balpha$ to be regular (definition \ref{def:regular}) for $\phi$ regularly classification calibrated (definition \ref{def-phi}) is:

\begin{lemma}
  \label{lemma:exist-unique}
  When $\phi$ is regularly classification calibrated (definition \ref{def-phi}), a sufficient condition for ${\bf C}_\phi^{\balpha}$ to have a unique point of minimum on $\{ \bbeta : {\bf C}_\phi^{\balpha}(\bbeta)\leq 1 \}$ is that there exists a sub-family of indexes in $\balpha$ which is complete.
\end{lemma}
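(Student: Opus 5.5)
We prove existence and uniqueness by the standard convex-analysis route: show that ${\bf C}_\phi^{\balpha}$ is coercive, so the sublevel set is compact and a minimizer exists, and then show that ${\bf C}_\phi^{\balpha}$ is strictly convex, so the minimizer is unique. Throughout we fix a complete sub-family of pairs $k_1,\dots,k_m$ (definition \ref{def:full-complete}). Thus every pair $k_i$ is full, $\alpha_{k_i}\alpha_{2^m-1-k_i}>0$, and the linear map $L:\bbeta\mapsto (X_{k_1},\dots,X_{k_m})$ is a bijection of $\R^m$.

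\textbf{Coercivity.} All terms $\alpha_k\phi(\pm X_k)$ are nonnegative, since $\phi>0$. Hence, as in the proof of lemma \ref{lemma:full-complete},
\begin{equation*}
{\bf C}_\phi^{\balpha}(\bbeta)\;\geq\;\sum_{i=1}^m \bigl(\alpha_{k_i}\phi(X_{k_i})+\alpha_{2^m-1-k_i}\phi(-X_{k_i})\bigr)\;\geq\; c\sum_{i=1}^m \bigl(\phi(X_{k_i})+\phi(-X_{k_i})\bigr),
\end{equation*}
where $c=\min_i\min(\alpha_{k_i},\alpha_{2^m-1-k_i})>0$.

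The even part of $\phi$ is infinite at infinity (definition \ref{def-phi}). Since $\phi>0$, each summand $\phi(x)+\phi(-x)$ is bounded below by $0$ and tends to $+\infty$ as $|x|\to\infty$. The right-hand side therefore tends to $+\infty$ as $\|(X_{k_1},\dots,X_{k_m})\|\to\infty$.

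Because $L$ is a linear bijection, $\|L\bbeta\|\geq \|L^{-1}\|^{-1}\|\bbeta\|$, so ${\bf C}_\phi^{\balpha}(\bbeta)\to+\infty$ as $\|\bbeta\|\to\infty$. It follows that every sublevel set, in particular $\{{\bf C}_\phi^{\balpha}\leq 1\}$, is bounded. It is also closed, since ${\bf C}_\phi^{\balpha}$ is continuous. This set is nonempty: ${\bf C}_\phi^{\balpha}(0)=\phi(0)\sum_k\alpha_k=\phi(0)$, which is $\le 1$ under the normalization $\phi(0)=1$ of the examples of figure \ref{fig:classic-phi}. If $\phi(0)>1$, one works instead on the sublevel set $\{{\bf C}_\phi^{\balpha}\le \phi(0)\}$, and the same argument applies. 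By compactness, ${\bf C}_\phi^{\balpha}$ attains its minimum there. This minimum is a global minimum, because outside the sublevel set the function takes larger values.

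\textbf{Strict convexity.} Let $\bbeta\neq\bbeta'$ and $t\in(0,1)$. Every term $\alpha_k\phi(X_k)$ is convex in $\bbeta$, being a nonnegative multiple of a convex function composed with a linear form. Since $L$ is injective, some $i$ satisfies $X_{k_i}(\bbeta)\neq X_{k_i}(\bbeta')$. For that $i$, strict convexity of $\phi$ together with $\alpha_{k_i}>0$ makes the term $\alpha_{k_i}\phi(X_{k_i})$ satisfy the convexity inequality strictly. Summing over all terms, ${\bf C}_\phi^{\balpha}$ is strictly convex on $\R^m$. A strictly convex function has at most one minimizer, which gives uniqueness. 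The $C^1$ regularity of $\phi$ then shows that this point is characterized by the Euler equations.

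The argument uses only the full pairs in the complete family. The other, possibly zero, coefficients only add nonnegative convex terms, and these preserve both properties.

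The main obstacle is the coercivity step. It relies on the exact meaning of ``even part infinite at infinity''. It also requires checking that a nonnegative combination of functions which are each coercive in one coordinate is coercive in all of $\R^m$. This works because each summand is bounded below by $0$, so a single diverging coordinate already forces the sum to diverge. Fullness of the pairs is essential here: a trivial pair would only give $\phi(X)$ or $\phi(-X)$, which stays bounded in one direction. I expect the remaining steps to be routine.
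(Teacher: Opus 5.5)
Your proposal is correct and follows the paper's argument. Bounding ${\bf C}_\phi^{\balpha}$ below by the terms of the complete family of full pairs, and using the bijection $\bbeta\mapsto(X_{k_1},\dots,X_{k_m})$, gives coercivity and strict convexity, and the remaining terms only add nonnegative convex contributions. Your remark on the sublevel set $\{{\bf C}_\phi^{\balpha}\le 1\}$ (it is nonempty only when $\phi(0)\le 1$ or the minimum value is at most $1$) is more careful than the paper, which ignores that qualifier and simply proves global existence and uniqueness.
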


\begin{proof}
The proof of this lemma will provide as well the equations for finding the point of minimum $\bbeta(\balpha)$.

The function ${\bf C}_\phi^{\balpha}$ is a function of $X_k,k=0\cdots 2^m-1$.  
The number of independent relations between $X_k, k=0\cdots 2^m-1$ is equal  to $2^m-m$ because $\bbeta \in \R^m$. 
Let $k_1,\cdots,k_m$ a set of indexes of independent coordinates generating full pairs.
Then,  according to (\ref{def:Cab})
\begin{equation}
    {\bf C}_\phi^{\balpha}(\bbeta)
    = \sum_{l=1}^{m}
    (\alpha_{k_l} +\alpha_{2^m-1-k_l}) {\bf C}_\phi^{(\eta_{k_l},1-\eta_{k_l})}(X_{k_l})
    +
    \sum_{k\notin k_1,\cdots,k_m}
    (\alpha_{k} +\alpha_{2^m-1-k}) {\bf C}_\phi^{(\eta_{k},1-\eta_{k})}(X_{k}).
  \end{equation}

The function 
$
( X_{k_1}, \cdots, X_{k_m} ) \rightarrow  \sum_{l=1}^{m} (\alpha_{k_l} +\alpha_{2^m-1-k_l}) {\bf C}_{\phi}^{(\eta_{k_l},1-\eta_{k_l})}(X_{k_l})$ is infinite at infinity on $\R^m$ and is strictly convex on $\R^m$.

If one adds any positive strictly convex function to it, the two properties hold.
Hence, the function ${\bf C}_{\phi}^{\balpha}$ is infinite at infinity hence has a point of global minimum, and is strictly convex, hence this minimum is unique.
This ends the proof of lemma \ref{lemma:exist-unique}.
\end{proof}

Note that one can have a unique point of minimum of the function ${\bf C}_\phi^{\balpha}$ even if $\balpha$ does not satisfy the hypothesis of lemma \ref{lemma:exist-unique} (see the first alinea of \ref{lemma:MinimumCoeff} and theorem \ref{theo:zoology-europe} in the case $m=3$ when $5$ or $6$ coefficients $\alpha_j$ are non zero).

\begin{lemma}
If, on the training set, two classifiers $h_p$ and $h_q$ are equivalent or if one classifier $h_q$ is perfect (definition \ref{def:equivalent-perfect}),  there is not existence and uniqueness of a point of minimum of the function ${\bf C}_\phi^{\balpha}$ and the study reduces to the study of the $m-1$ classifiers $\{ h_1,\cdots,h_m \} \setminus h_q$.
\end{lemma}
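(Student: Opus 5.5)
The plan is to exhibit, in each case, a direction $\mathbf{d}\in\R^m\setminus\{0\}$ along which ${\bf C}_\phi^{\balpha}$ is non-increasing. Strict monotonicity will rule out a minimum altogether. Constancy will show that any minimum is non-unique, because the set of minimizers is then invariant under translation by $\mathbf{d}$. The reduction to $m-1$ classifiers will follow from the same computation, by restricting the risk to a hyperplane.

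\emph{Equivalent classifiers.} First suppose $h_p$ and $h_q$ ($p<q$) are equivalent. By the proof of proposition \ref{prop:properties} (ii), $\alpha_k=0$ whenever $\epsilon_p(k)+\epsilon_q(k)=0$. Every $k$ with $\alpha_k>0$ therefore has $\epsilon_p(k)=\epsilon_q(k)$. Hence $X_k(\bbeta)=\bepsilon(k)\cdot\bbeta$ depends on $\beta_p,\beta_q$ only through $\beta_p+\beta_q$. Take $\mathbf{d}=e_p-e_q$. Then ${\bf C}_\phi^{\balpha}(\bbeta+t\mathbf{d})={\bf C}_\phi^{\balpha}(\bbeta)$ for all $t$, so uniqueness fails. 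Moreover the map $\bbeta\mapsto(\beta_1,\ldots,\beta_p+\beta_q,\ldots)$, with $\beta_q$ removed, rewrites ${\bf C}_\phi^{\balpha}$ as the multidimensional $\phi$-risk of $\cH_m\setminus h_q$, with the merged truth table of proposition \ref{prop:properties} (ii). The contradictory case, where $h_q$ is the negation of $h_p$, is identical with $\beta_p-\beta_q$ in place of $\beta_p+\beta_q$.

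\emph{Perfect classifier, always correct.} Next suppose $h_q$ is perfect. Consider first the case where it is always correct, so $\alpha_k=0$ unless $\epsilon_q(k)=+1$. Then every active term is $\alpha_k\phi(X_k)$ with $X_k=\beta_q+\sum_{j\neq q}\epsilon_j(k)\beta_j$. Take $\mathbf{d}=e_q$; since $\phi'<0$, each active term is strictly decreasing in $\beta_q$. So ${\bf C}_\phi^{\balpha}$ is strictly decreasing along $e_q$ and has no point of minimum. It tends to an infimum: as $\beta_q\to+\infty$ every active $X_k\to+\infty$, so with the usual normalization $\phi(+\infty)=0$ the infimum is $0$, consistent with lemma \ref{lemma:full-complete}. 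If $h_q$ is always wrong, replace $\beta_q$ by $-\beta_q$, as the remark after definition \ref{def:equivalent-perfect} allows.

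\emph{Perfect classifier, reduction to $m-1$ classifiers.} The reduction in this case is the step I would treat with care, because no single $\beta_q$ value is selected. I would argue that the resulting classifier $\sgn\sum\beta_jh_j$ has no optimal weight for $h_q$, since any finite $\beta_q$ is beaten by a larger one. The information carried by $h_q$ on the training set is trivial: its column splitting leaves one half of the truth table empty, which is exactly the structure of proposition \ref{prop:properties} (ii) with a constant classifier. The study of the remaining weights is then that of the truth table obtained by the ``removing a classifier'' operation, namely $c_l^{\cH_m\setminus h_q}=c^{\sigma_q(\cH_m)}_{2l}+c^{\sigma_q(\cH_m)}_{2l+1}$, where one of the two summands vanishes.

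The main obstacle is stating this reduction precisely in the perfect case, since the risk of the reduced family is not a restriction of ${\bf C}_\phi^{\balpha}$ but a limit. I would therefore state the reduction only at the level of the truth table and the degeneracy, not of the minimizers.
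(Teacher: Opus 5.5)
Your proof is correct. For equivalent (or contradictory) classifiers it is the paper's argument. The paper writes ${\bf C}_\phi^{\balpha}(\bbeta)=K(\beta_p+\beta_q,\tilde\beta)$ and reads off non-uniqueness or an infimum from the behaviour of $K$. That is your invariance along $e_p-e_q$, and you additionally identify $K$ as the risk of the merged truth table.

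For a perfect classifier the two routes differ. The paper uses the identity ${\bf C}_\phi^{\balpha}(\bbeta)=e^{-\beta_q}\,{\bf C}_\phi^{\balpha}(\bbeta|_{\beta_q=0})$, which holds only for the exponential loss. For that loss it gives in one line both the infimum $0$ and the reduction: at each fixed $\beta_q$ the remaining weights see the reduced risk, up to a positive factor. Your monotonicity along $e_q$ uses only that $\phi$ is strictly decreasing. It therefore proves the absence of a minimizer for every loss allowed by definition \ref{def-phi}, which is what the lemma asserts. You are also right that the value $0$ of the infimum additionally requires $\phi(+\infty)=0$, which definition \ref{def-phi} does not impose.

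One remark of yours should be corrected: the reduced risk is a restriction of ${\bf C}_\phi^{\balpha}$, not only a limit. Every active index has $\epsilon_q(k)=+1$, so on the hyperplane $\beta_q=c$ one has $X_k=c+X'_{k'}(\tilde\beta)$, where the $X'$ are the variables of $\cH_m\setminus h_q$. Hence ${\bf C}_\phi^{\balpha}$ restricted there is the risk of the reduced family, with the merged truth table $\balpha'$, for the shifted loss $\phi(\cdot+c)$, which is again regularly classification calibrated. At $c=0$ this is exactly ${\bf C}_\phi^{\balpha'}$, the object the paper writes as $\bbeta|_{\beta_q=0}$. For the exponential loss the shift is just the factor $e^{-c}$. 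So your initial plan of restricting to a hyperplane does give the reduction in the perfect case as well, and you do not need to fall back to the truth-table level.
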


Note that, in these two cases, half of the columns in the truth table are zero and no pair is full. 

  \begin{proof}
    In the case $h_p$ and $h_q$ equivalent, there exists a function $K$ such that 
    ${\bf C}_\phi^{\balpha}(\bbeta) = K(\beta_p + \beta_q, {\tilde{\beta}})$ where 
    $\tilde{\beta} = \{ \beta_1,\cdots,\beta_m \} \setminus \{\beta_p,\beta_q\}$. 
    If $K$ has a unique point of minimum on $\R^{m-1}$ then ${\bf C}_\phi^{\balpha}$ has an infinite number of points of minimum.  
    
    If $K$ has an infimum, that is the same for ${\bf C}_\phi^{\balpha}$.

    In the case $h_q$ perfect,
    ${\bf C}_\phi^{\balpha}(\bbeta) = \exp(-\beta_q) {\bf C}_\phi^{\balpha} (\tilde{\beta^q})$ where $\tilde{\beta^q} = \bbeta\vert_{\beta_q=0}$, hence $0$ is the infimum of ${\bf C}_\phi^{\balpha}$ ($\beta_q \rightarrow +\infty$).
  \end{proof}

\section{The optimal combination of classifiers}
\label{sec:optimal-classifier}

We then provide a rigorous accurate analysis of the optimal combination of these classifiers for any convexified cost function\footnote{Note that this is the situation described in \cite{bach2024learning} (p.303) for $\phi = 1/\exp$.} satistifying definition \ref{def-phi}. We obtain existence and uniqueness results.

A resulting classifier is 
$x \rightarrow \sgn\sum_{j=1}^m \beta_j h_j(x)$ for any choice of $\bbeta$.

The convexified empirical risk of false decision is (equation (\ref{mBartlettSH})) : 

\centerline{${\cal R}^{\cS,{\cal H}_m}_{\phi}(\bbeta) = \sum_{k=0}^{2^{m-1}-1}(\alpha_k^{\cS, \cH_m}\phi(X_k)+\alpha_{2^m-1-k}^{\cS, \cH_m}\phi(-X_k))$.}

We are now ready to construct the optimal classifier associated with $\cS, {\cal H}_m$ for each function $\phi$ satisfying definition \ref{def-phi}.

\begin{definition}[regular basis of classifiers]
The classifiers $\cH_m$ form a regular basis of classifiers for $\cS$ if and only if $\balpha^{\cS,\cH_m}$ is a regular point (see definition \ref{def:regular}). 

The value of $X_k$ associated with this unique point of minimum is denoted by
$X_{\phi,k}^{\min}(\balpha^{\cS,\cH_m})$,  $k=0\ldots 2^{m-1}-1$. 
\label{stanislas-1}
\end{definition} 

\begin{remark}
  If $\cS' \supset \cS$ and if $\cH_m$ form a regular basis for $\cS$, it forms a regular basis for $\cS'$.
  In other words, if for a training set and a list of classifiers we obtain existence and uniqueness of a point of minimum of the empirical convex risk, we can add any number of examples without destroying this property but changing the point of minimum.
\end{remark}
Note that the notion of regular basis of classifiers is a notion associated with $\cH$ and $\cS$ while the notion of regular point $\balpha$ (definition \ref{def:regular}) is not, it is only an abstract property of $\balpha$.

\begin{theorem}
  \label{theo:main}
Adopt the convention $\sgn(0)=1$. 

When $\balpha^{\cS,\cH_m}$ is a regular point of ${\cal A}^m$, the optimal classifier (depending on $\phi$) is 
$h_\star = \sgn(\sum_{j=1}^m \beta_{\phi,j}^{min} h_j)$, which gives the decision 
$\sgn(X_{\phi,k}^{\min}(\balpha^{\cS,\cH_m}))$ for $k=0,\ldots,2^m-1$.
\end{theorem}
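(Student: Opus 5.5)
Since $\balpha^{\cS,\cH_m}$ is a regular point, Definition \ref{def:regular} provides a unique point of minimum $\bbeta^{\min}_\phi(\balpha^{\cS,\cH_m})$ of ${\bf C}_\phi^{\balpha^{\cS,\cH_m}}$. The plan is first to note that the convexified empirical risk \eqref{mBartlettSH} is exactly the generic multidimensional $\phi$-risk \eqref{mBartlett} evaluated at the truth-table parameter, that is ${\cal R}^{\cS,{\cal H}_m}_\phi = {\bf C}_\phi^{\balpha^{\cS,\cH_m}}$. This identity follows from the rewriting of Section \ref{sec:rewriting-the-risk}, in which the sum over examples is regrouped by columns of the truth table. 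Hence the minimizer of the empirical $\phi$-risk is well defined and unique, and $h_\star=\sgn(\sum_j \beta^{\min}_{\phi,j}h_j)$ is unambiguously the optimal classifier for $\phi$. This is a genuine statement precisely because regularity rules out the non-uniqueness and infimum cases exhibited in Lemma \ref{lemma:MinimumCoeff}.

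Next, I will compute the decision of $h_\star$ on an arbitrary example $(x_i,y_i)$. Let $\bepsilon=(G_j(x_i,y_i))_j$ be its column in the truth table, with $k=k(\bepsilon)$. The correctness of $h_\star$ on this example is
$$y_i\,\sgn\Bigl(\sum_j \beta^{\min}_{\phi,j}h_j(x_i)\Bigr)=\sgn\Bigl(\sum_j \beta^{\min}_{\phi,j}\,y_ih_j(x_i)\Bigr)=\sgn(\bepsilon\cdot\bbeta^{\min}_\phi)=\sgn(X^{\min}_{\phi,k}),$$
using $y_i\in\{-1,1\}$ and the definition \eqref{eq:iiXk}. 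The decision therefore depends only on the column $k$. This is consistent with remark (ii) of the introduction: indistinguishable examples receive the same decision.

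For $k\ge 2^{m-1}$, I will use $X_{2^m-1-k}=-X_k$ from \eqref{eq:iiXk}. This gives the decision for all $k=0,\dots,2^m-1$ from the $2^{m-1}$ values $X^{\min}_{\phi,k}$ with $k\le 2^{m-1}-1$ that appear in Definition \ref{stanislas-1}.

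The only delicate point is the convention $\sgn(0)=1$ on the hyperplanes $X_k=0$. Multiplying by $y_i$ commutes with $\sgn$ only away from $0$. When $X^{\min}_{\phi,k}=0$, $h_\star$ outputs $+1$ regardless of the label, so the correctness depends on $y_i$ rather than only on the column. I would handle this by stating that in that case the decision is taken by convention as $\sgn(0)=1$ and flagging it as a degenerate frontier case, which the paper later calls a $\phi$-frontier. Since this is the main obstacle only in a bookkeeping sense, I would simply note that it does not affect the $0/1$ risk formula derived in Section \ref{sec:rewriting-the-risk} once the convention is fixed consistently for both $X_k$ and $-X_k$.
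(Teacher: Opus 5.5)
Your proposal is correct and is essentially the paper's argument, which in one line invokes the unique point of minimum and reads the decision off $X_k=\bepsilon(k)\cdot\bbeta$; the paper cites Proposition \ref{prop:point-de-min}, which is stated only for all $\alpha_k>0$ and $\phi\in C^2$, whereas your direct appeal to Definition \ref{def:regular} covers every regular point. Your caveat at $X^{\min}_{\phi,k}=0$ is a real point the paper leaves unaddressed: there $y_ih_\star(x_i)=y_i$ depends on the label, so the statement holds literally only off the $\phi$-frontier, or with ``decision'' read as $\sgn(\bepsilon(k)\cdot\bbeta^{\min}_\phi)$, which is the convention you adopt.
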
 

\begin{proof}
  This theorem is a straightforward consequence of proposition \ref{prop:point-de-min}.
\end{proof}

We highlight that the case of three classifiers, studied thoroughly in theorem \ref{theo:zoology-europe}, present some interesting features:
\begin{remark}
  Having a complete sub family is a sufficient condition for existence and uniqueness of the point of minimum. It is not a necessary condition: see theorem \ref{theo:zoology-europe}, item (iii).
\end{remark}

\begin{remark}
  There are cases where it is impossible to construct an optimal classifier whatever the convexification is.
  See theorem \ref{theo:zoology-europe}, items (ii), (iii), (iv).
\end{remark}  

\begin{remark}
  Note that the resulting classifier $\sgn(\sum \beta_j G_j)$ does not change if one multiplies $\bbeta$ by $\lambda>0$, hence even the cases of infima can return a "limit resulting classifier" by considering the limit $\bbeta^\star$ of $\frac{\bbeta_l}{||\bbeta_l||}$ when $\bbeta_l$ is a sequence such that ${\cal R}^{\cS,{\cal H}_m}_{\phi}(\bbeta_l) \rightarrow \mbox{inf } {\cal R}^{\cS,{\cal H}_m}_{\phi}$.
  However, $\bbeta^\star$ does not keep all the information contained in the truth table. Indeed, the limit value $X^\star$ associated with $\bbeta^\star$ is $\delta_{j i_0}$ (all coordinates are zero except one). 
  It is a corner point (see definition \ref{def:frontiers} below). 
\end{remark}

\section{All you want to know on the case of at most three classifiers}
\label{sec:labelleprovince}

We describe in a first subsection the combination of two classifiers, using both the $0/1$ risk and the convexified one, and show that, in the case of two classifiers, the optimal combination yields the best classifier which can be the opposite of one of two classifiers considered.

\subsection{The case of two classifiers}

We consider now two classifiers $h_1$ and $h_2$ and study the classifier ${\rm sign}(\beta_1 h_1 + \beta_2 h_2)$.
Note that, for each example $(x_i,y_i)\in \cS$, we have

$y_i \, {\rm sign}[(\beta_1 h_1 + \beta_2 h_2)(x_i)] = 
{\rm sign}[\beta_1 y_i h_1(x_i) + \beta_2 y_i h_2(x_i)] =
{\rm sign}[\beta_1 G_1(x_i,y_i) + \beta_2 G_2(x_i,y_i)]$.

Since $y_j\in\{\pm 1\}$ and $h_j(\cdot)\in\{\pm 1\}$, we have $G_j(x_i,y_i)\in\{\pm 1\}$ and $(\beta_1 G_1+\beta_2 G_2)(\cdot) \in \{\pm\beta_1 \pm \beta_2\}$.

\subsubsection*{0/1 loss (Logical risk)} 

\begin{lemma}[Properties of the logical risk for two classifiers]
\label{lem:logical-2}
  The resulting classifier is the best of the four classifiers ($h_1$, $h_2$, $-h_1$, $-h_2$).
\end{lemma}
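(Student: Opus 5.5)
We use the rewriting of the logical risk from section \ref{sec:rewriting-the-risk} with $m=2$. The four truth-table entries are $\alpha_0=\alpha(-1,-1)$, $\alpha_1=\alpha(-1,+1)$, $\alpha_2=\alpha(+1,-1)$ and $\alpha_3=\alpha(+1,+1)$. The pairs are $(\alpha_0,\alpha_3)$, attached to $X_0=-\beta_1-\beta_2$, and $(\alpha_1,\alpha_2)$, attached to $X_1=-\beta_1+\beta_2$. Off the two lines $\{X_0=0\}\cup\{X_1=0\}$ we have
$$
{\cal R}^{\cS,{\cal H}_2}_{\mathbb{1}_{\bullet<0}}(\bbeta)=\alpha_0\mathbb{1}_{X_0<0}+\alpha_3\mathbb{1}_{X_0\geq 0}+\alpha_1\mathbb{1}_{X_1<0}+\alpha_2\mathbb{1}_{X_1\geq 0}.
$$
The two lines $\beta_2=\pm\beta_1$ cut $\R^2$ into four open sectors, and the risk is constant on each sector. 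So the minimisation over $\bbeta$ reduces to comparing four numbers (plus the boundary lines, handled below).

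The main step is to identify each sector with one of the four classifiers $\pm h_1,\pm h_2$.

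\begin{itemize}
\item In the sector $|\beta_2|<\beta_1$, we have $\sgn(\beta_1h_1+\beta_2h_2)=h_1$. The risk there is $\alpha_0+\alpha_1$, which is exactly the proportion of examples with $G_1=-1$, i.e. the error of $h_1$.
\item In the sector $|\beta_2|<-\beta_1$, the combined classifier is $-h_1$, with risk $\alpha_2+\alpha_3$.
\item In the sector $|\beta_1|<\beta_2$, it is $h_2$, with risk $\alpha_0+\alpha_2$.
\item In the sector $|\beta_1|<-\beta_2$, it is $-h_2$, with risk $\alpha_1+\alpha_3$.
\end{itemize}

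Checking each case is immediate once we note that $\sgn(\beta_1G_1+\beta_2G_2)=\sgn(\beta_jG_j)$ whenever $|\beta_j|>|\beta_{j'}|$. The identification holds at the level of the classifier itself, not just the risk value, because $\sgn(\beta_1 h_1+\beta_2 h_2)(x)$ equals the sign of the dominant term for every $x$.

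The remaining step, and the only delicate one, is the boundary $|\beta_1|=|\beta_2|$, including $\bbeta=0$. On these lines $\beta_1h_1+\beta_2h_2$ vanishes on the examples where the two terms cancel, and the convention $\sgn(0)=1$ then fixes the output there. We will show that the risk on such a line is always one of the four sector values, or equals an error that is at least the minimum of two adjacent sector values.

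Take for instance $\beta_1=\beta_2>0$. Examples with $G_1=G_2=\epsilon$ are classified correctly when $\epsilon=+1$ and incorrectly when $\epsilon=-1$. On the examples with $G_1\neq G_2$, the output $\sgn(0)=1$ coincides with $h_1$ on some examples and with $h_2$ on others, according to the label $y_i$.

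The cleanest route is to compare the boundary risk directly: it is a sum of a subset of the $\alpha_k$ that contains the two entries $\alpha_0$ and $\alpha_3$ lying in the same pair with coefficient forced. We then show it is bounded below by $\min$ of the two adjacent sector risks, for example by $\min(\alpha_0+\alpha_1,\alpha_0+\alpha_2)$. If this bookkeeping with the $\sgn(0)$ convention turns out awkward, the fallback is to note that the infimum is attained on the open sectors (the boundary has measure zero and the statement concerns the best achievable classifier). Taking $\min$ over the four sectors then gives that the optimal resulting classifier is the best among $h_1,h_2,-h_1,-h_2$.
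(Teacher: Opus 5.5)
Your four-sector identification is exactly the paper's argument: the appendix proof tabulates the four open sectors cut out by $X_0=0$ and $X_1=0$, reads off that the error on each is that of $h_1$, $h_2$, $-h_1$ or $-h_2$, and takes the minimum. The problem is your extra boundary step, whose claim is false. Take $\beta_1=\beta_2>0$. With the paper's risk (\ref{eq:logical-risk}), the examples with $G_1\neq G_2$ give $y_i(\beta_1h_1+\beta_2h_2)(x_i)=0$ and are not counted, so the risk is exactly $\alpha_0$. With the convention $\sgn(0)=1$, the classifier is $\sgn(h_1+h_2)$, i.e. ``$h_1$ or $h_2$''. Its error is $\alpha_0$ plus the proportion of label $-1$ examples in the columns $\alpha_1,\alpha_2$, hence again $\alpha_0$ when those examples all have label $+1$.

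For $\balpha=(0.1,0.1,0.1,0.7)$, the errors of $h_1,h_2,-h_1,-h_2$ are $0.2,0.2,0.8,0.8$. The boundary value is $0.1$, strictly below your claimed bound $\min(\alpha_0+\alpha_1,\alpha_0+\alpha_2)=0.2$ and below all four classifiers. At $\bbeta=0$ it is worse still: the paper's risk is $0$, and with $\sgn(0)=1$ you get the constant classifier $+1$, whose error is not even determined by the truth table. Your fallback does not rescue this. A piecewise constant function can attain its minimum only on a null set, so ``the boundary has measure zero'' says nothing about where the minimum lies, and ``the infimum is attained on the open sectors'' is false in this example.

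On the lines $\{X_0=0\}\cup\{X_1=0\}$ the lemma is therefore simply untrue: the combined classifier there is genuinely different and can beat all four. The correct way to finish is to restrict explicitly to $\bbeta\in\R^2\setminus(\{X_0=0\}\cup\{X_1=0\})$, as the paper implicitly does. This is the domain on which section \ref{sec:rewriting-the-risk} writes the paired form of the logical risk, and on which the appendix's sector table is exhaustive. With that restriction stated as part of the lemma, your sector analysis is already a complete proof, and the ``delicate'' boundary step should be deleted rather than repaired.
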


Proof in section \ref{sec:logical-2}.

\subsubsection*{Convexified risk}

\begin{lemma}[Properties of the convexified risk for two classifiers]
  \label{lemma:eu}
\begin{enumerate}[label=(\roman*)]
  \item The convexified cost function has a unique point of minimum if and only if no column of the truth table is empty.
  
  This point of minimum, given in appendix \ref{sec:logical-2}, returns a resulting classifier which gives the same answer on the set of examples $\cS$ as for the $0/1$ risk.
  
  This resulting classifier ${\rm sign}(\beta_{1,\phi}^{min} h_1 + \beta_{2,\phi}^{min} h_2)$ depends on $\phi$ hence the risk of generalization for new examples depends on $\phi$.
  \item If only one column is empty, the convexified cost function has a strictly positive infimum.
  \item If two columns only are empty:
  \begin{enumerate}
    \item and if they do not belong to the same pair, the convexified cost function has zero as infimum. 
    \item and if they belong to the same pair, the convexified cost function has a non-unique minimum strictly positive.
  \end{enumerate}
\end{enumerate} 
\end{lemma}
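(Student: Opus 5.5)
For $m=2$ we write $X_0=-\beta_1-\beta_2$ and $X_1=-\beta_1+\beta_2$. Then $X_3=-X_0$ and $X_2=-X_1$, and
$$
{\bf C}_\phi^{\balpha}(\bbeta)=(\alpha_0+\alpha_3)\,{\bf C}_\phi^{(\eta_0,1-\eta_0)}(X_0)+(\alpha_1+\alpha_2)\,{\bf C}_\phi^{(\eta_1,1-\eta_1)}(X_1).
$$
The map $\bbeta\mapsto(X_0,X_1)$ is a linear bijection of $\R^2$. The problem therefore separates into two independent one-dimensional Bartlett problems, one for each pair. We use the facts recalled in section \ref{sec:convexification}. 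For $\eta\in(0,1)$ the conditional risk ${\bf C}_\phi^{(\eta,1-\eta)}$ is strictly convex and infinite at infinity, so it has a unique minimiser $\beta^\star(\eta)$ and a value $H(\eta)>0$. For $\eta\in\{0,1\}$ it reduces to $\phi(\pm X)$, which is strictly monotone with infimum $0$ and no minimiser. When a pair is empty its term is identically zero.

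Item (i), sufficiency. If no column is empty, both pairs are full, and $\{0,1\}$ is a complete family in the sense of definition \ref{def:full-complete}. Lemma \ref{lemma:exist-unique} then gives existence and uniqueness. Concretely, $X_0^{\min}=\beta^\star(\eta_0)$ and $X_1^{\min}=\beta^\star(\eta_1)$, and $\bbeta^{\min}$ is obtained by inverting the linear map.

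Item (i), necessity, follows from the remaining cases:
\begin{itemize}
\item If exactly one column is empty, one pair is full and the other trivial. The full pair contributes $(\alpha_k+\alpha_{3-k})H(\eta_k)>0$ at a unique point. The trivial pair has a positive term with infimum $0$ that is never attained. So the infimum is $(\alpha_k+\alpha_{3-k})H(\eta_k)>0$ and it is not a minimum. This proves (ii); positivity also follows from lemma \ref{lemma:full-complete}.
\item If two columns in different pairs are empty, both pairs are trivial. Each term tends to $0$ along a suitable direction in the independent coordinate, so the infimum is $0$ and is not attained. This proves (iii)(a).
\item If the two empty columns form one pair, that term vanishes. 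The function depends only on the other (full) coordinate, so it attains $(\alpha_k+\alpha_{3-k})H(\eta_k)>0$ on a whole line. This proves (iii)(b).
\item With three or four empty columns there is likewise no unique minimiser.
\end{itemize}
This settles the equivalence in (i).

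The remaining claim of (i) is that the resulting classifier agrees on $\cS$ with the $0/1$ optimum. Bartlett's calibration gives $\sgn\beta^\star(\eta)=\sgn(2\eta-1)$, using $\phi'(0)<0$ and convexity. So $\sgn X_k^{\min}$ picks, within each pair, the column with the larger $\alpha$. This is exactly the minimiser of the rewritten logical risk $\sum_k \alpha_k\mathbb{1}_{X_k<0}+\alpha_{3-k}\mathbb{1}_{X_k\ge0}$, since for $m=2$ the signs of $X_0$ and $X_1$ can be chosen independently. Ties $\eta=1/2$ give $X^{\min}=0$, which is handled by the convention $\sgn(0)=1$ and matches a minimiser of the $0/1$ risk. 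Lemma \ref{lem:logical-2} identifies this as the best of $\pm h_1,\pm h_2$.

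The dependence on $\phi$ comes from the fact that $\beta^\star(\eta)$ depends on $\phi$, so the ratio $\beta_1/\beta_2$, and hence behaviour off $\cS$, varies with $\phi$. We can exhibit this with the explicit boost formula $\beta^\star(\eta)=\tfrac12\log\frac{\eta}{1-\eta}$ and compare it with logit.

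The only delicate step is the sign-agreement argument with its tie case; the rest is bookkeeping once the risk has been decoupled.
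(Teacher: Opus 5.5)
Your decoupling of ${\bf C}_\phi^{\balpha}$ into two independent one-dimensional Bartlett risks in $X_0$ and $X_1$ is the paper's argument, and so is the resulting case analysis for (i)--(iii); that part is correct. Your calibration step $\sgn\beta^\star(\eta)=\sgn(2\eta-1)$ also supplies the ``same answer on $\cS$'' clause, which the paper's proof never addresses. The step that fails is your justification of the last sentence of (i). From the $\phi$-dependence of $\beta^\star$ you infer that behaviour off $\cS$ varies with $\phi$. But for any $x\in{\cal X}$ one has $(h_1(x),h_2(x))\in\{-1,1\}^2$, so $\beta_1h_1(x)+\beta_2h_2(x)\in\{\pm X_0,\pm X_1\}$. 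The output of $\sgn(\beta_1h_1+\beta_2h_2)$ at every point, seen or unseen, is therefore fixed by the signs of $X_0^{\min}=\beta^\star(\eta_0)$ and $X_1^{\min}=\beta^\star(\eta_1)$. Your own calibration argument shows these signs are $\sgn(2\eta_0-1)$ and $\sgn(2\eta_1-1)$ for every regularly classification calibrated $\phi$. Moreover $\beta^\star(1/2)=0$ for every $\phi$, by evenness and strict convexity. So for $m=2$ the resulting classifier is the same function on ${\cal X}$ for all admissible $\phi$, and so is its generalization risk.

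Only the weights $\bbeta^{\min}_\phi$ can depend on $\phi$, and your proposed illustration does not even show that for their direction. For logit, $\beta^\star(\eta)=\ln\frac{\eta}{1-\eta}$, exactly twice the boost value, so $\bbeta^{\min}_{\rm logit}=2\,\bbeta^{\min}_{\rm boost}$ and the ratio $\beta_1/\beta_2$ is identical. This clause therefore cannot be proved as stated. What is true is that the weight vector depends on $\phi$ while the induced classifier does not. That is consistent with the paper's later remark that a $\phi$-dependent optimal combination only appears with three classifiers.

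A smaller point concerns ties. With $\eta_k=1/2$ you get $\beta_1h_1+\beta_2h_2=0$ on both columns of pair $k$, and the convention $\sgn(0)=1$ then predicts the label $+1$ there. The number of errors on that pair is then the number of examples with $y_i=-1$ in it, which need not equal $\alpha_k n$. Agreement with a $0/1$ minimiser holds only if you count errors with the rewritten risk $\alpha_k\mathbb{1}_{X_k<0}+\alpha_{3-k}\mathbb{1}_{X_k\geq 0}$. You should either exclude ties or state which convention you use.
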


\begin{proof}
  The convexified cost function is
  $$\alpha_{0}^{\cS, \cH_2} \phi(X_0) + \alpha_{3}^{\cS, \cH_2} \phi(-X_0) + \alpha_{1}^{\cS, \cH_2} \phi(X_1) + \alpha_{2}^{\cS, \cH_2} \phi(-X_1).$$

\begin{enumerate}[label=\roman*.]  
  \item  If no column is empty, 
  
  $\alpha_{0}^{\cS, \cH_2} \phi(X_0) + \alpha_{3}^{\cS, \cH_2} \phi(-X_0)$ has a unique point of minimum $X_{\phi,0}^{\min}(\balpha^{\cS,\cH_2})$ and $\alpha_{1}^{\cS, \cH_2} \phi(X_1) + \alpha_{2}^{\cS, \cH_2} \phi(-X_1)$ has a unique point of minimum $X_{\phi,1}^{\min}(\balpha^{\cS,\cH_2})$. 
The minimum of ${\cal R}^{\cS,{\cal H}_m}_{\phi}$ is strictly positive.

  \item If only one column is empty, two cases:
  
  \begin{itemize}
  \item If $\alpha_{0}^{\cS, \cH_2}$ or $\alpha_{3}^{\cS, \cH_2}$ is equal to zero, $X_{\phi,1}^{\min}(\balpha^{\cS,\cH_2})$ is uniquely determined and $X_0$ goes to infinity. 
  The value of the infimum is $\alpha_{1}^{\cS, \cH_2} \phi(X_1) + \alpha_{2}^{\cS, \cH_2} \phi(-X_1)$ at $X_{\phi,1}^{\min}(\balpha^{\cS,\cH_2})$ and is strictly positive.
  \item If $\alpha_{1}^{\cS, \cH_2}$ or $\alpha_{2}^{\cS, \cH_2}$ is equal to zero, $X_{\phi,0}^{\min}(\balpha^{\cS,\cH_2})$ is uniquely determined and $X_1$ goes to infinity. 
   The value of the infimum is $\alpha_{0}^{\cS, \cH_2} \phi(X_0) + \alpha_{3}^{\cS, \cH_2} \phi(-X_0)$ at $X_{\phi,0}^{\min}(\balpha^{\cS,\cH_2})$ and is strictly positive.
\end{itemize}

\item If two columns are empty, two cases (depending on which the two columns belong to the same pair or not):
\begin{enumerate}[label=(\alph*)]
  \item If $\alpha_{1}^{\cS, \cH_2}=\alpha_{3}^{\cS, \cH_2}=0$, the infimum of the function is zero (when $X_1\rightarrow +\infty$ and $X_0\rightarrow -\infty$). This is also the case when $\alpha_{0}^{\cS, \cH_2}=\alpha_{2}^{\cS, \cH_2}=0$.
  \item If $\alpha_{0}^{\cS, \cH_2}=\alpha_{3}^{\cS, \cH_2}=0$, $X_{\phi,1}^{\min}(\balpha^{\cS,\cH_2})$ is uniquely determined and the function does not depend on $X_0$, hence there is a non-unique strictly positive minimum. This is also the case when $\alpha_{1}^{\cS, \cH_2}=\alpha_{2}^{\cS, \cH_2}=0$.
\end{enumerate}

\end{enumerate}
\end{proof}

\begin{lemma}
  \label{lem:tikhonov}
  When a column is empty, there is an infimum (see lemma \ref{lem:logical-2}). The construction of a new classifier is undecidable. It corresponds to a situation on the frontier (see section \ref{sec:zoll}). 
\end{lemma}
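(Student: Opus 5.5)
The statement has three parts: that ``there is an infimum'' when a column is empty, that the construction of a new classifier is then undecidable, and that the situation lies on a frontier. I will treat the two-classifier case $m=2$, where the convexified risk is
$$\alpha_0\phi(X_0)+\alpha_3\phi(-X_0)+\alpha_1\phi(X_1)+\alpha_2\phi(-X_1),$$
with $(X_0,X_1)$ a linear system of coordinates of $\R^2$. This lets me use the case analysis of lemma \ref{lemma:eu} directly.

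\textbf{Step 1: attainment fails.} Suppose some column is empty, say $\alpha_0=0$ (the other cases follow by symmetry: exchange pairs, or replace $X_k$ by $-X_k$). The pair $(\alpha_0,\alpha_3)$ is then trivial or empty.
\begin{itemize}
\item If $\alpha_3>0$, the term $\alpha_3\phi(-X_0)$ is strictly decreasing in $-X_0$, because $\phi'<0$ (definition \ref{def-phi}). So for any fixed $X_1$ the risk strictly decreases as $X_0\to-\infty$. No point can be a minimum, and the infimum is approached only along sequences with $|\bbeta|\to\infty$. This is item (ii), and item (iii)(a) when the second pair is also degenerate.
\item If $\alpha_3=0$, the risk does not depend on $X_0$. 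Any minimum is then non-unique, which is item (iii)(b).
\end{itemize}
In either case $\balpha^{\cS,\cH_2}$ is not regular in the sense of definition \ref{def:regular}.

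\textbf{Step 2: the classifier is not determined.} Theorem \ref{theo:main} requires regularity, so it cannot be applied. To show that the decision really is undetermined, I take a minimizing sequence $\bbeta_l$ and look at the normalized direction $\bbeta_l/\|\bbeta_l\|$, as in the remark following theorem \ref{theo:main}.
\begin{itemize}
\item In the non-unique case, the whole line of minimizers is parametrized by $X_0\in\R$. The sign of $X_0$, and hence the decision on the columns of that pair, can be chosen freely.
\item In the infimum case, the limit direction has $X_1/|X_0|\to 0$. So the limit decision $\sgn X_1$ is $\sgn 0$, and it depends only on the convention $\sgn(0)=1$, not on the data.
\end{itemize}
Either way, the data and $\phi$ do not fix a unique resulting classifier. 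This is the precise sense in which I will read ``undecidable''.

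\textbf{Step 3: the frontier claim.} This is the main obstacle, because $\phi$-frontiers are only defined later, in definition \ref{def:frontiers} and section \ref{sec:zoll}. My plan is to show that the empty-column $\balpha$ is a boundary point of ${\cal A}^2$ across which the decision changes, approached from regular points. Take the perturbations $\balpha^\pm=\balpha+\delta e^\pm$, with $\delta\to0^+$, where $e^\pm$ puts small mass on the empty column and renormalizes. These points are regular by lemma \ref{lemma:exist-unique}, since all entries become positive. Using the one-dimensional Bartlett minimizers $X^{\min}_{\phi,k}$ given by \eqref{def:Cab}, I will check two things:
\begin{itemize}
\item the relevant $X^{\min}_{\phi,k}$ diverges as $\delta\to0^+$;
\item by choosing perturbations that make one pair dominate one way or the other, the sign of $\sum_j\beta_jG_j$ on some column can be made to differ between $\balpha^+$ and $\balpha^-$.
\end{itemize}
This shows that $\balpha$ is a limit of regular points with different decisions, i.e.\ a frontier (corner) point. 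The computations are explicit for $\phi=e^{-x}$, where $X^{\min}=\tfrac12\log(\alpha_k/\alpha_{3-k})$. For general $\phi$, the argument only needs the monotonicity of $\beta^\star(\eta)$ from \cite{bartlett06}.
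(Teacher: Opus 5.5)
Your Steps 1 and 2 are sound and match the paper. The paper likewise gets non-attainment (or non-uniqueness) from lemma \ref{lemma:eu}. It then looks at the normalized minimizing direction, where $(X_0,X_1^*)/\|(X_0,X_1^*)\|$ tends to a unit coordinate vector and the information carried by $\sgn X_1^*$ is lost.

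The genuine gap is the second bullet of Step 3. You claim that suitable regular perturbations $\balpha^\pm$ give different decisions. This is false in the main case: one empty column, say $\alpha_0=0<\alpha_3$, with the other pair full.

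For $m=2$ the regular points are exactly those with all entries positive. At such a point, $X^{\min}_{\phi,k}(\balpha')=\beta^\star(\eta'_k)$, and $\sgn\beta^\star(\eta)=\sgn(\eta-\tfrac12)$ because the derivative of ${\bf C}_\phi^{(\eta,1-\eta)}$ at $0$ is $(2\eta-1)\phi'(0)$. Now take $\balpha'$ close to $\balpha$:
\begin{itemize}
\item The ratio $\eta'_0=\alpha'_0/(\alpha'_0+\alpha'_3)$ stays close to $0$, since $\alpha'_3$ is bounded away from $0$. So $X^{\min}_{\phi,0}<0$, and indeed it tends to $-\infty$, as your first bullet correctly says.
\item The sign $\sgn X^{\min}_{\phi,1}=\sgn(\alpha_1-\alpha_2)$ is frozen unless $\alpha_1=\alpha_2$.
\end{itemize}
Flipping the first pair would require moving $\alpha'_0$ past $\alpha'_3$, which is not a small perturbation. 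Your own $\balpha^\pm$ only add mass to the empty column, so they leave $X^{\min}_{\phi,1}$ unchanged and make $X^{\min}_{\phi,0}$ more negative. Hence the decision is locally constant around such a $\balpha$, and it is not a point where the decision switches. Your sign-change argument only works when the whole pair is empty ($\alpha_0=\alpha_3=0$), where $\eta'_0$ can take any value in $(0,1)$.

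The paper's ``frontier'' claim means something else. It considers three regularizations:
\begin{itemize}
\item a normalized minimizing sequence;
\item adding $\epsilon$ to the empty column, or to all columns;
\item the Tikhonov penalty $\tfrac12\epsilon(X_0^2+X_1^2)$.
\end{itemize}
In each case the regularized minimizer diverges along $X_0$ while $X_1\to X_1^*$. After normalization one therefore lands on $\{X_1=0\}$, which is a corner point in the sense of the remark after theorem \ref{theo:main}. You already have both ingredients: the limit direction in Step 2, and the divergence in the first bullet of Step 3. Replace the sign-change claim with this normalized-limit statement.

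One further correction to Step 2: the assertion $X_1/|X_0|\to 0$ needs the other pair to be full, since that is what forces $X_1\to X_1^*$. In case (iii)(a) both coordinates diverge at arbitrary relative rates. The limit direction can then be any point of a closed quadrant, so it need not lie on a coordinate hyperplane.
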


\begin{proof}
  When there is an infimum, it is not mathematically possible to construct a unique solution of the optimization problem, hence a unique optimal classifier.
  Traditionally, one uses a regularization to overcome the problem.  

Three strategies are possible : 

1 - The first strategy is to consider the limit of the point 
$\frac{(X_0, X_1^*)}{\vert\vert (X_0, X_1^*)\vert \vert}$ when $X_0\rightarrow +\infty$. This limit is $(1,0)$ and thus the limit of $\beta$ is $(-\frac 12, \frac 12)$. However, one verifies that the sign of $X_1^*$ determines a decision, information which is no longer present in the direction $(-\frac 12, \frac 12)$ (and the direction $(\frac 12, -\frac 12)$ is equally possible).

2 - The second strategy is the regularization of the truth table. In this case one can either consider adding $\epsilon$ to the empty column of the truth table or adding $\epsilon$ to all elements of the truth table. The function we seek the (unique) minimum of is

$$\alpha_{0}^{\cS, \cH_2} \phi(X_0) +\epsilon \phi(-X_0)+ \alpha_{1}^{\cS, \cH_2} \phi(X_1) + \alpha_{2}^{\cS, \cH_2} \phi(-X_1).$$

The unique point of minimum reads 
$$X_0^*= \beta^\star( \frac{\alpha_{0}^{\cS, \cH_2} }{\alpha_{0}^{\cS, \cH_2}+\epsilon}), X_1^*= \beta^\star(\frac{\alpha_{1}^{\cS, \cH_2} }{ \alpha_{1}^{\cS, \cH_2} + \alpha_{2}^{\cS, \cH_2} }),$$
Its limit when $\epsilon\rightarrow 0$ is then $(\beta^\star(1), X_1^*)=(+\infty, X_1^*)$ and for all $\epsilon>0$, it provides a value of $(\beta_1, \beta_2)$.\\
When adding $\epsilon$ to all columns of the truth table, one obtains
$$X_0^*= \beta^\star( \frac{\alpha_{0}^{\cS, \cH_2} +\epsilon}{\alpha_{0}^{\cS, \cH_2}+2\epsilon}), X_1^*= \beta^\star(\frac{\alpha_{1}^{\cS, \cH_2} +\epsilon}{ \alpha_{1}^{\cS, \cH_2} + \alpha_{2}^{\cS, \cH_2} +2\epsilon}).$$
This is an equivalent strategy, the limit is identical, as well as the resulting signs of $X_0^*, X_1^*$.

3 - The third strategy is to consider a quadratic regularization (called Tikhonov regularization). 
Then one studies
$$\alpha_{0}^{\cS, \cH_2} \phi(X_0) + \alpha_{1}^{\cS, \cH_2} \phi(X_1) + \alpha_{2}^{\cS, \cH_2} \phi(-X_1)+\frac12\epsilon (X_0^2+X_1^2).$$
One obtains the Euler equations on the point of minimum of
$$\alpha_{1}^{\cS, \cH_2} \phi(X_1) + \alpha_{2}^{\cS, \cH_2} \phi(-X_1)+\frac 12\epsilon X_1^2$$
(which converges to $X_1^*$ when $\epsilon \rightarrow 0_+$)
and on the point of minimum of $\alpha_{0}^{\cS, \cH_2} \phi(X_0) +\frac 12\epsilon X_0^2.$

For this point, the Euler equation is
\[
\varepsilon X_0 + \alpha_{0}^{\cS, \cH_2} \varphi'(X_0) = 0 \iff \frac{X_0}{\varphi'(X_0)} = -\frac{\alpha_{0}^{\cS, \cH_2}}{\varepsilon}
\]

\[
\frac{\alpha_{0}^{\cS, \cH_2}}{\varepsilon} \in \mathbb{R}^*_-, \ X_0 < 0, \ \varphi'(X_0) < 0 \implies \frac{X_0}{\varphi'(X_0)} < 0 \quad \text{for } X_0 < 0, 
\]

hence solutions are in $(0, +\infty)$.

For $X_0 \to +\infty$, $\varphi'(X_0) \to 0_-$, $\frac{X_0}{\varphi'(X_0)} \to -\infty$.

For $X_0 \to 0$, $\frac{X_0}{\varphi'(X_0)} \to 0$, there exists at least a solution of the Euler equation (and it is unique thanks to coerciveness) denoted by $X_0(-\alpha_{0}^{\cS, \cH_2}/\varepsilon)$, which goes to $+\infty$ when $\varepsilon \to 0_+$.

In all these strategies, after normalization we obtain a resulting classifier where one of the values of $X_j$ is close to zero. 
In this particular case, the result is independent of the classification calibrated function. 
\end{proof}

We observe already that the situation of the conditional $\phi$-risk in $\R^2$ is more complicated than the one defined by Bartlett {\it et al} \cite{bartlett06}: cases of infimum and cases of no unique minimum are present when 
$\alpha_{0}^{\cS, \cH_2}\alpha_{1}^{\cS, \cH_2}\alpha_{2}^{\cS, \cH_2}\alpha_{3}^{\cS, \cH_2}=0$.

More interesting results for the type  of structure seen above appear with at least $3$ classifiers: first of all, the necessary and sufficient condition to define $\bbeta^{\min}_\phi (\balpha)$ is more complicated than $\alpha_{0}\cdots\alpha_{7}>0$ and this point of minimum does not return always the same answer on the set of examples $\cS$ as for the logical risk, the optimal combination of the classifiers depends on $\phi$.

In the rest of this section, dedicated to the case of three classifiers, we perform a complete study of all the cases where there is a unique minimum, a non unique minimum and an infimum for the generical conditional $\phi$-risk for any $\phi$ and, in the case of two particular functions $\phi$, we calculate the point of minimum and we derive the equations of the {\it frontiers} (see definition \ref{def:frontiers}).

\subsection{The case of three classifiers}
\label{sec:setup-with-3-classifiers}

\label{sec:existence-uniqueness-3}

\subsubsection*{0/1 loss (Logical risk)}
In the case of three classifiers, the empirical risk is:
$$
{\cal R}^{\cS,{\cal H}_3}_{\mathbbm{1}_{\bullet <0}}(\bbeta)
= \sum_{j=0}^{3} 
\alpha_j^{\cS,{\cal H}_3} \mathbbm{1}_{X_j< 0} + \alpha_{7-j}^{\cS,{\cal H}_3} \mathbbm{1}_{X_j \geq 0}.
$$

To each value of $\bbeta = \left(\beta_1,\beta_2,\beta_3\right)\in \R^3$  corresponds the signs of $X_i$, 
thus the value of ${\cal R}^{\cS,{\cal H}_3}_{\mathbbm{1}_{\bullet <0}}(\bbeta)$  in a finite list. 

Each element of the list corresponds to $\bbeta$ in a region of $\R^3$, which is the intersection of $4$ half-spaces, because the empirical risk is piecewise constant, an infinite number of values of $\bbeta$ returns the minimum value.

\subsubsection*{Convexified risk} 

The empirical convexified risk is
\begin{equation}
{\cal R}^{\cS,{\cal H}_3}_{\phi}(\bbeta) 
= \sum_{j=0}^{3} \alpha_j^{\cS,{\cal H}_3} \phi(X_j)+ \alpha_{7-j}^{\cS,{\cal H}_3}\phi(-X_j).
  \label{eq:gtcr}
  \end{equation}
This is the generic tridimensional conditional $\phi-$risk where
$X_0 =-\beta_1-\beta_2-\beta_3, X_1 =-\beta_1-\beta_2+\beta_3,
X_2 =-\beta_1+\beta_2-\beta_3, X_3 =-\beta_1+\beta_2+\beta_3$.

\vspace{\baselineskip}

The next two subsections describe the properties of the conditional $\phi$-risk in $\R^3$ which is the relevant object to study the optimal combination of $3$ classifiers. 

\vspace{\baselineskip}

We first derive analytic formulae for the computation of the point of minimum $\bbeta_{\rm logit}^{min}(\balpha)$  when $\phi$ is the function logit, a single transcendental equation for the computation of the point of minimum $\bbeta_{\rm boost}^{min}(\balpha)$  when $\phi$ is the function boost.

\vspace{\baselineskip}

Secondly, we describe the necessary and sufficient condition to obtain a unique point of minimum of the conditional $\phi$-risk in $\R^3$, as well as the necessary and sufficient condition to get a non unique minimum (and of course all the cases of point of infimum can be deduced).

\subsection{Formulae for the point of minimum for the generalized convexified $\phi$-risk for boost and logit loss functions}
\label{sec:point-of-minimum-and-frontiers-boosting-logit}

The points of minimum are accounted for through a scalar equation which can even be solved analytically for the function logit.

Let 

$A=\alpha_0\alpha_3-\alpha_4\alpha_7+\alpha_0\alpha_5-\alpha_2\alpha_7+\alpha_0\alpha_6-\alpha_1\alpha_7+\alpha_3\alpha_5-\alpha_2\alpha_4+\alpha_3\alpha_6-\alpha_1\alpha_4+\alpha_5\alpha_6-\alpha_1\alpha_2$,

$B=\alpha_0\alpha_3\alpha_5+\alpha_0\alpha_3\alpha_6+\alpha_3\alpha_5\alpha_6+\alpha_1\alpha_2\alpha_4+\alpha_1\alpha_2\alpha_7+\alpha_2\alpha_4\alpha_7
+\alpha_0\alpha_5\alpha_6+\alpha_1\alpha_4\alpha_7$,

$C=\alpha_0\alpha_3\alpha_5\alpha_6-\alpha_1\alpha_2\alpha_4\alpha_7$, and define the function $\Gamma_l(y,\balpha)$ by 
\begin{equation}
  \Gamma_l(y,\balpha) = y^3 + A y^2 + B y + C
  \label{eq:Gammal}
\end{equation}

Let the function $\Gamma_b(x,\balpha)$ be given by 
\begin{equation}
  \Gamma_b(x,\balpha) = \prod_{j=0}^3 \left( \frac{x}{2} + \sqrt{\left(\frac{x}{2}\right)^2 + \alpha_j\alpha_{7-j}}\right).
  \label{eq:Gammab}
\end{equation}

\begin{lemma}
Assume that all the $\alpha_j$ are non zero. The functions $\Gamma_b$ and $\Gamma_l$ are strictly increasing on their interval of definition. \\
The equations $\Gamma_l(y,\balpha) = 0$ and $\Gamma_b(x,\balpha) = \alpha_0 \alpha_3 \alpha_5 \alpha_6$ have, respectively in their sets of definition, a unique solution denoted by $y_l(\balpha)$ and $x_b(\balpha)$. 

One has even an analytic\footnote{Note for example that for 
$\balpha = (2/5, 1/20, 1/20, 1/20, 3/20, 1/20, 1/20, 1/5)$ 
the Cardan Formula for a positive discriminant returns the exact numerical value of the root of the polynomial $\Gamma_l$. 
Some cases need to go through complex values to calculate the $3$ real roots of a polynomial of degree $3$.} formula for $y_l(\balpha)$ using Cardan-Ferrari formulae. 
\end{lemma}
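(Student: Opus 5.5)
The plan is to treat the two functions separately. In each case I would write the function as a product (or a difference of products) of positive monotone factors, which gives monotonicity. Then I would compute the limits at the ends of the domain and conclude by the intermediate value theorem plus strict monotonicity. The hypothesis that all $\alpha_j>0$ is used only to make every factor strictly positive and every limit strict.

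\textbf{The function $\Gamma_b$.} Set $p_j=\alpha_j\alpha_{7-j}>0$ and $u_j(x)=\frac x2+\sqrt{\frac{x^2}{4}+p_j}$, defined on all of $\R$. Since $\sqrt{x^2/4+p_j}>|x|/2$, each $u_j$ is strictly positive. Its derivative is
$$u_j'(x)=\frac12+\frac{x/4}{\sqrt{x^2/4+p_j}}>0,$$
because $\bigl|x/4\bigr|<\frac12\sqrt{x^2/4+p_j}$. A product of positive, strictly increasing functions is strictly increasing, so $\Gamma_b$ is strictly increasing on $\R$.

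For the limits, write $u_j(x)=p_j/\bigl(\sqrt{x^2/4+p_j}-x/2\bigr)$. This shows $u_j(x)\to 0^+$ as $x\to-\infty$, while clearly $u_j(x)\to+\infty$ as $x\to+\infty$. Hence $\Gamma_b$ is a continuous bijection from $\R$ onto $(0,+\infty)$. Since the target value $\alpha_0\alpha_3\alpha_5\alpha_6$ is strictly positive, the equation $\Gamma_b(x,\balpha)=\alpha_0\alpha_3\alpha_5\alpha_6$ has a unique solution $x_b(\balpha)$.

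\textbf{The function $\Gamma_l$.} The key observation, which I would verify by expanding and using $\sum_k\alpha_k=1$, is the identity
$$\Gamma_l(y,\balpha)=(\alpha_0+y)(\alpha_3+y)(\alpha_5+y)(\alpha_6+y)-(\alpha_1-y)(\alpha_2-y)(\alpha_4-y)(\alpha_7-y).$$
The coefficients match as follows:
\begin{itemize}
\item The $y^4$ terms cancel.
\item The $y^3$ coefficient is $\sum_k\alpha_k=1$.
\item The $y^2$ coefficient is $e_2(\alpha_0,\alpha_3,\alpha_5,\alpha_6)-e_2(\alpha_1,\alpha_2,\alpha_4,\alpha_7)$, which is exactly $A$.
\item The $y$ coefficient is $e_3(\alpha_0,\alpha_3,\alpha_5,\alpha_6)+e_3(\alpha_1,\alpha_2,\alpha_4,\alpha_7)$, which is exactly $B$.
\item The constant term is $C$.
\end{itemize}
I take as the natural interval of definition (the one on which the logit substitution makes sense) $I=\bigl(-\min(\alpha_0,\alpha_3,\alpha_5,\alpha_6),\ \min(\alpha_1,\alpha_2,\alpha_4,\alpha_7)\bigr)$. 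This interval is nonempty because all $\alpha_j>0$, and on it all eight factors are positive. The first product is then positive and strictly increasing, and the second is positive and strictly decreasing, so their difference $\Gamma_l$ is strictly increasing on $I$.

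At the left endpoint of $I$ the first product vanishes while the second is strictly positive, so $\Gamma_l<0$ there. At the right endpoint the reverse holds, so $\Gamma_l>0$. Continuity then yields a unique root $y_l(\balpha)\in I$.

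Since $\Gamma_l$ is a monic cubic, $y_l(\balpha)$ is given by the Cardan formulae, selecting the root lying in $I$. When the discriminant is negative there are three real roots, and one must pass through complex cube roots, as noted in the footnote.

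\textbf{Main obstacle.} The main difficulty is recognizing this product factorization of $\Gamma_l$ and pinning down the correct interval of definition. Without the factorization, monotonicity of a general cubic with these coefficients is not evident, since $3y^2+2Ay+B$ can vanish outside $I$. If my guess for $I$ differs from the paper's, I would recover it from the change of variables used to derive $\Gamma_l$ from the logit Euler equations, which should express each $\alpha_j\pm y$ as a positive quantity.
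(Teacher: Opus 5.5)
Your proof is correct and follows essentially the paper's route. For $\Gamma_b$ you use strict monotonicity on $\R$ with limits $0^+$ and $+\infty$, and for $\Gamma_l$ you use the factorization through $\prod_{j\in\{0,3,5,6\}}(\alpha_j+y)$ and $\prod_{j\in\{1,2,4,7\}}(\alpha_j-y)$ on the same interval $I$ (your identity is exactly the one the paper checks in its appendix using $\sum_k\alpha_k=1$). The only difference is that the paper argues via the equivalent ratio equation $\frac{\alpha_1-y}{\alpha_6+y}\frac{\alpha_2-y}{\alpha_5+y}\frac{\alpha_4-y}{\alpha_3+y}\frac{\alpha_7-y}{\alpha_0+y}=1$, whereas your difference-of-products form directly gives the strict monotonicity of $\Gamma_l$ itself on $I$ and its sign change at the endpoints, which is slightly cleaner.
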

\begin{proof}
It comes from the monotony of $\Gamma_b$ on  $\mathbb R$ and the limits $0_+$ and $+\infty$ at $-\infty$ and $+\infty$, and the monotony of $\Gamma_l$ on 
$I = (-\mbox{min}(\alpha_6, \alpha_5, \alpha_3, \alpha_0), 
\mbox{min}(\alpha_1, \alpha_2, \alpha_4, \alpha_7))$, with image $\R$,
the latter coming from the following equivalent equation for $y_l(\balpha)$: 
$\frac{\alpha_1-y}{\alpha_6+y}\frac{\alpha_2-y}{\alpha_5+y}\frac{\alpha_4-y}{\alpha_3+y}   \frac{\alpha_7-y}{\alpha_0+y} = 1$.
\end{proof}

\begin{theorem}
\label{equations-minimum-frontiers:lb}
In the cases of logit and boost, for $\alpha_0 \cdots \alpha_7 > 0$, the unique point of minimum of the generalized convexified $\phi$-risk is given by the following formulae:
\begin{itemize}
  \item In the case of boost, the unique point of minimum is 
  $\bbeta_b(\balpha)= \frac12(-X^b_2(\balpha)-X^b_1(\balpha), X^b_3(\balpha)-X^b_1(\balpha), X^b_3(\balpha)-X^b_2(\balpha))$ where
  
  $$
  \bs\ba{l}X_1^b(\balpha)=\ln (\frac{x_b(\balpha)}{2\alpha_6}+\sqrt{(\frac{x_b(\balpha)}{2\alpha_6})^2+\frac{\alpha_1}{\alpha_6}})\\
  X_2^b(\balpha)= \ln (\frac{x_b(\balpha)}{2\alpha_5}+\sqrt{(\frac{x_b(\balpha)}{2\alpha_5})^2+\frac{\alpha_2}{\alpha_5}})\\
  X_3^b(\balpha)=\ln (-\frac{x_b(\balpha)}{2\alpha_4}+\sqrt{(\frac{x_b(\balpha)}{2\alpha_4})^2+\frac{\alpha_3}{\alpha_4}})\ea\es.
  $$
   
\item In the case of logit, the unique point of minimum is

 $\bbeta_l(\balpha)= \frac12(-X^l_2(\balpha)-X^l_1(\balpha), X^l_3(\balpha)-X^l_1(\balpha), X^l_3(\balpha)-X^l_2(\balpha))$ where 

 $$
 \bs\ba{l}X_1^l(\balpha)=\ln \frac{\alpha_1-y_l(\balpha)}{\alpha_6+y_l(\balpha)}\\
  X_2^l(\balpha)=\ln \frac{\alpha_2-y_l(\balpha)}{\alpha_5+y_l(\balpha)}\\
  X_3^l(\balpha)=\ln \frac{\alpha_3+y_l(\balpha)}{\alpha_4-y_l(\balpha)}\ea\es.
  $$
\end{itemize}
\end{theorem}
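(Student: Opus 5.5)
Since all $\alpha_j>0$, every pair is full and $\{X_0,X_1,X_2,X_3\}$ contains a complete family (for instance $X_1,X_2,X_3$). Lemma \ref{lemma:exist-unique} therefore gives existence and uniqueness of the point of minimum. Because ${\bf C}_\phi^{\balpha}$ is strictly convex and $C^1$, this point is the unique solution of the Euler equations. The plan is to use $(X_1,X_2,X_3)$ as coordinates in place of $\bbeta$. From the definitions, $X_0=X_1+X_2-X_3$, since $-\beta_1-\beta_2+\beta_3-\beta_1+\beta_2-\beta_3+\beta_1-\beta_2-\beta_3=-\beta_1-\beta_2-\beta_3$. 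Inverting $X_1,X_2,X_3$ gives $\beta_1=-\frac12(X_1+X_2)$, $\beta_2=\frac12(X_3-X_1)$ and $\beta_3=\frac12(X_3-X_2)$, which is exactly the stated map from $(X_1,X_2,X_3)$ to $\bbeta$. Writing $g_j(X)=\alpha_j\phi'(X)-\alpha_{7-j}\phi'(-X)$, the Euler equations with respect to $X_1,X_2,X_3$ become
\begin{equation*}
g_1(X_1)+g_0(X_0)=0,\quad g_2(X_2)+g_0(X_0)=0,\quad g_3(X_3)-g_0(X_0)=0 .
\end{equation*}
Introducing the common scalar $t:=g_0(X_0)$, the system reduces to $g_1(X_1)=g_2(X_2)=-g_3(X_3)=-t$, together with the compatibility constraint $X_0=X_1+X_2-X_3$.

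For boost, $\phi'(X)=-e^{-X}$, so $g_j(X)=-\alpha_je^{-X}+\alpha_{7-j}e^{X}$. Putting $u_j=e^{X_j}$, each equation $\alpha_{7-j}u_j-\alpha_j/u_j=s$ is a quadratic in $u_j$ with unique positive root $u_j=\frac{s}{2\alpha_{7-j}}+\sqrt{(\frac{s}{2\alpha_{7-j}})^2+\frac{\alpha_j}{\alpha_{7-j}}}$. This produces the three stated formulae for $X^b_1,X^b_2,X^b_3$ with $x=-t$ (and $x$ of the opposite sign in $X_3$). It is then cleaner to factor out $\alpha_{7-j}$, writing $\alpha_{7-j}u_j=\frac x2+\sqrt{(\frac x2)^2+\alpha_j\alpha_{7-j}}$, and similarly for $j=0$ and $j=3$ with the appropriate sign. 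The compatibility constraint $u_0u_3=u_1u_2$ becomes a product identity. After clearing denominators and using the identity $(\frac{-x}{2}+\sqrt{\cdot})(\frac x2+\sqrt{\cdot})=\alpha_j\alpha_{7-j}$ to move the $j=0,3$ factors to the other side, it collapses to $\Gamma_b(x,\balpha)=\alpha_0\alpha_3\alpha_5\alpha_6$. The preceding lemma then gives a unique root $x_b(\balpha)$.

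For logit, $\phi'(X)=-\frac{1}{\ln 2}\frac{1}{1+e^{X}}$. Setting $p_j=\frac{1}{1+e^{X_j}}$, we get $g_j=-\frac{1}{\ln 2}(\alpha_jp_j-\alpha_{7-j}(1-p_j))$, which is affine in $p_j$. Thus the common value $y$ (a rescaling of $t$) determines $p_j$ linearly, and $e^{X_j}=\frac{1-p_j}{p_j}$ takes the form $\frac{\alpha_j-y}{\alpha_{7-j}+y}$ or its inverse, according to the sign with which $X_j$ enters. This yields the stated $X^l_j$. The constraint $e^{X_0}e^{X_3}=e^{X_1}e^{X_2}$ then becomes the product equation $\frac{\alpha_1-y}{\alpha_6+y}\frac{\alpha_2-y}{\alpha_5+y}\frac{\alpha_4-y}{\alpha_3+y}\frac{\alpha_7-y}{\alpha_0+y}=1$ from the lemma. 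Expanding, the quartic terms cancel and one obtains $\Gamma_l(y,\balpha)=0$, whose unique root in $I$ is $y_l(\balpha)$.

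The main obstacle is bookkeeping rather than analysis. One must fix the sign conventions consistently, namely which of $\alpha_j,\alpha_{7-j}$ multiplies $\phi(X_j)$ versus $\phi(-X_j)$ and the sign of $X_3$ in $X_0$, so that the resulting scalar equation matches $\Gamma_b$ and $\Gamma_l$ exactly. The expansion of the logit product, where the degree-four terms cancel and the coefficients reduce to $A$, $B$, $C$, is the step to verify carefully. Finally, uniqueness of the scalar root, together with strict convexity, guarantees that the constructed point is the unique minimizer.
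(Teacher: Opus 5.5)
Your proposal is correct and follows essentially the same route as the paper's proof in the appendix. Like the paper, you write the Euler equations in the variables $X_j$ and introduce a common scalar ($x=\alpha_0e^{-X_0}-\alpha_7e^{X_0}$ for boost, $a_0=\frac{\alpha_7e^{X_0}-\alpha_0}{1+e^{X_0}}$ for logit). You then solve each one-dimensional equation for $e^{X_j}$ and substitute into $e^{X_0+X_3}=e^{X_1+X_2}$, obtaining $\Gamma_b(x,\balpha)=\alpha_0\alpha_3\alpha_5\alpha_6$ and $\Gamma_l(y,\balpha)=0$. You also make explicit two points the paper leaves implicit: existence and uniqueness via the complete family $(X_1,X_2,X_3)$, and the inversion back to $\bbeta$.
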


The proof is given in appendix \ref{proof:equations-minimum-frontiers:lb}.
\begin{remark} 
  \label{remark:limit_infimum2}
  When $\alpha_1\alpha_3\alpha_4\alpha_5\alpha_6\alpha_7 > 0$ and $\alpha_0=\alpha_2=0$, the convexified cost has an infimum. If one uses the second strategy described in the proof of lemma \ref{lem:tikhonov}, we consider the function $C^{\balpha, reg}_{\phi}:$ $\bbeta\rightarrow C^{\balpha}_{\phi}(\bbeta)+\kappa(\phi(X_0)+\phi(X_2))$ where $\kappa>0$. This function has an unique point of minimum $\bbeta_{\kappa}$ with $X_{0, \kappa}=O(\ln \kappa), X_{1, \kappa}= \ln \frac{\alpha_1}{\alpha_6}+O(\kappa), X_{2, \kappa}=O(\ln \kappa), X_{3, \kappa}=\ln\frac{\alpha_3}{\alpha_4}+O(\kappa)$ and the point $X_{\kappa}$ has not a finite limit when $\kappa\rightarrow 0_+$. 
  This result holds also if one considers $\balpha^\star = \frac{1}{1+8\kappa} (\balpha + (\kappa,\cdots,\kappa))$. 
\end{remark}
\begin{proof}
  When $\alpha_1\alpha_3\alpha_4\alpha_5\alpha_6\alpha_7 > 0$, denote by ${\tilde \balpha}=\frac{1}{1+2\kappa}(\kappa, \alpha_1, \kappa, \alpha_3, \alpha_4, \alpha_5, \alpha_6, \alpha_7)$. There exists $\kappa_0 > 0$ such that, for $\kappa<\kappa_0$
  $ (-\mbox{min}({\tilde \alpha}_6, {\tilde \alpha}_5, {\tilde \alpha}_3, {\tilde \alpha}_0), 
\mbox{min}({\tilde \alpha}_1, {\tilde \alpha}_2, {\tilde \alpha}_4, {\tilde \alpha}_7)) \subset (-\kappa, \kappa)$. 
  Recall that the polynomial $\Gamma_l$ when $\alpha_0=\alpha_2=0$ is $\Gamma_l(y)=y^3+Ay^2+By$. Denote by  $\Gamma_l^{\kappa}$ the one derived for $\alpha_0=\alpha_2=\kappa$ is $y^3+A(\kappa)y^2+B(\kappa)y+\frac{\kappa(\alpha_3\alpha_5\alpha_6-\alpha_1\alpha_4\alpha_7)}{(1+2\kappa)^4}$, where $A(\kappa)= \frac{A+\kappa(\alpha_3+\alpha_5+\alpha_6-\alpha_1-\alpha_4-\alpha_5)}{(1+2\kappa)^2}$ and  $B(\kappa)= \frac{B+\kappa(\alpha_3\alpha_5+\alpha_3\alpha_6+\alpha_1\alpha_4+\alpha_4\alpha_7+\alpha_5\alpha_6)}{(1+2\kappa)^3}$. hence $\Gamma_l^{\kappa}$ has a unique root of order $\kappa$ which belongs to $(-\kappa, \kappa)$, denoted by $\kappa z_*(\kappa)$, this root being the value of $\alpha_7e^{X_{0, \kappa}}/(1+e^{X_{0, \kappa}})$.  Hence $X_0=O(\ln \kappa)$ goes to $-\infty$ when $\kappa\rightarrow 0$. The relations $e^{X_{1, \kappa}}=\frac{\alpha_1-\kappa z_*(\kappa)}{\alpha_6+\kappa z_*(\kappa)}$, $e^{X_{3, \kappa}}= \frac{\alpha_3-\kappa z_*(\kappa)}{\alpha_4+\kappa z_*(\kappa)}$, and $e^{X_{2, \kappa}}= \frac{\kappa -\kappa z_*(\kappa)}{\alpha_5+\kappa z_*(\kappa)}$ yield the limit of $X_{\kappa}$ announced. 
  The same formulae yield an identical result for $\balpha^\star$. 
  This regularization yields a result but does not yield a bounded value of $\bbeta$ at the limit.
  Remark that $\frac{\bbeta(\kappa)}{||\bbeta(\kappa)||}\rightarrow (-\frac{1}{2},0,-\frac{1}{2},0)$ as $\kappa\rightarrow 0$. However, the classifier resulting classifier $G= \sum \beta_j(\kappa) G_j$ is not a separator: the signs in columns $1$ and $6$ are oppposite hence, for every small $\kappa$,  neither $G$ or $-G$ is perfect.
\end{proof}

\begin{remark}
  We used here two Tikhonov-type regularizations adapted to our problem instead of the usual $\ell^2$ regularization on the norm of the vector $\bbeta$; the analysis is also possible with the usual regularization and the conclusion is identical.
\end{remark}

\subsection{Non-existence or non-uniqueness of the point of minimum for a general conditional $\phi$-risk in $\R^3$}

In this paragraph, we consider $\balpha\in {\cal A}^3$ (cf. \eqref{eq:domain}) and describe all the possible cases for non-existence or non-uniqueness for all loss functions satisfying definition \ref{def-phi} (as well as cases where the minimum is unique). 

Before describing all the cases, we would like to emphasise that the convexified risk does not necessarly have a unique point of minimum. There can be a point of infimum if $6$ (out of $8$) entries of $\balpha$ are non zero, let us give an example:
\begin{remark}
  \label{rem:inf6}
  Let $\alpha_1\alpha_3\alpha_4\alpha_5\alpha_6\alpha_7>0$ and $\phi$ be an arbitrary function satisfying definition \ref{def-phi}, the generic multidimensional $\phi$-risk  
  $$\alpha_1 \phi(X_1) + \alpha_6 \phi(-X_1) + \alpha_3 \phi(X_3) + \alpha_4 \phi(-X_3) + \alpha_5 \phi(X_1-X_0-X_3) + \alpha_7 \phi(-X_0)$$

  has a strictly positive infimum, hence there is  no combination of the $3$ classifiers which minimizes the convexified risk. 
\end{remark}

\begin{proof}
  The partial derivative with respect to $X_0$ of this function is 
  $- \alpha_5\phi'(X_1-X_0-X_3) - \alpha_7\phi'(X_0)$ which can never be zero. Hence the Euler equations do not have a solution. The infimum is equal to 
  
  $\mbox{min}(\alpha_1 \phi(X_1) + \alpha_6 \phi(-X_1))
  +
 \mbox{min} (\alpha_3 \phi(X_3) + \alpha_4 \phi(-X_3))> 0$.
\end{proof}

It is then interesting to investigate the cases where the convexified cost does not have a unique point of minimum, because in this case no algorithm can provide a solution to the minimization of the convexified cost function. Nevertheless, the implementation of an algorithm converges numerically (using a stopping criterion), hence returns a point.

We provide a detailed analysis of all the cases where we have regular (cf. definition \ref{def:regular}) points in $ {\cal A}^3$ (cf. \eqref{eq:domain}). \\

\begin{theorem} 
  \label{theo:zoology-europe}
  We consider any loss function satisfying definition \ref{def-phi}.
  \begin{enumerate}[label=(\roman*)]
    \item When $3$ pairs are full and $0$, $1$ or $2$ non zero elements in the fourth one, the point of minimum exists and is unique. 
    \item When  $1$, $2$ or $3$ entries of $\balpha$ only are non-zero, there is an infimum (equal to zero) or a non-unique minimum (the latter occurs for only two non-zero entries, and in the same pair).
    \item The only cases of exactly $4$ non-zero coefficients where there is existence and uniqueness of the point of minimum are $\alpha_0\alpha_3\alpha_5\alpha_6>0$ or $\alpha_1\alpha_2\alpha_4\alpha_7>0$. 
    All other cases for exactly $4$ non-zero coefficients return a zero infimum or a non-unique non-zero minimum.
    \item For $5$ or $6$ non-zero coefficients and no empty pair, the multidimensional $\phi$-risk is bounded below by a strictly positive constant and
    the only cases of having a unique point of minimum are one of the previous cases $\alpha_0\alpha_3\alpha_5\alpha_6>0$ or $\alpha_1\alpha_2\alpha_4\alpha_7>0$ with any one or two additional elements.
    \item For $5$ non zero coefficients and one empty pair, the function has a non-zero infimum.
  \end{enumerate}
  The cases of an infimum equal to zero are obtained only in items (ii) and (iii).
\end{theorem}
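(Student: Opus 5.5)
The whole classification reduces to one geometric criterion on the set of active vertices. For $\balpha\in{\cal A}^3$, let $K=\{k:\alpha_k>0\}$ and, for $k\in K$, let $v_k=\bepsilon(k)\in\{-1,1\}^3$ (in the sign convention $X_k=v_k\cdot\bbeta$; the global sign is irrelevant). Then ${\bf C}_\phi^{\balpha}(\bbeta)=\sum_{k\in K}\alpha_k\phi(v_k\cdot\bbeta)$.

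I would first prove a trichotomy valid for every $\phi$ satisfying definition \ref{def-phi}. It uses only three facts: $\phi$ is strictly decreasing and strictly convex, $\phi'\to l<0$ at $-\infty$ so that $\phi(x)\to+\infty$ as $x\to-\infty$, and $\phi$ stays bounded on $[0,+\infty)$ with a nonnegative limit.
\begin{enumerate}[label=(\alph*)]
\item If no nonzero $d\in\R^3$ satisfies $v_k\cdot d\ge 0$ for all $k\in K$ (the $v_k$ positively span $\R^3$), then ${\bf C}_\phi^{\balpha}$ is infinite at infinity. A compactness argument on the unit sphere gives, for every unit $d$, some $k$ with $v_k\cdot d\le-c<0$. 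In that case the $v_k$ also span $\R^3$, so ${\bf C}_\phi^{\balpha}$ is strictly convex and the minimum exists and is unique.
\item If some nonzero $d$ has $v_k\cdot d\ge0$ for all $k\in K$, with at least one inequality strict, then moving along $d$ strictly decreases the function and no minimum is attained. If moreover $v_k\cdot d>0$ for all $k\in K$, the infimum is $0$.
\item If the only such $d$ satisfy $v_k\cdot d=0$ for all $k\in K$, then the $v_k$ span a proper subspace $V$. The function is invariant along $V^\perp$, and the restriction to $V$ falls under (a), so the minimum exists but is not unique.
\end{enumerate}
Strictly positive lower bounds come from lemma \ref{lemma:full-complete} whenever some pair is full.

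The rest is an enumeration over the eight cube vertices, using $v_{7-k}=-v_k$. The four pair directions $v_0,v_1,v_2,v_3$ (up to sign) are the four cube diagonals, and any three of them are linearly independent.

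Item (i) is then immediate from lemma \ref{lemma:exist-unique}: three full pairs form a complete family.

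For item (ii), fewer than $4$ vectors cannot positively span $\R^3$. If the active vectors are independent, a $d$ with all $v_k\cdot d>0$ exists and the infimum is zero. Two active vectors that are linearly dependent must be $\pm v$, which is case (c).

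For item (iii), four vectors positively span $\R^3$ iff $0$ lies in the interior of their convex hull, i.e. they form a tetrahedron containing the origin. Among $4$-subsets of cube vertices, only the two regular inscribed tetrahedra $\{0,3,5,6\}$ and $\{1,2,4,7\}$ do this; their vectors sum to zero. For every other $4$-subset I would exhibit $d$ explicitly, which reduces to a few cases by the symmetry group of the cube. If the subset contains two full pairs, it spans a plane and we are in (c). Otherwise some vertex $v_j$ gives $d=v_j$ or a suitable face normal with all products $>0$, hence a zero infimum.

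Items (iv) and (v) follow the same pattern. With $5$ or $6$ active entries and no empty pair, at least one pair is full, so the bound is strictly positive. Positive spanning then holds exactly when the active set contains one of the two tetrahedra; this is where (iv) overlaps with (i). If it contains neither, the remaining cube-vertex configurations admit a separating $d$ with some strict inequalities, as in remark \ref{rem:inf6}.

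For (v), an empty pair means some diagonal $v_j$ is absent in both signs. The five remaining vertices then avoid a tetrahedron, and one must check that some $d$ with $v_k\cdot d\ge 0$ exists but not with all products zero. This gives case (b), and a full pair keeps the infimum positive.

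The final claim, that zero infima occur only in (ii) and (iii), follows because in (i), (iv) and (v) some pair is full and lemma \ref{lemma:full-complete} applies.

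The main obstacle is the exhaustive combinatorics in (iii)–(v): showing that no non-tetrahedral configuration positively spans. I would handle it with the cube symmetry group, which acts transitively on the relevant orbits, plus the criterion that a separating $d$ exists iff the active vertices lie in a closed half-space through the origin. This reduces the check to a short list of orbit representatives.
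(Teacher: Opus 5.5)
Your trichotomy is correct, and it is a cleaner, uniform route than the paper's. The three cases are: positive spanning gives a unique minimiser; a direction $d\neq 0$ with $v_k\cdot d\ge 0$ on the support and some strict inequality gives no minimiser; only a lineality space gives a non-unique minimiser. The paper instead treats each configuration through the Euler equations, using a partial derivative of constant sign for the infima and a strictly monotone scalar equation such as \eqref{eq:sol13} for the minima.

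The failure is in your enumeration, for supports that contain exactly one full pair but do not positively span. In (iii), your ``otherwise'' branch asserts a $d$ with $v_k\cdot d>0$ on all four active vertices. If both $k$ and $7-k$ are active, however, $v_k\cdot d\ge 0$ and $v_{7-k}\cdot d=-v_k\cdot d\ge 0$ force $v_k\cdot d=0$. Take the paper's own example $\alpha_0\alpha_7\alpha_2\alpha_4>0$. There $d=(1,1,-2)$ gives $v_0\cdot d=0$ and $v_2\cdot d=v_4\cdot d=2$, so you are in your case (b): no minimiser exists. By Lemma~\ref{lemma:full-complete} the infimum is strictly positive, not $0$; the paper finds it equals $\min_{X_0}[\alpha_0\phi(X_0)+\alpha_7\phi(-X_0)]$. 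This affects $48$ of the $70$ four-element supports. The same problem occurs in (ii) for the $24$ three-element supports $\{k,7-k,j\}$. These are neither independent nor a dependent two-element set, and they again fall under (b), with a strictly positive infimum that is not attained.

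So, as written, your argument asserts something false for these configurations. They produce a third outcome that the wording of (ii) and (iii) leaves out: no minimum, with a strictly positive infimum. The paper's proof of (iii) records this explicitly in its subcase ``one pair full and two non zero entries in two of the three others''. The repair needs only tools you already have: apply (b) together with Lemma~\ref{lemma:full-complete}, and read (ii) and (iii) with this third outcome included. Your treatment of uniqueness, of (i), (iv), (v) and of the final sentence is unaffected.

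One smaller point: concluding in (b) that the infimum is exactly $0$ requires $\lim_{x\to+\infty}\phi(x)=0$, not merely a nonnegative limit. The paper makes the same assumption tacitly.
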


The following corollary is a consequence of theorem \ref{theo:zoology-europe} by inspection of (iii) and (iv):
\begin{corollary} 
  \label{theo:zoology-world}
We consider any loss function satisfying definition \ref{def-phi}.
When no pair is empty, there exists a unique point of minimum if and only if $\alpha_0 \alpha_3\alpha_5\alpha_6 > 0$ or $\alpha_1\alpha_2\alpha_4\alpha_7 > 0$.  
\end{corollary}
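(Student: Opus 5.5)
The plan is to split according to the number $p$ of non-zero entries of $\balpha$ and read off each case from theorem \ref{theo:zoology-europe}. First we record the combinatorics. For $m=3$ the pairs (definition \ref{def:pair}) are $(\alpha_0,\alpha_7)$, $(\alpha_1,\alpha_6)$, $(\alpha_2,\alpha_5)$ and $(\alpha_3,\alpha_4)$. The two index sets $\{0,3,5,6\}$ and $\{1,2,4,7\}$ are complementary, and each contains exactly one element of every pair. Saying that no pair is empty means each of the four pairs carries at least one non-zero entry, so $p\geq 4$. We then treat $p=4$, $p\in\{5,6\}$ and $p\in\{7,8\}$ in turn.

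\textbf{Case $p=4$.} Since no pair is empty, each pair contains exactly one non-zero entry. Item (iii) of theorem \ref{theo:zoology-europe} states that, among configurations with exactly four non-zero coefficients, uniqueness of the point of minimum holds exactly when $\alpha_0\alpha_3\alpha_5\alpha_6>0$ or $\alpha_1\alpha_2\alpha_4\alpha_7>0$. In all other configurations the infimum is zero or the minimum is non-unique. This is precisely the claimed equivalence for $p=4$.

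\textbf{Case $p\in\{5,6\}$.} Here the hypothesis ``no empty pair'' is exactly the setting of item (iv). Item (iv) says that a unique point of minimum exists if and only if one of the two products $\alpha_0\alpha_3\alpha_5\alpha_6$ or $\alpha_1\alpha_2\alpha_4\alpha_7$ is positive, the extra one or two coefficients being arbitrary. This again gives both directions.

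\textbf{Case $p\in\{7,8\}$.} At most one coefficient vanishes. The vanishing coefficient lies in only one of the two disjoint sets $\{0,3,5,6\}$ and $\{1,2,4,7\}$, so the product over the other set is positive and the right-hand side of the equivalence holds. We must check that uniqueness also holds. If $p=8$, all four pairs are full. If $p=7$, three pairs are full and the fourth has one non-zero element. In both situations item (i) yields existence and uniqueness; one can also invoke lemma \ref{lemma:exist-unique}, since any three full pairs among the four form a complete family for $m=3$. Hence both sides of the equivalence are true.

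No step is a real obstacle, since everything rests on theorem \ref{theo:zoology-europe}. The point requiring most care is verifying that the four cases exhaust the hypothesis and that each falls exactly under the right item. In particular, for $p=4$ we need that ``no empty pair'' forces exactly one non-zero entry per pair, so that item (iii) applies with no hidden exceptions. For $p\in\{7,8\}$ we need the disjointness argument, which guarantees that the condition is automatically satisfied there.
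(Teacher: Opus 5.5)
Your proposal is correct and follows essentially the paper's route: the paper obtains the corollary directly by inspecting the items of Theorem~\ref{theo:zoology-europe}. The paper cites only items (iii) and (iv), so your treatment of the cases with $7$ or $8$ non-zero entries via item (i), together with the remark that $\{0,3,5,6\}$ and $\{1,2,4,7\}$ partition $\{0,\dots,7\}$, usefully fills in a case the paper leaves implicit.
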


\begin{proof}
Items (i),(ii),(iv) are proven in appendix \ref{sec:proof-theo:zoology-europe}.

All cases of four entries non zero (item (iii)) are described here:
\begin{enumerate} 
\item two pairs full: existence of multiple points of minima
\item one pair full and two non zero entries in two of the three others: infimum. For example $\alpha_0\alpha_7\alpha_4\alpha_2>0$, for which $X_0$ solves $\alpha_0 \phi'(X_0) = \alpha_7 \phi'(-X_0)$, $X_3 \rightarrow -\infty$, $X_2 \rightarrow +\infty$ and the infimum is $\min [ \alpha_0 \phi(X_0) + \alpha_7 \phi(-X_0)]$. 
\item each pair contains a non zero element: 
\begin{enumerate}
\item The two cases of minimum are $\alpha_7\alpha_4\alpha_2\alpha_1>0$ and $\alpha_0\alpha_3\alpha_5\alpha_6>0$ (It is a particular case of lemma \ref{lemma:MinimumCoeff}(i)). 

In the first case (the second case is equivalent by changing $\bbeta$ to  $-\bbeta$), 
the function is $\alpha_7\phi(-X_1-X_2-X_4)+\alpha_1\phi(X_1)+\alpha_2\phi(X_2)+\alpha_4\phi(X_4)$. The Euler equations are 
$\alpha_7\phi'(-X_0)= \alpha_4\phi'(X_4),
\alpha_7\phi'(-X_0)= \alpha_2\phi'(X_2),
\alpha_7\phi'(-X_0)= \alpha_1\phi'(X_1)$. 
From the equality $X_0 = X_1 + X_2 + X_4$ and the values of $X_1,X_2,X_4$ in term of $X_0$, we get the equation $Z(X_0)=0$ where $Z(X_0) = X_0\rightarrow X_0- (\phi')^{-1}[\frac{\alpha_7}{\alpha_1}\phi'(-X_0)]- (\phi')^{-1}[\frac{\alpha_7}{\alpha_2}\phi'(-X_0)]- (\phi')^{-1}[\frac{\alpha_7}{\alpha_4}\phi'(-X_0)]$.
As $\phi'$ is strictly increasing, its reciprocal is strictly increasing and $Z$ is strictly increasing. 

When $X_0\rightarrow -\infty$ its limit is $-\infty$ because $ \frac{\alpha_7}{\alpha_k}\phi'(-X_0)\rightarrow 0_-$  for all $k$ hence  

$ (\phi')^{-1}[\frac{\alpha_7}{\alpha_k}\phi'(-X_0)]\rightarrow +\infty$ for all $k$. 
 
When $X_0\rightarrow +\infty$, assume that $l$ is the limit (element of $[-\infty, 0)$) of $\phi'$ at $-\infty$. When $X_0\rightarrow +\infty$, $\frac{\alpha_7}{\alpha_k}\phi'(-X_0)\rightarrow \frac{\alpha_7}{\alpha_k}l$. 

The limitation (for the set of definition of $(\phi')^{-1}[.]$) for $X_0$ is $\frac{\alpha_7}{\alpha_k}\phi'(-X_0)>l$. Hence, considering the smallest value of $\frac{\alpha_k}{\alpha_7}$, one gets $X_0^*$ such that  $\phi'(-X_0)= \mbox{min}(\frac{\alpha_k}{\alpha_7}l)$ (and $X_0^*=-\infty$ when $l=-\infty$), and the set of definition of $Z$ is $(-\infty,X_0^*)$, the limit when $X_0\rightarrow X_0^*$ is $+\infty$. As $Z$ is strictly increasing, there is a unique root to $Z(X_0)=0$, hence a unique solution for the Euler equations which is the unique point of minimum of the convexified risk. 

Remark that, for the boost loss, the value of the minimum is $(\alpha_0 \alpha_1 \alpha_2 \alpha_4)^{1/4}$.

\item All other cases return a zero infimum:

we consider the specific example $\alpha_7\alpha_4\alpha_2\alpha_6>0$ for the proof (the proof of all other cases is similar). The function is 
$\alpha_7\phi(-X_1-X_2+X_3)+\alpha_4\phi(-X_3)+\alpha_2\phi(X_2)+\alpha_6\phi(-X_1)$.
Its derivative with respect to $X_1$ is $-\alpha_6\phi'(-X_1)-\alpha_7\phi'(-X_0)>0$, hence it has no root and there is no point of minimum. 

With $X_1 = -3n, X_2 = n, X_3 = -n \rightarrow -X_1-X_2+X_3 = n$, the infimum is $0$. 
\end{enumerate}
\end{enumerate}
\end{proof}

We have the following proposition which proves that the non-existence of a point of minimum for the convexified cost is possible for any number $m$ of classifiers. 

\begin{proposition}
  \label{prop:equivalence-inf}
Non existence of a unique point of minimum:
  \begin{enumerate}[label=(\roman*)]
    \item We have the implication: "if, in a given list $G_1,\cdots,G_m$ there is one couple $(G_i,G_j)$ for which at least one column of its truth table is empty, then the convexified cost for the $m$ classifiers does not have a  unique point of minimum (it might be a non unique minimum or an infimum)".
    \item This implication is an equivalence in the case $m=3$.  
  \end{enumerate}
\end{proposition}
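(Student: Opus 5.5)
The plan is to prove (i) with a recession-direction argument for general $m$. For (ii), I would turn the pairwise-truth-table condition into a combinatorial condition on the zero set of $\balpha$ in the cube $\{-1,1\}^3$. Then I would read off existence and uniqueness from theorem \ref{theo:zoology-europe} and corollary \ref{theo:zoology-world}.

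\textbf{Part (i).} The truth table of the couple $(G_i,G_j)$ is a marginal of the full one: its column $(a,b)\in\{-1,1\}^2$ has weight $\sum_{\bepsilon:\,\epsilon_i=a,\,\epsilon_j=b}\alpha(\bepsilon)$. If this column is empty, then $\alpha(\bepsilon)=0$ for every $\bepsilon$ with $(\epsilon_i,\epsilon_j)=(a,b)$. I would take the direction $v=-(a\,e_i+b\,e_j)\in\R^m$, which gives
\[
\bepsilon\cdot v=-(a\epsilon_i+b\epsilon_j)\in\{-2,0,2\},
\]
with the value $-2$ only when $(\epsilon_i,\epsilon_j)=(a,b)$, that is, only on columns where $\alpha$ vanishes. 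So every term $\alpha(\bepsilon)\phi(\bepsilon\cdot(\bbeta+tv))$ with $\alpha(\bepsilon)>0$ is constant or strictly decreasing in $t$, because $\phi'<0$. Hence $t\mapsto{\bf C}_\phi^{\balpha}(\bbeta+tv)$ is nonincreasing for every $\bbeta$. If some $\alpha(\bepsilon)>0$ with $(\epsilon_i,\epsilon_j)=(-a,-b)$, the function is strictly decreasing along $v$, so no point of minimum exists. Otherwise it is constant along $v$, so any minimum is non-unique. In both cases there is no unique point of minimum.

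\textbf{Part (ii), combinatorial reformulation.} For $m=3$, the two entries making up a column of a pairwise truth table are two vertices of the cube $\{-1,1\}^3$ that differ in exactly one coordinate, i.e.\ adjacent vertices. So "no pairwise truth table has an empty column" is equivalent to the following: the zero set $Z=\{\bepsilon:\alpha(\bepsilon)=0\}$ contains no two adjacent vertices, i.e.\ $Z$ is an independent set of the cube graph. The two parity classes are
\[
P_-=\{0,3,5,6\}\ (\textstyle\prod\epsilon_j=-1),\qquad P_+=\{1,2,4,7\}\ (\textstyle\prod\epsilon_j=+1).
\]
Each class is itself an independent set. Two vertices of different parity that are non-adjacent are at Hamming distance $3$, hence antipodal, i.e.\ they form a pair in the sense of definition \ref{def:pair}.

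\textbf{Part (ii), case analysis.} I would classify the independent sets $Z$ as follows.
\begin{itemize}
\item If $Z$ meets both classes, it contains an antipodal couple $\{\bepsilon,-\bepsilon\}$. No third vertex can be non-adjacent to both, since its distances to $\bepsilon$ and $-\bepsilon$ sum to $3$. So $Z$ is exactly one empty pair and the three other pairs are full. Theorem \ref{theo:zoology-europe}(i) then gives a unique point of minimum.
\item If $Z\subseteq P_-$, then $\alpha_1\alpha_2\alpha_4\alpha_7>0$, and $Z$ contains no antipodal couple, so no pair is empty. Corollary \ref{theo:zoology-world} gives uniqueness. The case $Z=\emptyset$ is covered here too, and $Z\subseteq P_+$ is symmetric.
\end{itemize}
Together with (i), this yields the equivalence.

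The main obstacle I expect is bookkeeping: checking the exact correspondence between the indices $k$ and the vertices $\bepsilon(k)$. In particular I need to confirm that the sets $\{0,3,5,6\}$ and $\{1,2,4,7\}$ singled out in the corollary are precisely the parity classes. I also need to make sure that the hypotheses of items (i) and (iv) of theorem \ref{theo:zoology-europe} cover every independent set. That includes the case $|Z|=4$, which falls under item (iii).
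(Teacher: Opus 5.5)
Your proof is correct. Part (i) is the paper's argument in cleaner form, and part (ii) takes a genuinely different route.

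\textbf{Part (i).} The paper writes the risk in $X=\beta_i+\beta_j$, $Y=\beta_i-\beta_j$. It then observes that the derivative in $Y$ is strictly negative when one column is empty, and that the risk is independent of $Y$ when an antipodal pair of columns is empty. That derivative is exactly your directional derivative along $v$. Your dichotomy (whether the column opposite the empty one is populated) handles one, two or three empty columns uniformly. The paper instead details only the one-column case and the same-pair two-column case.

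\textbf{Part (ii).} The paper re-runs the case list of Theorem \ref{theo:zoology-europe} by number of non-zero entries (six, five, four, \dots). Mostly through representative truth tables, it checks that configurations without a unique minimum have a pairwise table with an empty column, and it leaves the remaining cases ``without proof''.

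You instead prove the needed contrapositive directly, in four steps:
\begin{enumerate}
\item ``No empty pairwise column'' means the zero set $Z$ is an independent set of the cube graph.
\item The cube graph is bipartite, with parity classes $\{0,3,5,6\}$ and $\{1,2,4,7\}$.
\item Because distances to $\bepsilon$ and $-\bepsilon$ sum to $3$, an independent $Z$ is either a single empty pair or contained in one parity class.
\item Theorem \ref{theo:zoology-europe}(i) handles the first case and Corollary \ref{theo:zoology-world} the second.
\end{enumerate}

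\textbf{What each route buys.} Yours covers every configuration at once. It also explains why the index sets $\{0,3,5,6\}$ and $\{1,2,4,7\}$ single themselves out. The paper's enumeration gives information yours does not: which couples are responsible, and whether one gets an infimum or a non-unique minimum. However, it is complete only modulo inspection.

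Both routes rest on Theorem \ref{theo:zoology-europe} for the uniqueness side. Your argument in (i) is in fact one half of a general criterion. With $\phi>0$, there is a unique point of minimum if and only if no nonzero $d\in\R^m$ satisfies $\bepsilon\cdot d\geq 0$ for every $\bepsilon$ with $\alpha(\bepsilon)>0$. Checking this criterion on your two types of $Z$ would make (ii) independent of the case-by-case proofs behind the theorem:
\begin{itemize}
\item three full antipodal pairs, i.e.\ three linearly independent $\pm\bepsilon$;
\item a support containing a whole parity class, whose four vectors sum to zero and span $\R^3$.
\end{itemize}
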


For example, if $(G_1, G_2)$ has an infimum, any additional classifier will yield an infimum, if $(G_1, G_2)$ has a unique minimum, it is enough to test $(G_1, G_3)$ and $(G_2, G_3)$.

The proof of this proposition is postponed to section \ref{prop-proof:equivalence-inf}.

\section{Frontiers}
\label{sec:zoll}

Now we introduce the setup needed for studying the resulting classifier in an abstract way, in particular to control the sign of a given $X_k$, hence the stability of a given $\balpha$.

\subsection{Definition of a frontier and characterization of the contradictions}

The curves $\{ X_k(\balpha) =0\}$ separate the decisions $X_k>0$ and $X_k<0$ and they are of interest, namely identifying regions which return different decisions for two given loss functions.  The are called $\phi$-frontiers, and depend only on $m$, $\balpha$ and $\phi$. 

\begin{definition}[$\phi$-frontier points]
A $\phi$-frontier point $\balpha$ is a regular point for $\phi$ with at least one of the values of $X_{\phi,k}^{min}(\balpha)$ equals to zero.

For $m \geq 3$, a corner point of order $p\geq 2$ for $\phi$ is a set of relations between  the coordinates of $\balpha$ such that exactly $p$ relations $X_{\phi,k_1}=0,\ldots, X_{\phi,k_p}=0$ hold simultaneously.
  \label{def:frontiers}
\end{definition}

As we have a complete study for two particular loss functions of the point of minimum of the convexified risk for $3$ classifiers, we have analytical equations for the frontiers and a study of contradictory decisions. This is described below for $\balpha \in {\cal A}^3$. 

Let $a,b,c,d$ be given by
$a =\alpha_0-\alpha_7$,  
$b =\alpha_1-\alpha_6$,  
$c=\alpha_2-\alpha_5$, 
$d=\alpha_3-\alpha_4.$

Recall that $\Gamma_l$ and $\Gamma_b$ are defined in \eqref{eq:Gammal} and \eqref{eq:Gammab}. We have an explicit expression of the frontiers in the cases of logit and boost:

\begin{proposition}
  \label{equations-frontiers:lb} 

  The equations of frontiers are, for logit, resp. for boost:

  \begin{itemize}
    \item $\Gamma_l(-a/2,\balpha) = 0$, resp. $\Gamma_b(a,\balpha) = \alpha_0 \alpha_3 \alpha_5 \alpha_6$ for $X_0=0$,
    \item $\Gamma_l(b/2,\balpha) = 0$, resp. $\Gamma_b(-b,\balpha) = \alpha_0 \alpha_3 \alpha_5 \alpha_6$ for $X_1=0$,
    \item $\Gamma_l(c/2,\balpha) = 0$, resp. $\Gamma_b(-c,\balpha) = \alpha_0 \alpha_3 \alpha_5 \alpha_6$ for $X_2=0$,
    \item $\Gamma_l(-d/2,\balpha) = 0$, resp. $\Gamma_b(d,\balpha) = \alpha_0 \alpha_3 \alpha_5 \alpha_6$ for $X_3=0$.
  \end{itemize}
\end{proposition}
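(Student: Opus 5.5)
The plan is to read the frontier equations directly off the parametrization of the minimizer in Theorem \ref{equations-minimum-frontiers:lb}. There, each $X_k^{\min}$ is an explicit strictly monotone function of a single scalar: $y_l(\balpha)$ for logit and $x_b(\balpha)$ for boost. So $X_k^{\min}=0$ amounts to that scalar taking one specific value, expressed in terms of $a,b,c,d$. Since the scalar is the \emph{unique} root of a strictly increasing function (the lemma preceding Theorem \ref{equations-minimum-frontiers:lb}), this condition is equivalent to that function vanishing at the specific value. Throughout, I work on $\alpha_0\cdots\alpha_7>0$, where $\balpha$ is regular and the theorem applies.

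\emph{Logit.} From Theorem \ref{equations-minimum-frontiers:lb}:
\begin{itemize}
\item $X_1^l=\ln\frac{\alpha_1-y_l}{\alpha_6+y_l}$ vanishes iff $\alpha_1-y_l=\alpha_6+y_l$, i.e.\ $y_l=b/2$.
\item $X_2^l=0$ iff $y_l=c/2$.
\item $X_3^l=\ln\frac{\alpha_3+y_l}{\alpha_4-y_l}$ vanishes iff $y_l=-d/2$.
\end{itemize}
For $X_0$, I would redo the Euler-equation computation of the appendix to obtain the fourth relation. It should be $X_0^l=\ln\frac{\alpha_7-y_l}{\alpha_0+y_l}$, which is consistent with the factor $\frac{\alpha_7-y}{\alpha_0+y}$ in the product form of $\Gamma_l=0$ and with the relation $X_0=X_1+X_2-X_3$ (recall $\Gamma_l=0$ is equivalent to the product of the four fractions being $1$). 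Granting this, $X_0^l=0$ iff $y_l=(\alpha_7-\alpha_0)/2=-a/2$.

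In each case the candidate value lies in the corresponding admissible range; for instance $b/2\in(-\alpha_6,\alpha_1)$ automatically. Since $\Gamma_l$ is strictly increasing on $I$ with $y_l$ its only zero there, $y_l=b/2$ is equivalent to $\Gamma_l(b/2,\balpha)=0$, and similarly for the other three values.

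\emph{Boost.} I would invert the formulas of Theorem \ref{equations-minimum-frontiers:lb} rather than use them directly. The relation $e^{X_1^b}=\frac{x}{2\alpha_6}+\sqrt{(\frac{x}{2\alpha_6})^2+\frac{\alpha_1}{\alpha_6}}$ is exactly the positive root of $\alpha_6 t^2-xt-\alpha_1=0$. Hence
\[
x_b=\alpha_6e^{X_1^b}-\alpha_1e^{-X_1^b},
\]
and $X_1^b=0$ iff $x_b=\alpha_6-\alpha_1=-b$. In the same way:
\begin{itemize}
\item $x_b=\alpha_5e^{X_2^b}-\alpha_2e^{-X_2^b}$, so $X_2^b=0$ iff $x_b=-c$;
\item $x_b=\alpha_3e^{-X_3^b}-\alpha_4e^{X_3^b}$, so $X_3^b=0$ iff $x_b=d$;
\item by the analogous relation for $X_0$ (again extracted from the appendix and the constraint $X_0=X_1+X_2-X_3$), $X_0^b=0$ iff $x_b=a$.
\end{itemize}
Because $\Gamma_b(\cdot,\balpha)$ is strictly increasing on $\R$ and $x_b$ is the unique solution of $\Gamma_b(x,\balpha)=\alpha_0\alpha_3\alpha_5\alpha_6$, the condition $x_b=-b$ is equivalent to $\Gamma_b(-b,\balpha)=\alpha_0\alpha_3\alpha_5\alpha_6$, and likewise for the other three.

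The main obstacle is the $X_0$ case. Theorem \ref{equations-minimum-frontiers:lb} only displays $X_1,X_2,X_3$, so the expression of $X_0^{\min}$ in terms of $y_l$ (resp.\ $x_b$) must be recovered from the Euler equations of the pair $(\alpha_0,\alpha_7)$, with careful attention to signs. A secondary point for logit is that $\Gamma_l$ is a cubic which may have other real roots outside $I$. The equivalence must therefore be stated with the evaluation point in $I$, which I will check directly for each of $-a/2,b/2,c/2,-d/2$ from positivity of the $\alpha_j$.
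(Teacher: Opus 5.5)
Your route is the paper's: the appendix substitutes $X_j=0$ into the relations defining the scalar ($x=\alpha_0e^{-X_0}-\alpha_7e^{X_0}$ for boost, $e^{X_0}=\frac{\alpha_0+a_0}{\alpha_7-a_0}$ and its analogues for logit) and reads off $x=a,-b,-c,d$, resp.\ $a_0=-a/2,b/2,c/2,-d/2$. That argument only gives the implication ``frontier $\Rightarrow$ equation''. You also claim the converse by uniqueness of the root. For boost this is valid, because $\Gamma_b(\cdot,\balpha)$ is strictly increasing on all of $\R$ and each $X_j^b$ is a strictly monotone function of $x_b$.

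For logit, the converse step fails. It needs $b/2\in I$, and this does \emph{not} follow from positivity of the $\alpha_j$. Positivity gives only $b/2\in(-\alpha_6,\alpha_1)$, whereas $I$ is the intersection of $(-\alpha_6,\alpha_1)$, $(-\alpha_5,\alpha_2)$, $(-\alpha_3,\alpha_4)$ and $(-\alpha_0,\alpha_7)$. The cubic genuinely can vanish at $b/2$ outside $I$.

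Use the appendix form $\Gamma_l(y)=(\alpha_0+y)(\alpha_3+y)(\alpha_5+y)(\alpha_6+y)-(\alpha_1-y)(\alpha_2-y)(\alpha_4-y)(\alpha_7-y)$, valid since $\sum_j\alpha_j=1$, together with $\alpha_6+b/2=\alpha_1-b/2$. Take $\alpha_0=\alpha_2=\alpha_3=\alpha_4=\alpha_5=\alpha_6=\varepsilon$, $\alpha_1=1-6\varepsilon-\alpha_7$ and $s=b/2$. Then $\Gamma_l(s,\balpha)=\frac{\alpha_1+\alpha_6}{2}[(s+\varepsilon)^3-(s-\varepsilon)^2(\alpha_7-s)]$, and the bracket tends to $s^2(1-2\alpha_7)$ as $\varepsilon\to 0$. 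For $\varepsilon=10^{-3}$ the bracket is positive at $\alpha_7=0.49$ and negative at $\alpha_7=0.5$, so some admissible $\balpha$ has $\Gamma_l(b/2,\balpha)=0$. Yet $b/2\approx 1/4\notin I=(-\varepsilon,\varepsilon)$, so $y_l\neq b/2$; in fact $X_1^l=\ln\frac{\alpha_1-y_l}{\alpha_6+y_l}>0$.

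What is actually true is $\{X_1^l=0\}=\{\Gamma_l(b/2,\balpha)=0\}\cap\{b/2\in I\}$, and likewise for $-a/2$, $c/2$, $-d/2$. You must therefore either keep the restriction to $I$ explicitly, or claim only the inclusion, which is all the paper's substitution proves.

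A smaller point: your guess $X_0^l=\ln\frac{\alpha_7-y_l}{\alpha_0+y_l}$ has the wrong sign. The product form reads $e^{X_1+X_2-X_3}\cdot\frac{\alpha_7-y}{\alpha_0+y}=1$, so your own consistency check gives $e^{X_0^l}=\frac{\alpha_0+y_l}{\alpha_7-y_l}$, as in the appendix. The zero set $y_l=-a/2$ is unchanged, so the frontier equation survives. The sign would matter, however, in any sign comparison such as lemma \ref{prop:robust3}.
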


See proof in appendix \ref{proof:equations-minimum-frontiers:lb}.

We have thus a complete description of the frontiers and a procedure to identify the minima by solving (either numerically for the boost loss function through a 1D Newton method or through Cardan formulae for logit).
The choice of the logit function which is a regularized strictly convex version of the ReLU function could be profitable thanks to this.

\subsection{The frontiers as a tool for understanding infima of the convexified cost}

The traditional approach to study the cases of infima is by regularizing the strictly convex function. It has been shown in appendix \ref{sec:logical-2} that all regularizations are equivalent.

Authors then generally consider the limit direction of $\bbeta$. 
In this subsection, we show in two examples that it is closely related to the notion of {\it frontiers}. 

\paragraph{First example: $3$ classifiers with $2$ zero coefficients}
In the case of the convexified cost of remark \ref{rem:inf6}, we use the unique point of minimum of
$\alpha_1\phi(X_1)+\alpha_6\phi(-X_1)+\alpha_3\phi(X_3)+\alpha_4\phi(-X_3)+\alpha_7\phi(-X_0)+\epsilon \phi(X_0)+\alpha_5\phi(X_1-X_0-X_3)+\epsilon \phi(-X_1+X_0+X_3)$.

The Euler equations in $X_1, X_3, X_0$ are

$$\bs\ba{l}
\alpha_1\phi'(X_1)-\alpha_6\phi'(X_1)+\alpha_5\phi'(X_1-X_0-X_3)=\epsilon \phi'(-X_1-X_0+X_3)\\
\alpha_3\phi'(X_3)-\alpha_4\phi'(-X_3)-\alpha_5\phi(X_1-X_0-X_3)=-\epsilon \phi'(-X_1+X_0+X_3)\\
-\alpha_7\phi'(-X_0)-\alpha_5\phi'(X_1-X_0-X_3) + \epsilon \phi'(X_0)= -\epsilon \phi'(-X_1+X_0+X_3)
\ea\es$$

The limit when $\epsilon\rightarrow 0$ of $-X_0$ is $+\infty$, which yields the following relations for the limits of $X_1$ and $X_3$:
$$\bs\ba{l}\alpha_1\phi'(X_1)-\alpha_6\phi'(X_1)+0=0\\
\alpha_3\phi'(X_3)-\alpha_4\phi'(-X_3)-0=0\ea\es$$
hence finite values of $X_1$ and $X_3$.
The limit of $(X_0, X_1, X_3)/\vert (X_0, X_1, X_3)\vert $ is then $(1,0,0)$, which corresponds to a frontier (as in the case of two classifiers in appendix \ref{sec:logical-2}).

\paragraph{Second example: $3$ classifiers with $4$ zero coefficients}
For the function

$\alpha_1\phi(X_1)+\alpha_6\phi(-X_1)+\alpha_3\phi(X_3)+\alpha_4\phi(-X_3)+\alpha_7\phi(-X_0)+\alpha_5\phi(X_1-X_0-X_3)$

one easily checks that

$\forall (X_1, X_3), \alpha_1\phi(X_1)+\alpha_6\phi(-X_1)+\alpha_3\phi(X_3)+\alpha_4\phi(-X_3)\geq \mbox{min }\alpha_1\phi(X_1)+\alpha_6\phi(-X_1)+\alpha_3\phi(X_3)+\alpha_4\phi(-X_3):=min=(\alpha_1+\alpha_6)H(\frac{\alpha_1}{\alpha_1+\alpha_6})+ (\alpha_3+\alpha_4)H(\frac{\alpha_3}{\alpha_3+\alpha_4})$ (with equality at a point $(X_1^*, X_3^*)$. One then show that, for $X_0\rightarrow -\infty$, $\alpha_7\phi(-X_0)+\alpha_5\phi(X_1-X_0-X_3)\rightarrow 0$. Hence there exists a sequence which converges to the infimum of this function, equal to $min$.

This proves that $\frac{(X_0, X_1, X_3)}{\sqrt{X_0^2+X_1^2+X_3^3}}$ converges to $(-1,0,0)$, and this defines $\beta_1^{\mbox{lim}}$, which is a frontier point.

For the function $\alpha_0\phi(X_1+X_2+X_4)+\alpha_1\phi(X_1)+\alpha_2\phi(X_2)+\alpha_4\phi(X_4)$, consider a sequence of the form

$X_1=N, X_2=\theta N, X_4=\mu N$. For all $\theta\geq 0, \mu\geq 0$, this sequence makes the function converge to $0$ when $N$ goes to $+\infty$. Then $(X_1, X_2, X_4)/\sqrt{X_1^2+X_2^2+X_4^2}$ converges to $(1, \theta, \mu)$ hence the limit of the correspondding sequence is arbitrary, including $(0,1,0)$ or $(0,0,1)$.

\paragraph{General case for $3$ classifiers}
For the general analysis, we begin to identify the pair fulls, which yield the value of the corresponding coordinates when the regularization is used. We then consider the other coordinates which go to infinity. The study of these cases correspond to the limits
$X_0\rightarrow \infty$,
$X_1\rightarrow \infty$,
$X_2\rightarrow \infty$,
$X_3\rightarrow \infty$,
where one knows directly the limiting normalized value of $X$ ($\delta_{ji}$).

$(X_0, X_1)\rightarrow (\infty, \infty)$,
$(X_0, X_2)\rightarrow (\infty, \infty)$,
$(X_0, X_3)\rightarrow (\infty, \infty)$,
$(X_1, X_2)\rightarrow (\infty, \infty)$,
$(X_1, X_3)\rightarrow (\infty, \infty)$,
$(X_2, X_3)\rightarrow (\infty, \infty)$
where it is trickier and has to be studied separately.

For example
$\alpha_7\phi(-X_0)+\alpha_1\phi(X_1)+ 0\phi(X_0)+0\phi(-X_1)+\alpha_2\phi(X_2)+\alpha_5\phi(-X_2)$

where  $X_2$ is determined and $X_0\rightarrow -\infty, X_1\rightarrow +\infty$, 
and the limiting value of the normalized version is $X^*=(X_0^*, X_1^*, X_2^*)=(r,1,0)$
thanks to

$$X_0^\epsilon= \beta^*(\frac{\alpha_7}{\alpha_7+\epsilon}), X^1=\beta^*(\frac{\epsilon}{\alpha_1+\epsilon})$$
and one has to study the limit $r$ of $\frac{\beta^*(\frac{\alpha_7}{\alpha_7+\epsilon})}{\beta^*(\frac{\epsilon}{\alpha_1+\epsilon})}$ when $\epsilon\rightarrow 0_+$.

\subsection{Study of data of poor quality with two different classification calibrated functions}

This analysis for the frontiers can even identify values of $\balpha$ for which at least one sign of the $X_j$ calculated from the choice of the boost or logit functions disagree. 

We propose to use this idea to identify these cases explicitly thanks to the characterization of the sign of $X_j$ for all $j$.
\begin{definition}[Data of poor quality] 
  One says that the set $\{\cS,\cH_m\}$ is of poor quality if there exists two functions $\phi_1$ and $\phi_2$, regularly classification calibrated and an index $l\in \{0,\cdots,2^m-1 \}$ such that  
  $X_l^{\phi_1}(\balpha^{\cS,\cH_m}) X_l^{\phi_2}(\balpha^{\cS,\cH_m}) < 0$.
  \label{def:poorquality}
\end{definition} 

In the case of $3$ classifiers, we have exact analytic expressions without solving any equation, which will determine if two resulting classifiers are contradictory for at least one sign and for the two functions boost and logit: 

\begin{lemma}
  \label{prop:robust3}
If, for $\balpha = \balpha^{\cS,\cH_3}$, at least one of the four real numbers \\
  $(\Gamma_b(-b, \alpha)- \alpha_0\alpha_3\alpha_5\alpha_6)\Gamma_l(b/2, \alpha)$,
  $(\Gamma_b(a, \alpha)- \alpha_0\alpha_3\alpha_5\alpha_6)\Gamma_l(-a/2, \alpha)$,\\
  $(\Gamma_b(-c, \alpha)- \alpha_0\alpha_3\alpha_5\alpha_6)\Gamma_l(c/2, \alpha)$,
  $(\Gamma_b(d, \alpha)- \alpha_0\alpha_3\alpha_5\alpha_6)\Gamma_l(-d/2, \alpha)$
  is positive, then the set of classified data $\cS,\cH_3$ is of poor quality.
\end{lemma}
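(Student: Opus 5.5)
We show that each of the four products is positive exactly when the boost and logit optima disagree in sign on the corresponding $X_k$; Definition \ref{def:poorquality} then applies with $\phi_1=$ boost, $\phi_2=$ logit. We work in the setting of Theorem \ref{equations-minimum-frontiers:lb}, namely $\alpha_0\cdots\alpha_7>0$. Then $\balpha$ is regular, the minima $X^b_k(\balpha)$ and $X^l_k(\balpha)$ exist, and $\Gamma_b$, $\Gamma_l$ are strictly increasing with unique roots $x_b(\balpha)$ and $y_l(\balpha)$.

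\textbf{Sign of $X_1$.} Write $X_1^b=\ln(\cdot)$ as in Theorem \ref{equations-minimum-frontiers:lb}. Then $X_1^b>0$ iff $\frac{x_b}{2\alpha_6}+\sqrt{(\frac{x_b}{2\alpha_6})^2+\frac{\alpha_1}{\alpha_6}}>1$. The left side is increasing in $x_b$, and it equals $1$ exactly when $x_b/\alpha_6=1-\alpha_1/\alpha_6$, i.e. $x_b=\alpha_6-\alpha_1=-b$. So $X_1^b>0$ iff $x_b>-b$. Since $\Gamma_b(\cdot,\balpha)$ is strictly increasing and $\Gamma_b(x_b)=\alpha_0\alpha_3\alpha_5\alpha_6$, this is equivalent to
\[
\Gamma_b(-b,\balpha)-\alpha_0\alpha_3\alpha_5\alpha_6<0 .
\]
For logit, $X_1^l>0$ iff $\alpha_1-y_l>\alpha_6+y_l$, i.e. $y_l<b/2$. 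We must check that $b/2$ lies in the interval $I$ where $\Gamma_l$ is increasing. This needs $|b|/2<\min(\alpha_1,\alpha_6,\dots)$, which can fail. If $b/2\notin I$, say $b/2\geq \min(\alpha_1,\alpha_2,\alpha_4,\alpha_7)$, we fall back on the equivalent product form of the equation, $\prod\frac{\alpha_i-y}{\alpha_{7-i}+y}$, and track signs directly. This boundary issue is the main obstacle. The cleanest fix is to restrict the lemma to the case where the relevant evaluation points lie in $I$. Alternatively, one can use the expansion of $\Gamma_l$ as that product times positive factors, and argue monotonicity of $\Gamma_l$ on $[-\min(\cdot),+\infty)$ directly from the cubic. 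On $I$ we get $X_1^l>0$ iff $\Gamma_l(b/2,\balpha)>0$.

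\textbf{Conclusion for $X_1$.} The product $(\Gamma_b(-b)-\alpha_0\alpha_3\alpha_5\alpha_6)\,\Gamma_l(b/2)$ is positive iff the two factors share a sign. By the two equivalences above, that happens iff $X_1^b$ and $X_1^l$ have opposite signs. This gives $X_1^bX_1^l<0$, so the data are of poor quality. If the product is positive, neither factor is zero, so neither $X_1$ lies on a frontier.

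\textbf{The other three coordinates.} The same computation applies. $X_2$ uses $c$ with the identical sign pattern. $X_3$ carries a minus sign on $x_b$ and a plus sign on $y_l$ in the formulas, which flips the evaluation points to $d$ and $-d/2$ as in Proposition \ref{equations-frontiers:lb}. $X_0$ is not given directly in Theorem \ref{equations-minimum-frontiers:lb}. We take it from the Euler equation of the pair $(0,7)$, as in Appendix \ref{proof:equations-minimum-frontiers:lb}: for boost, $e^{X_0}$ solves the analogous quadratic in $x_b$; for logit, $y_l=\alpha_7e^{X_0}/(1+e^{X_0})-\alpha_0/(1+e^{X_0})$-type relation. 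Together with the sign conventions, this gives the points $a$ and $-a/2$. These conventions match the frontier equations already established in Proposition \ref{equations-frontiers:lb}, so we can rely on that proposition for the orientation and only need to add the monotonicity argument.
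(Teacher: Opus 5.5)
Your route is the paper's own: strict monotonicity of $\Gamma_b$ and $\Gamma_l$, combined with the monotone link between $X_k$ and the auxiliary variable ($x$ for boost, $y$ for logit), so that the sign of each factor records on which side of $X_k=0$ the optimum lies. The boost half is correct for every $\balpha$ with positive entries, because $\Gamma_b$ is increasing on all of $\R$. One small fix: Proposition~\ref{equations-frontiers:lb} gives only zero sets, not orientation, so for $X_0$ you should derive it yourself. From $\alpha_7e^{X_0}=-\frac{x}{2}+\sqrt{(\frac{x}{2})^2+\alpha_0\alpha_7}$ and $e^{X_0}=\frac{\alpha_0+y}{\alpha_7-y}$ you get $X_0^b>0\iff x_b<a$ and $X_0^l>0\iff y_l>-a/2$.

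The genuine gap is the one you flag and then leave open. The equivalence $X_1^l>0\iff\Gamma_l(b/2,\balpha)>0$ requires $b/2\in I$, and neither proposed fix works. Restricting to $b/2\in I$ proves a weaker statement than the one asked. Monotonicity of $\Gamma_l$ on $[-\min(\alpha_0,\alpha_3,\alpha_5,\alpha_6),+\infty)$ is false. Take $\balpha=\frac1{100}(1,41,1,1,1,1,1,53)$, so that $I=(-0.01,0.01)$ and $b/2=0.2$. Then $\Gamma_l(0.01,\balpha)=1.6\cdot10^{-7}>0$, while $\Gamma_l(0.2,\balpha)=0.21\,(0.21^3-0.19^2\cdot0.33)<0$. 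Tracking signs in the product form does not help either. Outside $I$ the sign of $X_1^l$ is forced ($X_1^l>0$ iff $b/2\ge\min(\alpha_1,\alpha_2,\alpha_4,\alpha_7)$), while $\Gamma_l(b/2,\balpha)$ can have either sign.

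The same example shows the gap cannot be closed along this route. Since $y_l\in I$, $X_1^l=\ln\frac{0.41-y_l}{0.01+y_l}>0$. Also $\Gamma_b(-0.4,\balpha)\approx 8\cdot10^{-12}<10^{-8}=\alpha_0\alpha_3\alpha_5\alpha_6$, which gives $X_1^b>0$. So the first product is positive even though boost and logit agree on $X_1$ (one checks they agree on all four $X_k$). No boost-versus-logit comparison can therefore certify poor quality for this $\balpha$.

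The paper's proof has the same hole. It asserts that $X_0\mapsto\Gamma_l(y(X_0),\balpha)$ is increasing on $\R$, and ``likewise'' for $X_1,X_2,X_3$. That needs monotonicity of $\Gamma_l$ on the whole range $(-\alpha_0,\alpha_7)$ of $y(\cdot)$, whereas only monotonicity on $I$ is established; the example above is exactly the failure of this for $X_1$. What your argument does prove completely is the lemma under the extra hypothesis that the evaluation point of the relevant product ($-a/2$, $b/2$, $c/2$ or $-d/2$) lies in $I$.
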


Remark that
$\psi(x):=\frac{(\alpha_0+2x)(\alpha_3+2x)(\alpha_5+2x)(\alpha_6+2x)}{(\alpha_7-2x)(\alpha_4-2x)(\alpha_2-2x)(\alpha_1-2x)} = 1 + \frac{\Gamma_l(x)}{(\alpha_7-2x)(\alpha_4-2x)(\alpha_2-2x)(\alpha_1-2x)},$
hence the function $\psi$ is strictly increasing on $(-\frac12\mbox{min}(\alpha_0, \alpha_3, \alpha_5, \alpha_6), \frac12\mbox{min}(\alpha_7, \alpha_4, \alpha_2, \alpha_1))$ (thanks to the value of the logarithmic derivative $\frac{\psi'(x)}{\psi(x)}$ and $\psi(x)>0$ on this interval).

The functions $\Gamma_l$ and $\Gamma_b$ are thus both increasing. For the frontier $X_0=0$, $x=\alpha_0-\alpha_7=a$ and $x(X_0)=\alpha_0 e^{-X_0}-\alpha_7 e^{X_0}$ is strictly decreasing in $X_0$ and $y=\frac 12(\alpha_7-\alpha_0)=-a/2$, $y(X_0)= \frac{\alpha_7e^{X_0}-\alpha_0}{1+e^{X_0}}$ is strictly increasing in $X_0$. 
Hence, $\Gamma_b(x(X_0), \alpha)$ is strictly decreasing in $X_0$ while $\Gamma_l(y(X_0), \alpha)$ is strictly increasing in $X_0$. 
We deduce that $\Gamma_b(x(0), \alpha)=\Gamma_b(a, \alpha)$ and $\Gamma_l(y(0), \alpha)=\Gamma_l(-a/2,\alpha)$ of the same sign yield opposite signs for $X_0^b(\alpha)$ and $X_0^l(\alpha)$. 

The same procedure applies for $X_1,X_2,X_3$. 

This proves proposition \ref{prop:robust3}.

We illustrate this lemma in the figures \ref{fig:robust00}, \ref{fig:robust23}, \ref{fig:robust33}. 
Note that in these figures, no value of $\alpha_2$, $\alpha_5$ above the blue curve can be considered thanks to $\sum \alpha_i = 1$ and $\alpha_i>0$. In all figures, the red curves are the frontiers for boost, the black ones are the ones for logit, the domain between these two curves corresponds to sets of poor 
quality according to definition \ref{def:poorquality}.

\begin{figure}[h]
  \includegraphics[width=.5\linewidth,height=4.5cm]{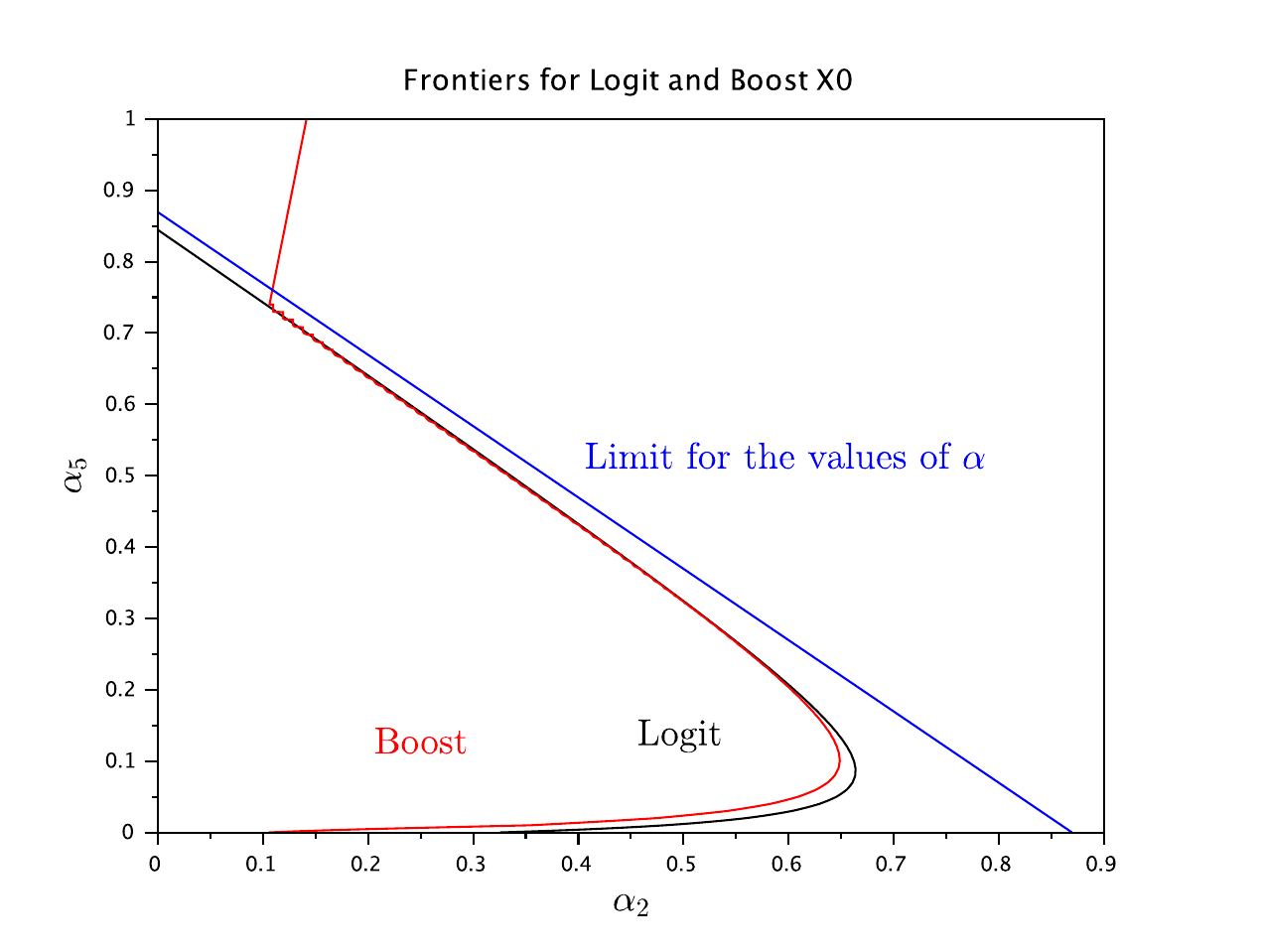}\hfill
  \includegraphics[width=.5\linewidth,height=4.5cm]{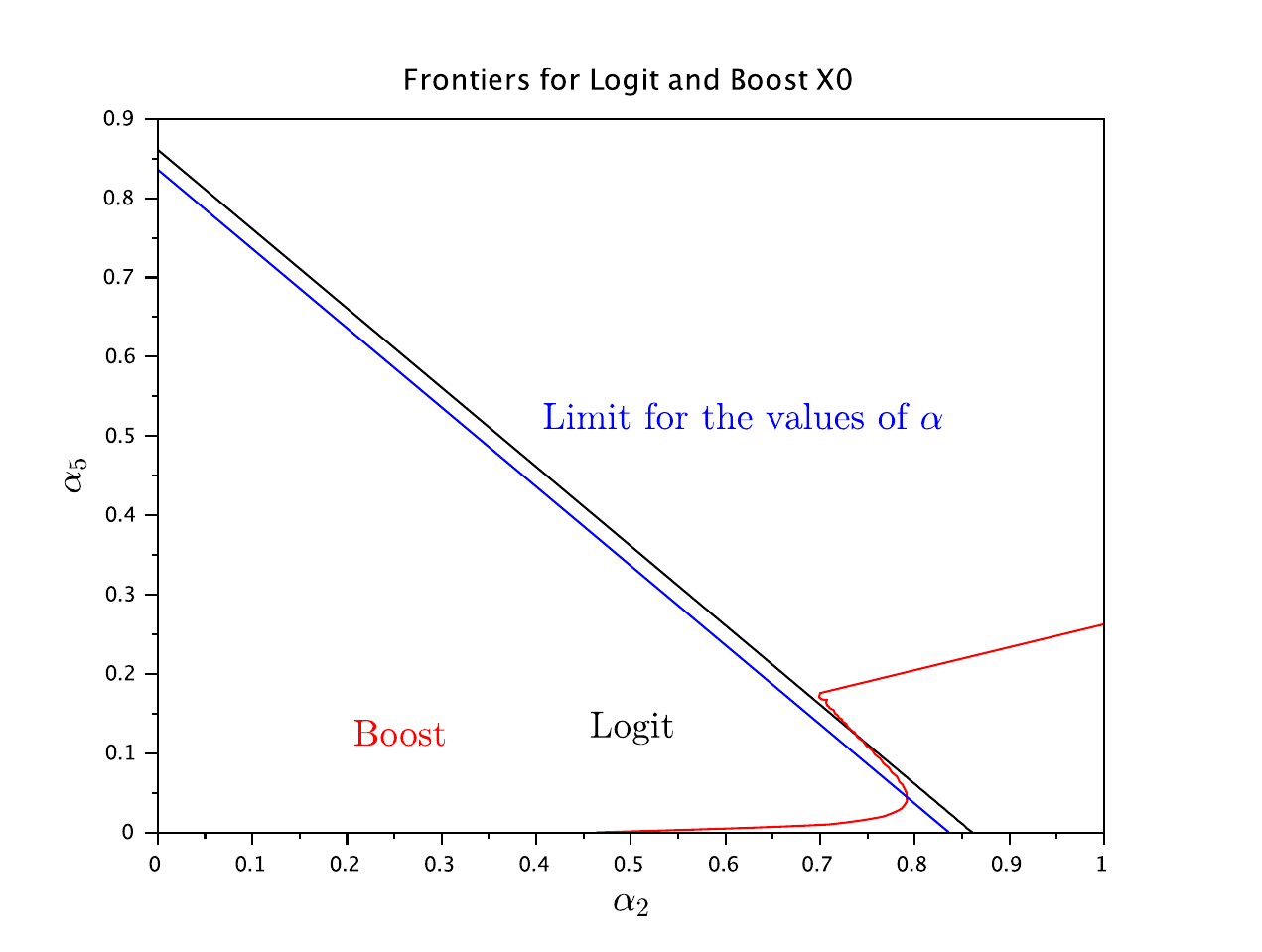}
  \caption{sign of $X_0$ for two differents set of values of $\alpha_0,\alpha_1,\alpha_3,\alpha_4,\alpha_6$.  \label{fig:robust00}} 
\end{figure}

\begin{figure} 
  \includegraphics[width=.5\linewidth,height=4.5cm]{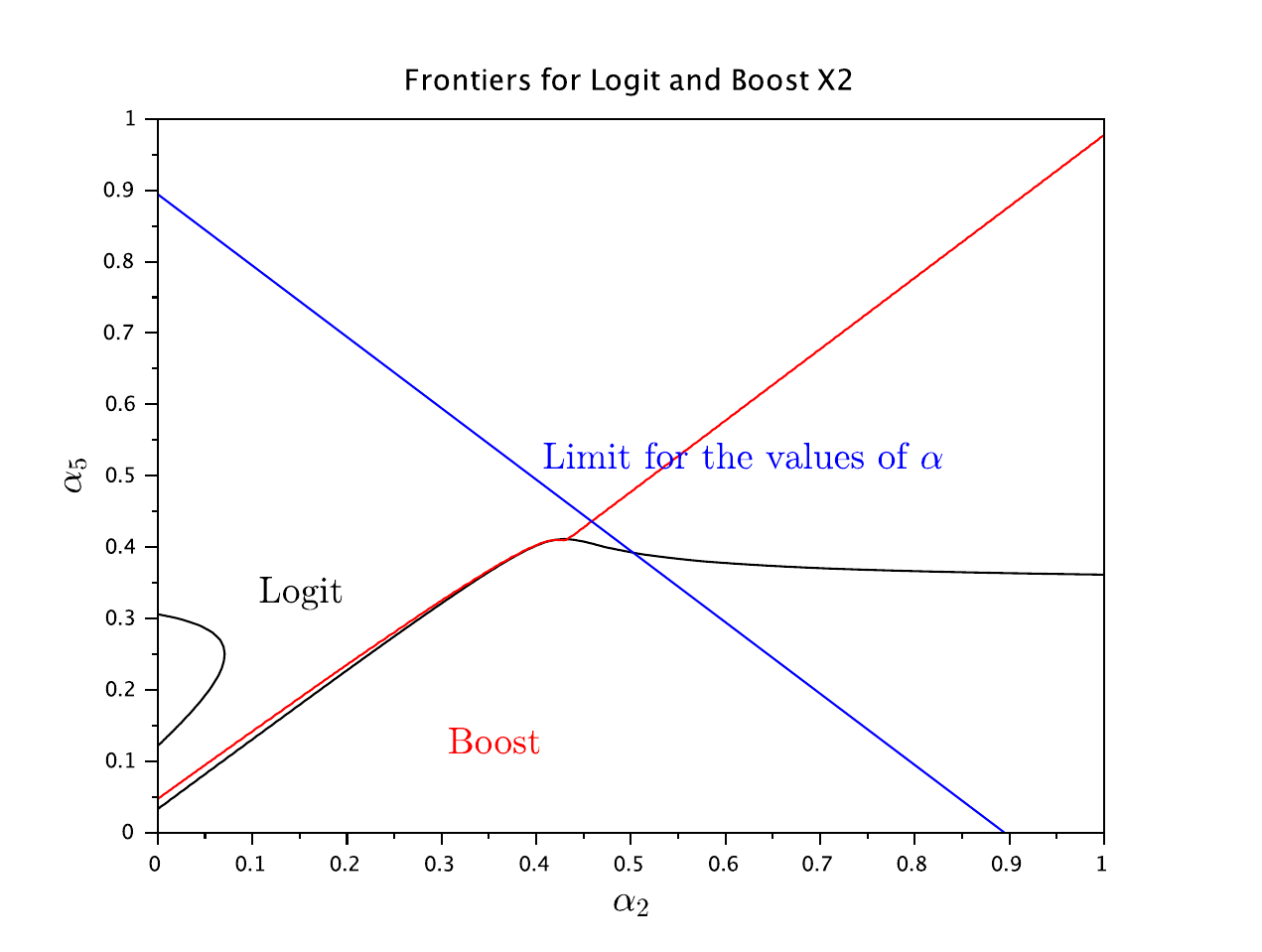}\hfill
  \includegraphics[width=.5\linewidth,height=4.5cm]{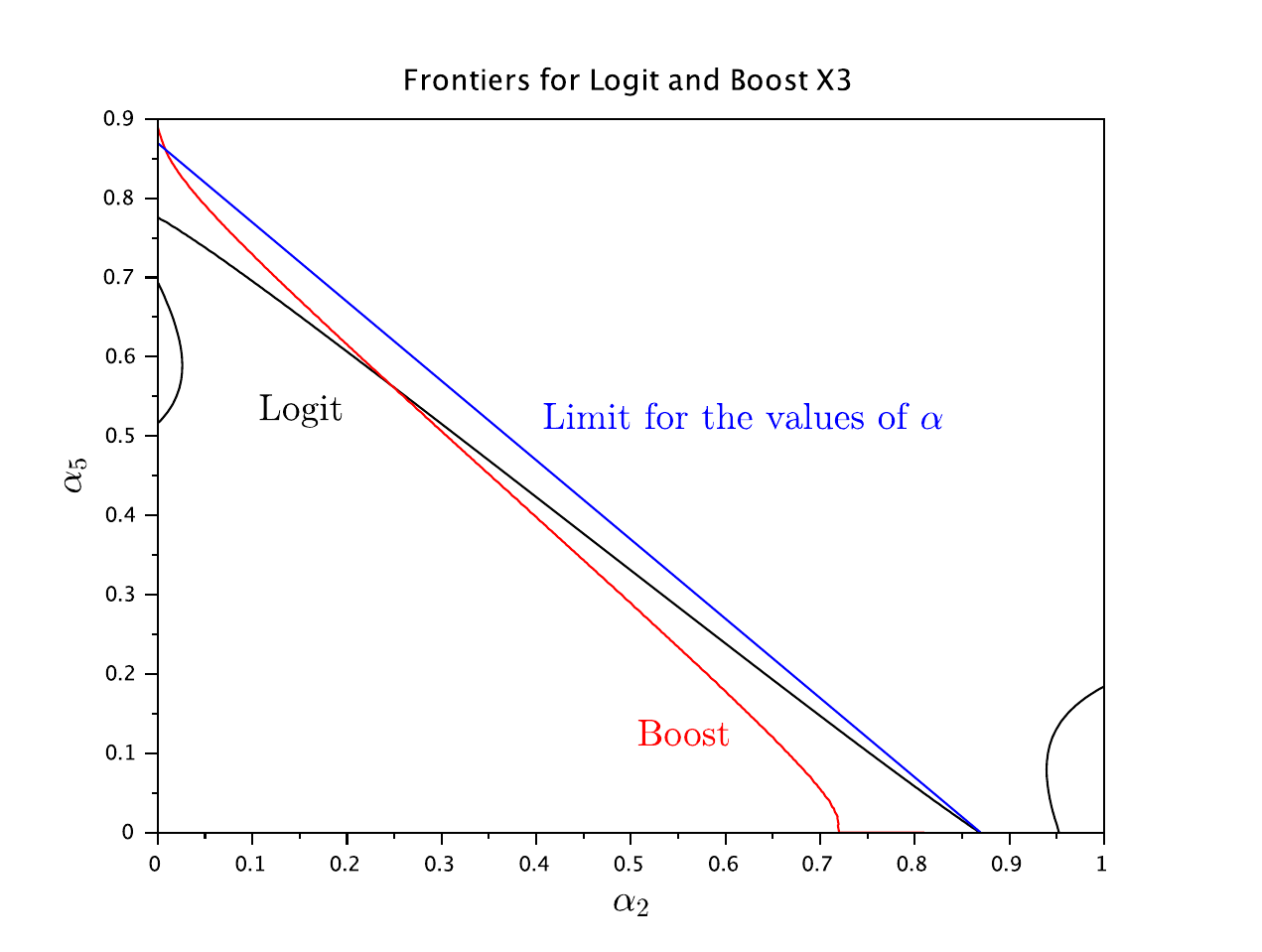}
  \caption{sign of $X_2$ (left) and $X_3$ (right) for the given set.\label{fig:robust23}}
\end{figure}

\begin{figure}
  \includegraphics[width=.5\linewidth,height=4.5cm]{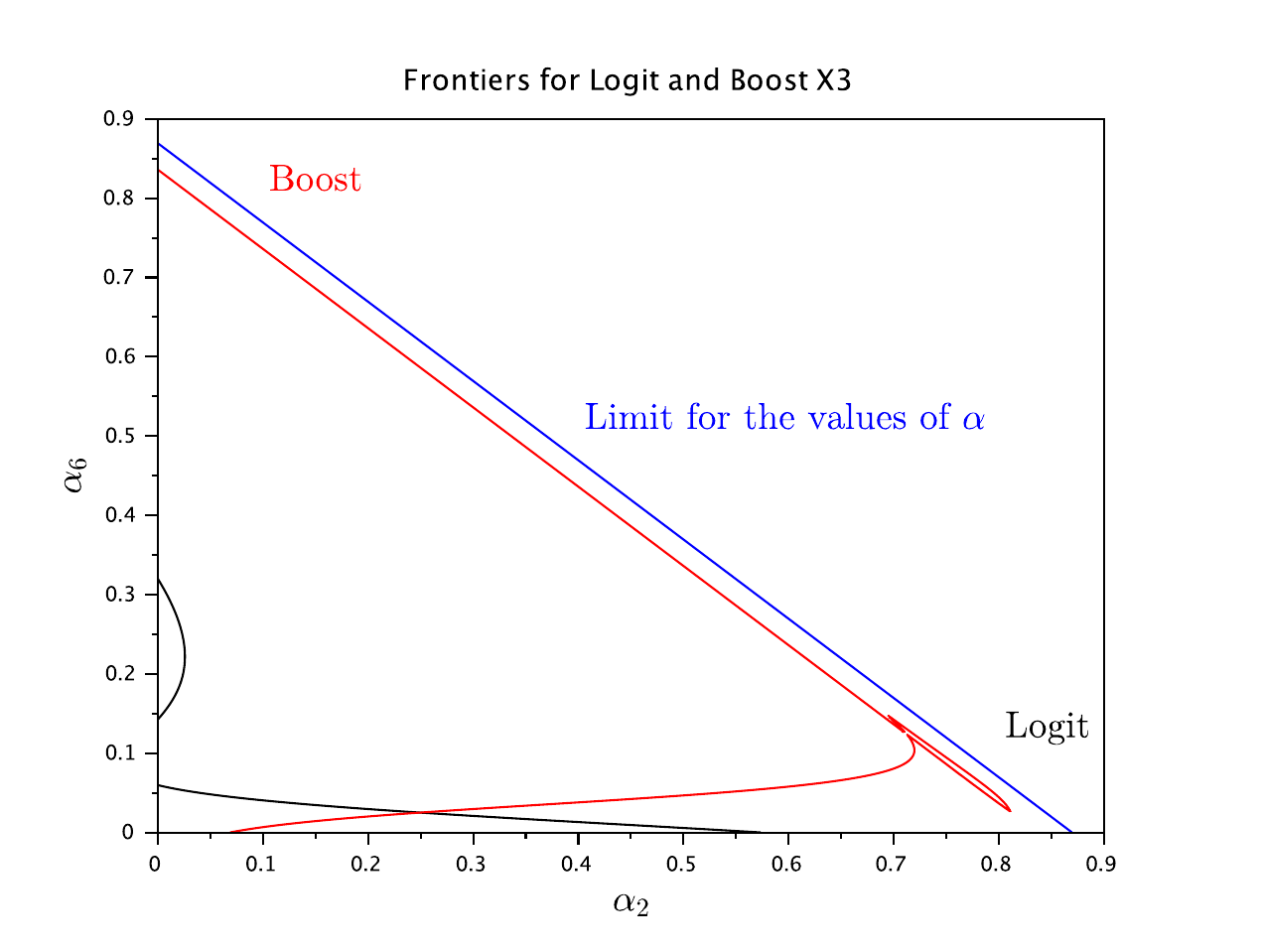}\hfill
  \includegraphics[width=.5\linewidth,height=4.5cm]{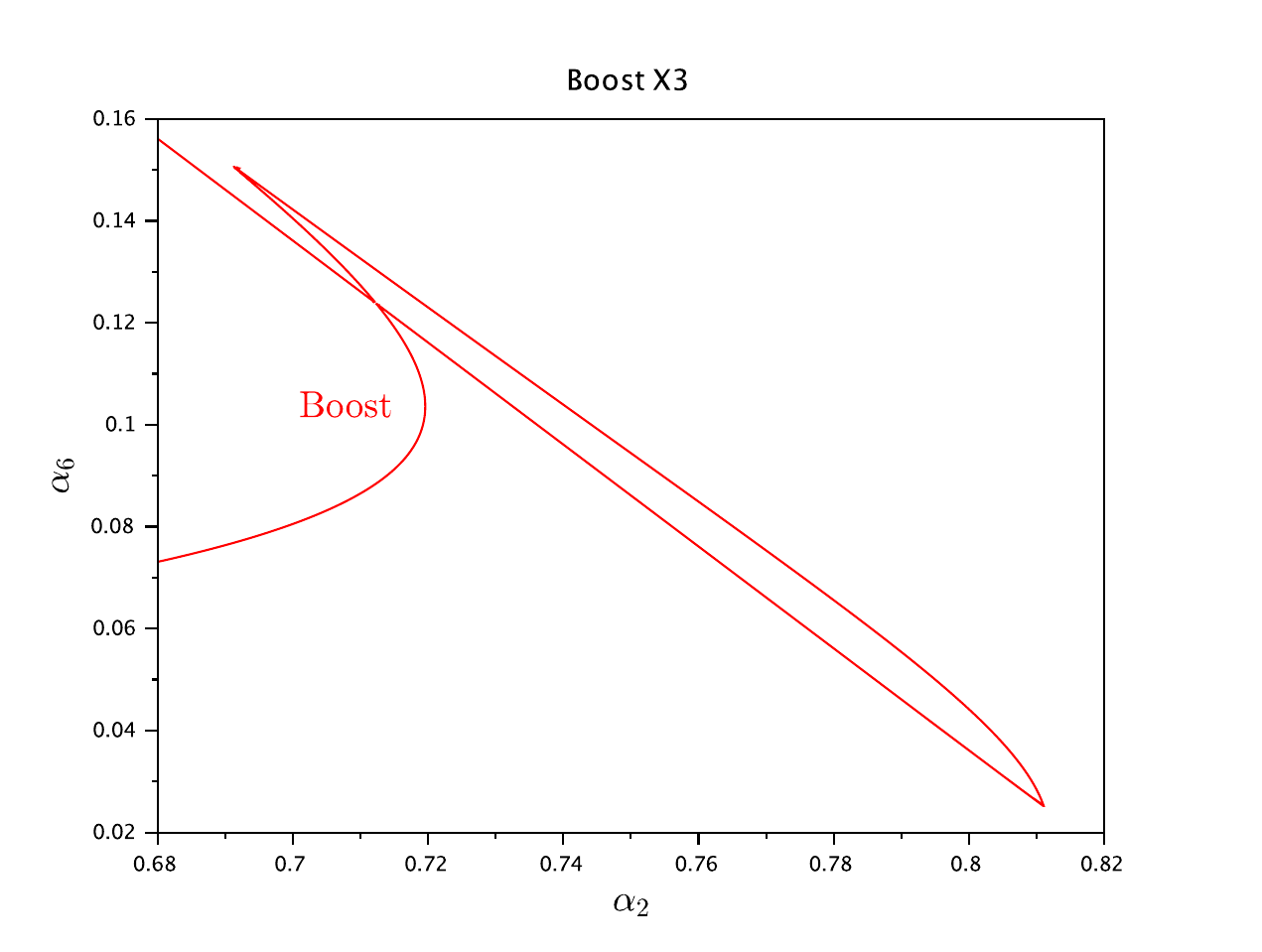}
  \caption{sign of $X_3$ with a zoom (right) near the cusp.\label{fig:robust33}}
\end{figure}

\subsection{A training set of poor quality}

We identify a toy example where the value $\balpha$ corresponds to a region where the quantity $X_j^*$ has not the same sign when using logit and boost, hence not returning the same optimal decision. This example uses a training set, for three given weak classifiers, built to get this contradiction.

\begin{figure}[htbp]
  \centering
  \includegraphics[width=.8\linewidth]{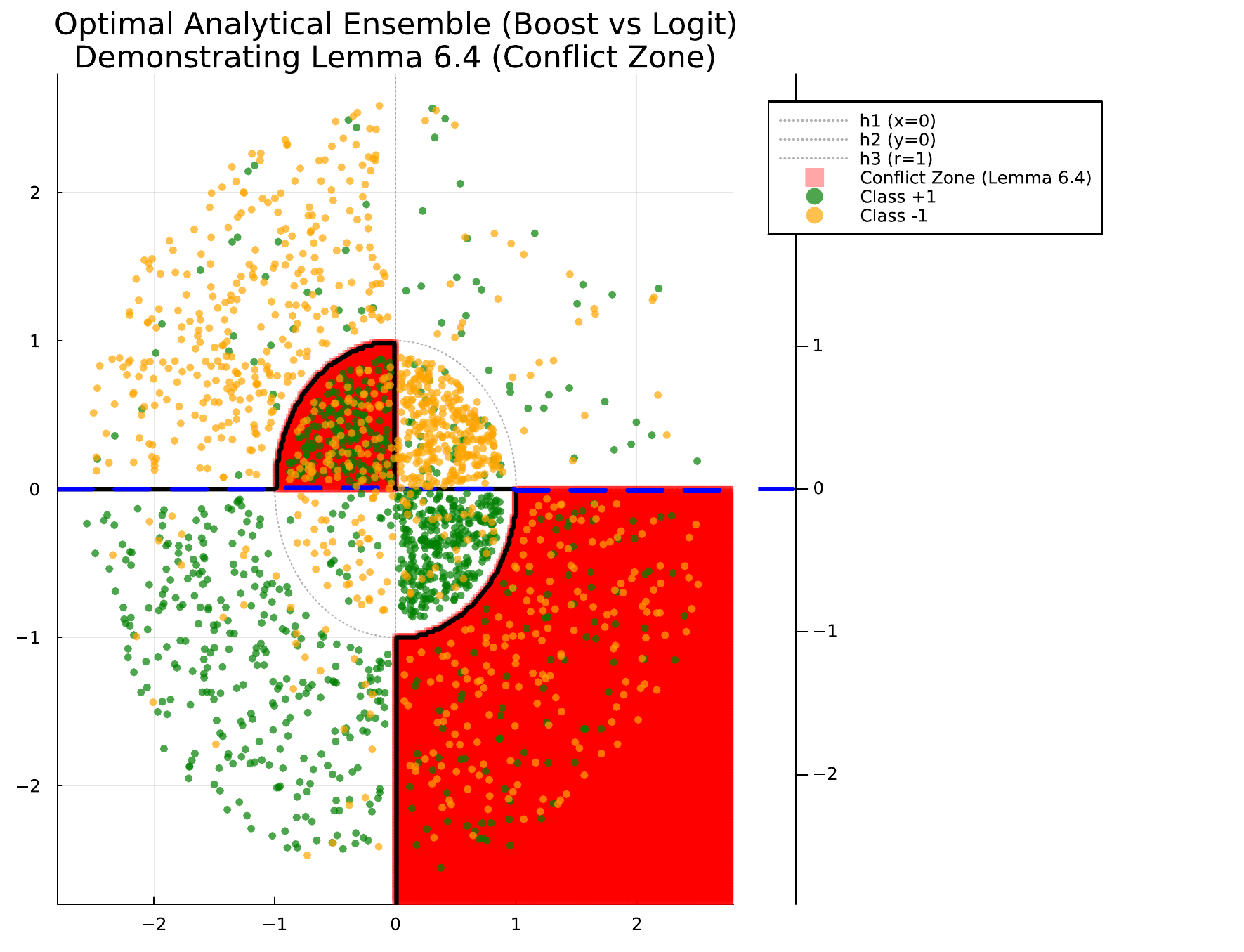}
  \caption{\textbf{Illustration of lemma \ref{prop:robust3} (Data of poor quality).} 
  The two dimensional feature space is partitioned into $8$ logical equivalence classes by three base classifiers ($h_1: x=0$, $h_2: y=0$, and $h_3: x^2+y^2=1$). 
  The optimal ensemble decision boundaries for the logit 
  loss (solid black line) and boost loss (dashed blue line) are plotted using the analytical weights derived in theorem \ref{equations-minimum-frontiers:lb}. The red shaded areas represent the conflict zone where the two risks return opposite decisions. Lemma \ref{prop:robust3} predicts the existence of such conflicts.}
  \label{fig:empirical_conflict}
\end{figure}

\paragraph{Empirical Illustration of lemma \ref{prop:robust3} and $\phi$-frontiers}
Figure \ref{fig:empirical_conflict} provides an illustration of poor quality data (definition \ref{def:poorquality}). In this two dimensional setup, the three base classifiers partition the input space into 8 logical configurations $\bepsilon$ defining the truth table $\balpha$. Within each region, the ensemble's prediction is constant and dictated by the sign of $X_k(\bbeta) = \bepsilon(k) \cdot \bbeta$.

By deriving the optimal weights via theorem \ref{equations-minimum-frontiers:lb}, the resulting decision boundaries for the logit and boost losses are plotted. As observed, the boundaries coincide in most regions but diverge to enclose specific geometric areas highlighted in red. These conflict zones correspond to the logical equivalence classes where the two optimal models output  opposite decisions. For instance, the red regions in figure \ref{fig:empirical_conflict} correspond to the configurations $\bepsilon = (-1, 1, -1)$ and its negation $(1, -1, 1)$, which are governed by the variable $X_2$.

If one of the four products in lemma \ref{prop:robust3} is strictly positive --- such as $(\Gamma_b(-c, \balpha) - \alpha_0\alpha_3\alpha_5\alpha_6)\Gamma_l(c/2, \balpha) > 0$ --- it guarantees that the optimal roots for boost and logit fall on opposite sides of the threshold $X_2=0$. 

In the feature space, this algebraic condition triggers the appearance of the red conflict zone associated with $X_2$. 
This confirms that poor quality data reflects an intrinsic ambiguity in the dataset's logical distribution, forcing standard classification-calibrated losses to yield contradictory predictions.

\section{Conclusions and perspectives}

This paper introduced a novel framework for ensemble learning based on a logical structuration of the dataset via truth tables. This approach partitions the data into equivalence classes defined by the decisions of the elementary classifiers, leading to a  compression of the training set information where the complexity depends on $2^m$ rather than the number of examples $n$. This structure naturally supports hierarchical extensions, such as adding new classifiers.

We generalized the theory of classification calibrated functions to the multidimensional setting. This theoretical framework allowed us to perform a rigorous analysis of the convexified empirical risk:
\begin{itemize}
\item We established sufficient conditions for the existence and uniqueness of a global minimum for an arbitrary number of classifiers.
\item In the specific case of three classifiers, we provided an exhaustive classification of all possible configurations, identifying necessary and sufficient conditions for the existence and uniqueness of the solution, as well as characterizing cases leading to an infimum or non-unique minima.
\item We showed that the non-existence of a minimum is intrinsically linked to the lack of contradictions between classifiers on the training set (i.e., empty columns in the truth table).  
It yields unstability on decisions even when one uses the classical trick of Tikhonov regularization or when we add elements in empty columns.
\end{itemize}

Furthermore, our analysis enabled the derivation of explicit analytical formulae for the optimal classifier weights in the case of three classifiers for the Exponential (boost) and Logistic (logit) loss functions, bypassing the need for iterative numerical optimization in these cases. The analytic result using Cardan formulae is obtained for $m=3$ and the function logit. 

We note that our approach is ineffective for $m$ large because, in this case, the convexified cost is likely to have an infimum (which again leads to unstability of decisions). 

Finally, we introduced the concept of $\phi$-frontiers in the parameter space. These frontiers define the boundaries where the resulting classifier's decision changes, allowing for a precise sensitivity analysis with respect to the choice of the loss function. This led to a criterion for identifying "data of poor quality," where different standard loss functions yield contradictory predictions.

Perspectives include further investigation into the characterization of data quality based on the stability analysis or proving the conjecture that the remark of the case $m=3$ in proposition \ref{prop:equivalence-inf} is true for all $m$.

\bibliography{/Users/brossiej/Documents/Pro/Oeuvres/bib/biblio-ada.bib}

\newpage 

\appendix

\section{Proof of lemma \ref{lem:logical-2}}
\label{sec:logical-2}

\begin{proof}
The sign of the resulting classifier is given by the signs of $\beta_1+\beta_2$ and $\beta_1-\beta_2$. 
The false decision table is given by the sign of $\beta_1 G_1(x_i,y_i) + \beta_2 G_2(x_i,y_i)$ for all $i$.
The four possible configurations are:

\begin{center}
  \begin{tabular}{|c|c|c|c|c|}
    \hline 
  $G_1$   &  $-1$ & $-1$ & $+1$ & $+1$ \tabularnewline
    \hline 
  $G_2$   &  $-1$ & $+1$ & $-1$ & $+1$ \tabularnewline
    \hline 
   $\beta_1 G_1+\beta_2 G_2$  & $-\beta_1-\beta_2$ & $-\beta_1+\beta_2$ & $+\beta_1-\beta_2$ & $+\beta_1+\beta_2$ \tabularnewline
    \hline 
  
    \end{tabular}
\end{center}
hence we have
$$
\begin{array}{ll}
\mbox{Sectors (cf. figure \ref{fig:sectors})} & \mbox{Number of errors of } \beta_1 h_1 + \beta_2 h_2 \\ 
\beta_1 + \beta_2 > 0, \beta_1 - \beta_2 > 0  & \sharp\{G_1\!\!=\!-1, G_2\!\!=\!+1\} + \sharp\{G_1\!\!=\!-1, G_2\!\!=\!-1\}  
= \sharp\{G_1\!\!=\!-1\} \\
\beta_1 + \beta_2 > 0, \beta_1 - \beta_2 < 0  & \sharp\{G_1\!\!=\!+1, G_2\!\!=\!-1\} + \sharp\{G_1\!\!=\!-1, G_2\!\!=\!-1\} 
= \sharp\{G_2\!\!=\!-1\}\\
\beta_1 + \beta_2 < 0, \beta_1 - \beta_2 > 0  & \sharp\{G_1\!\!=\!+1, G_2\!\!=\!+1\} + \sharp\{G_1\!\!=\!-1, G_2\!\!=\!+1\} =  \sharp\{G_2\!\!=\!+1\}\\
\beta_1 + \beta_2 < 0, \beta_1 - \beta_2 < 0  & \sharp\{G_1\!\!=\!+1, G_2\!\!=\!+1\} + \sharp\{G_1\!\!=\!+1, G_2\!\!=\!-1\} = \sharp\{G_1\!\!=\!+1\}\\
\end{array}
$$
\begin{figure}[H]
  \begin{center}
        \begin{tikzpicture}[scale=.9]
            \draw[color=black,dashed,->] (-3,0)--(+4,0) ;
            \draw[color=black,dashed,->] (0,-3)--(0,+3) ;
            \draw (3.8,0.6) node[below] {$\beta_1$} ;
            \draw (0.3,3.) node[below] {$\beta_2$} ;
            \draw (3,2.5) node[below] {$X_1 = 0$} ;
            \draw (3,-1.9) node[below] {$X_0 = 0$} ;
            \draw[color=black] (-3,-3)--(+3,+3) ;
            \draw[color=black] (-3,+3)--(+3,-3) ;

            \draw[color=black] (2.,0) node[below]{$X_0 < 0$} ;
            \draw[color=black] (2.,.7) node[below]{$X_1 < 0$};
            \draw[color=black] (2,1.2) node[below]{Line 1}; 

            \draw[color=black] (-1.9,.7) node[below]{$X_1 > 0$} ;
            \draw[color=black] (-1.9,0) node[below]{$X_0 > 0$} ;
            \draw[color=black] (-1.9,1.2) node[below]{Line 4};

            \draw[color=black] (0.1,2.) node[below]{$X_1 > 0$} ;
            \draw[color=black] (0.1,1.6) node[below]{$X_0 < 0$} ;
            \draw[color=black] (-0.3,2.4) node[below]{Line 2} ;

            \draw[color=black] (0,-1.6) node[below]{$X_1 < 0$} ;
            \draw[color=black] (0,-2.) node[below]{$X_0 > 0$} ;
            \draw[color=black] (-0.3,-1.2) node[below]{Line 3} ;
        \end{tikzpicture}
  \caption{Sectors and frontiers for $m=2$. $X_0=-\beta_1-\beta_2$, $X_1=-\beta_1+\beta_2$. Each sector corresponds to a line in the previous table. \label{fig:sectors}}
  \end{center}
\end{figure}
Consider $\beta_1, \beta_2$ such that the number of errors of the classifier $\mbox{sign}(\beta_1 h_1 + \beta_2 h_2)$ is the smallest. 
According to the exhaustive choices of the table in figure \ref{fig:sectors}, the smallest number of errors corresponds to one of the situations : smallest number of errors of $G_1$, smallest number of errors of $G_2$, smallest number of errors of NOT$(G_1)$, smallest number of errors of NOT$(G_2)$.

The cases of equalities correspond to choosing indifferently one of the classifiers which present the smallest number of errors. 
\end{proof}

The value of the unique point of minimum $\bbeta_\phi^{min}(\balpha^{\cS,\cH_2})$ in lemma \ref{lemma:eu} is given by 

\begin{small}
      $$-\left( 
        \frac 12 (\beta^\star (\frac{\alpha_0^{\cS,\cH_2}}{\alpha_0^{\cS,\cH_2}+\alpha_3^{\cS,\cH_2}} ) +  \beta^\star (\frac{\alpha_1^{\cS,\cH_2}}{\alpha_1^{\cS,\cH_2}+\alpha_2^{\cS,\cH_2}} )),   
        \frac 12 (\beta^\star (\frac{\alpha_0^{\cS,\cH_2}}{\alpha_0^{\cS,\cH_2}+\alpha_3^{\cS,\cH_2}} ) -  \beta^\star (\frac{\alpha_1^{\cS,\cH_2}}{\alpha_1^{\cS,\cH_2}+\alpha_2^{\cS,\cH_2}} ) ) \right)$$
\end{small}

 using $\beta^\star$ defined in section \ref{sec:generalization-of-classification-calibrated-functions}.

\section{Proof of theorem \ref{theo:zoology-europe}}
\label{sec:proof-theo:zoology-europe}

\paragraph*{Proof of (i)}

This is a direct consequence of lemma \ref{lemma:exist-unique} for the case $m=3$.\\

\paragraph*{Proof of (ii)}

For the case of three non-zero coefficients, the function is $\alpha_i\phi(X_i)+\alpha_j\phi(X_j)+\alpha_k\phi(X_k)$.
  Either $X_i,X_j,X_k$ forms a system of coordinates in $\R^3$, hence an infimum when $X_i,X_j,X_k$ tends to $+\infty$ or 
  $(i,j)$ is a pair (reordering indexes if needed) and it is enough to let $X_k$ go to $+\infty$. 
  
For the case of only one non-zero coefficient, we get an infimum. 

For the case of two non-zero coefficients, we get an infimum if they do not belong to the same pair; otherwise we get a non-unique minimum.\\

  \paragraph*{Proof of (iv)}
We treat successively the case of $6$ and $5$ non-zero elements:
\begin{enumerate}[label=(\roman*)]
\item Consider the case of only $6$ non-zero elements. Assume in addition   that there are only $2$ full pairs (the case of $3$ full pairs is addressed in (i) of theorem \ref{theo:zoology-europe}). 
In this case, the functions listed below have a unique point of minimum and are the only ones in this case.

We present this list by ordering the terms in the function beginning with the $2$ full pairs.

The non full pairs, characterized for example by the unknown $X_j$, $j=0\ldots 3$, contain $X_j$ and $-X_j$ in the arguments of the two last terms using the relation $X_0+X_3=X_1+X_2$: $j=2$ for example for the first function of the list below as it is written. The two last terms could also be written 
$\alpha_2\phi(X_0+X_3-X_1)+\alpha_4\phi(-X_3)$ and $j=3$. In the following array, the first column lists the full pairs considered.
\begin{equation*}
  \label{liste1}
  \begin{scriptsize}
  \begin{array}{lll}
   \text{Pair } (0,1) & \alpha_0\phi(X_0)+\alpha_7\phi(-X_0)+\alpha_1\phi(X_1)+\alpha_6\phi(-X_1)&+\alpha_2\phi(X_2)+\alpha_4\phi(X_0-X_1-X_2)\\
    &\alpha_0\phi(X_0)+\alpha_7\phi(-X_0)+\alpha_1\phi(X_1)+\alpha_6\phi(-X_1)&+\alpha_5\phi(-X_2)+\alpha_3\phi(-X_0+X_1+X_2)\\
    \text{Pair } (0,2) &\alpha_0\phi(X_0)+\alpha_7\phi(-X_0)+\alpha_2\phi(X_2)+\alpha_5\phi(-X_2)&+\alpha_1\phi(X_1)+\alpha_4\phi(X_0-X_1-X_2)\\
    &\alpha_0\phi(X_0)+\alpha_7\phi(-X_0)+\alpha_2\phi(X_2)+\alpha_5\phi(-X_2)&+\alpha_6\phi(-X_1)+\alpha_3\phi(-X_0+X_1+X_2)\\
    \text{Pair } (0,3) & \alpha_0\phi(X_0)+\alpha_7\phi(-X_0)+\alpha_3\phi(X_3)+\alpha_4\phi(-X_3)&+\alpha_1\phi(X_1)+\alpha_2\phi(X_0+X_3-X_1)\\
    &\alpha_0\phi(X_0)+\alpha_7\phi(-X_0)+\alpha_3\phi(X_3)+\alpha_4\phi(-X_3)&+\alpha_6\phi(-X_1)+\alpha_5\phi(-X_0+X_1-X_3)\\
    \text{Pair } (1,2) & \alpha_1\phi(X_1)+\alpha_6\phi(-X_1)+\alpha_2\phi(X_2)+\alpha_5\phi(-X_2)&+\alpha_0\phi(X_0)+\alpha_3\phi(-X_0+X_1+X_2)\\
    & \alpha_1\phi(X_1)+\alpha_6\phi(-X_1)+\alpha_2\phi(X_2)+\alpha_5\phi(-X_2)&+\alpha_7\phi(-X_0)+\alpha_4\phi(X_0-X_1-X_2)\\
    \text{Pair } (1,3) & \alpha_1\phi(X_1)+\alpha_6\phi(-X_1)+\alpha_3\phi(X_3)+\alpha_4\phi(-X_3)&+\alpha_0\phi(X_0)+\alpha_5\phi(X_1-X_0-X_3)\\
    & \alpha_1\phi(X_1)+\alpha_6\phi(-X_1)+\alpha_3\phi(X_3)+\alpha_4\phi(-X_3)&+\alpha_7\phi(-X_0)+\alpha_2\phi(-X_1+X_0+X_3)\\
    \text{Pair } (2,3) & \alpha_2\phi(X_2)+\alpha_5\phi(-X_2)+\alpha_3\phi(X_3)+\alpha_4\phi(-X_3)&+\alpha_0\phi(X_0)+\alpha_6\phi(-X_0-X_3+X_2)\\
    &\alpha_2\phi(X_2)+\alpha_5\phi(-X_2)+\alpha_3\phi(X_3)+\alpha_4\phi(-X_3)&+\alpha_7\phi(-X_0)+\alpha_1\phi(X_0+X_3-X_2)
  \end{array}
  \end{scriptsize}
\end{equation*}
When, in this array, one changes exactly one element of one of the two last pairs, one has an infimum.

We treat only two examples, one in this list where there is a solution of the Euler equations (and hence a unique point of minimum as the function is strictly convex) and one where there is an infimum.

$\bullet$ An example where there is an infimum is 

\begin{scriptsize}
  $C_{\boldsymbol{\alpha}}(\bbeta)= \alpha_1\phi(X_1)+\alpha_6\phi(-X_1)+\alpha_3\phi(X_3)+\alpha_4\phi(-X_3)+\alpha_7\phi(-X_0)+\alpha_5\phi(X_1-X_0-X_3)$.
\end{scriptsize} 
Indeed, the derivative of this function with respect fo $X_0$ writes $-\alpha_7\phi'(-X_0)-\alpha_5\phi'(X_1-X_0-X_3) > 0$, hence the Euler equations have no solution, hence no point of minimum.

$\bullet$  An example where there is a minimum is 

\begin{scriptsize}
$C_{\boldsymbol{\alpha}}(\bbeta)= \alpha_1\phi(X_1)+\alpha_6\phi(-X_1)+\alpha_3\phi(X_3)+\alpha_4\phi(-X_3)+\alpha_0\phi(X_0)+\alpha_5\phi(X_1-X_0-X_3)$.
\end{scriptsize}

The Euler equations write
$\alpha_1\phi'(X_1)-\alpha_6\phi'(-X_1)+\alpha_5\phi'(X_1-X_0-X_3)=0,
\alpha_3\phi'(X_3)-\alpha_4\phi'(-X_3)-\alpha_5\phi'(X_1-X_0-X_3)=0,
\alpha_0\phi'(X_0)-\alpha_5\phi'(X_1-X_0-X_3)=0.$

Let $a= \alpha_5\phi'(X_1-X_0-X_3)$. This system rewrites
$$\begin{array}{l}\alpha_1\phi'(X_1)-\alpha_6\phi'(-X_1)=-a,\\
\alpha_3\phi'(X_3)-\alpha_4\phi'(-X_3)=a,\\
\alpha_0\phi'(X_0)=a. \end{array}$$

Denote by $l$ (possibly $-\infty$) the limit of $\phi'$ at $-\infty$ and recall that the limit of $\phi'$ at $+\infty$ is zero. A necessary condition on $a$ for the system on $(X_0, X_1, X_3)$ to have a solution is simultaneously to have $-a\in (\alpha_1l, -\alpha_6l)$, $a\in (\alpha_3l, -\alpha_4l)$, $a\in (\alpha_0l, 0)$, that is 
$$a\in (\mbox{min}(\alpha_0, \alpha_3, \alpha_6)l, 0).$$
Moreover, the limit, when $a$ goes to 0, of $X_0$ is $-\infty$, the limit of $X_1$ and $X_3$ are finite numbers. Similarly, the limit when $a\rightarrow \mbox{min}(\alpha_0, \alpha_3, \alpha_6)l$ of at least one of the values of $X_0, -X_1, X_3$ is $-\infty$, that is $X_1-X_0-X_3$ go to $+\infty$.
One deduces $X_0= (\phi')^{-1}(\frac{a}{\alpha_0})$, $X_1= B_{\alpha_1, \alpha_6}(-a)$, $X_3= B_{\alpha_3, \alpha_4}(a)$, where $B_{\theta, \tau}$ is the reciprocal function of the strictly increasing function $x\rightarrow \theta\phi'(x)-\tau\phi'(-x)$. The function $B_{\theta, \tau}$ is thus strictly increasing, and the equation for finding $a$ is
\begin{equation}
  \label{eq:sol11}
  \alpha_5\phi'(B_{\alpha_1, \alpha_6}(-a)- (\phi')^{-1}(\frac{a}{\alpha_0})-B_{\alpha_3, \alpha_4}(a))-a=0.
\end{equation}

The function in the left-hand side of \eqref{eq:sol11} from  
$(\mbox{min}(\alpha_0, \alpha_3, \alpha_6)l, 0)$ onto $\R$ is strictly decreasing  hence it has a unique root.

\item Consider the case of five non zero elements.
\begin{itemize}
\item if two pairs $(i,j)$ are full, one additional element of the form $\alpha_k\phi(X_k)$ one deduces $X_i$ and $X_j$, and $X_k\rightarrow +\infty$ yields an infimum. 
\item if one pair $i$ is full, the three other pairs contain, each, only one element. We have cases of minimum and cases of infimum
\begin{description}
\item[Case of minimum.]

Let us treat the first one. The function one minimizes is
$\alpha_0\phi(X_0)+\alpha_7\phi(-X_0)+\alpha_1\phi(X_1)+\alpha_2\phi(X_2)+\alpha_4\phi(X_0-X_1-X_2)$.

Euler system of equations  is
$\alpha_0\phi'(X_0)-\alpha_7\phi'(-X_0)+\alpha_4\phi'(X_0-X_1-X_2)=0, 
\alpha_1\phi'(X_1)-\alpha_4\phi'(X_0-X_1-X_2)=0, 
\alpha_2\phi'(X_2)-\alpha_4\phi'(X_0-X_1-X_2)=0.$
Introduce $b= \alpha_4\phi'(X_0-X_1-X_2)$. One obtains $X_1= (\phi')^{-1}(\frac{b}{\alpha_1})$, $X_2= (\phi')^{-1}(\frac{b}{\alpha_2})$, $X_0= B_{\alpha_0, \alpha_7}(-b)$, which yields 
\begin{equation}
  \label{eq:sol13}
\alpha_4\phi'(B_{\alpha_0, \alpha_7}(-b)-(\phi')^{-1}(\frac{b}{\alpha_1})- (\phi')^{-1}(\frac{b}{\alpha_2}))-b=0.
\end{equation}

The function in the left-hand side of \eqref{eq:sol13},  from $(\alpha_7l, 0)$ onto $\R$, is strictly decreasing hence has a unique root.

As the 3-dimensional generic convexified cost is strictly convex, the solution is a point of minimum, hence the function has a unique minimum.\\

\item[Case of infimum.] When one of the coefficients changes, there is an infimum. For example, consider 

$(X_0, X_1, X_2)\rightarrow \alpha_0\phi(X_0)+\alpha_7\phi(-X_0)+\alpha_1\phi(X_1)+\alpha_5\phi(-X_2)+\alpha_4\phi(X_0-X_1-X_2)$.
The derivative with respect to $X_2$ is $-\alpha_5\phi'(-X_2)-\alpha_4\phi'(X_0-X_1-X_2) > 0$ hence no solution to the Euler equations, there is an infimum.
\end{description}
\end{itemize}

By inspection, we observe that ALL the cases of unique point of minimum described above contain at least either the indexes $(0,3,5,6)$ or $(1,2,4,7)$.
This ends the proof of (iv) of theorem \ref{theo:zoology-europe}.

\begin{flushright}
$\square$ 
\end{flushright}
\end{enumerate}

\section{Proof of proposition \ref{prop:equivalence-inf}}
\label{prop-proof:equivalence-inf}

\begin{enumerate}
  \item 
  For the proof of the first item of proposition \ref{prop:equivalence-inf}, assume we have a list of $m$ classifiers and that the couple $(G_i,G_j)$ induces at least one column of its truth table empty.
  
  $\bullet$ If only one column is empty, without loss of generality, we assume that it is $G_i=-1, G_j=+1$, hence 
  $\sharp\{G_i=+1, G_j=+1\} > 0$, 
  $\sharp\{G_i=-1, G_j=-1\} > 0$, 
  $\sharp\{G_i=+1, G_j=-1\} > 0$ and 
  $\sharp\{G_i=-1, G_j=+1\} = 0$.
  
  Define the following subsets of $\cS$:
  
  $\cS^{i+,j+} = \{ (x_p,y_p): G_i(x_p,y_p)=+1, G_j(x_p,y_p)=+1 \}$, 
  
  $\cS^{i-,j-} = \{ (x_p,y_p): G_i(x_p,y_p)=-1, G_j(x_p,y_p)=-1 \}$,
  
  $\cS^{i+,j-} = \{ (x_p,y_p): G_i(x_p,y_p)=+1, G_j(x_p,y_p)=-1 \}$ 
  
  and the list of classifiers 
  $\cH_{m-2}^{i,j} = \cH_{m} \setminus \{h_i,h_j\}$.
  Denote by $X=\beta_i+\beta_j$ and $Y=\beta_i-\beta_j$.
  The multidimensional convexified $\phi$-risk writes 
  $$
  \sum_{k=0}^{2^{m-2}-1} 
  \alpha_k^{\cS^{i+,j+}, \cH_{m-2}^{i,j}} \phi(X+Y_k)
  +\alpha_k^{\cS^{i-,j-}, \cH_{m-2}^{i,j}} \phi(-X+Y_k)
  +\alpha_k^{\cS^{i+,j-}, \cH_{m-2}^{i,j}} \phi(Y+Y_k)
  $$
  where $Y_k = \sum_{l=1, l\not= i,j}^{m} \bepsilon(k)_l \beta_l$.
  
  The derivative with respect to $Y$ of this function returns 
  $$
  \sum_{k=0}^{2^{m-2}-1} 
  \alpha_k^{\cS^{i+,j-}, \cH_{m-2}^{i,j}} \phi'(Y+Y_k) < 0
  $$
  Hence, there is no solution to the Euler equations for the convexified $m-\phi$-risk, 
  hence no point of minimum.
  
  $\bullet$ Assume that two columns of the truth table of $G_i,G_j$ are empty.
  Two cases appear: the case of an infimum equals to zero and the case of a non-unique minimum. 
  
  For these two classifiers, the case of non unique minimum is obtained when $\cS^{i+,j-}=\emptyset$, the multidimensional convexified $\phi$-risk is then
  $$
  \sum_{k=0}^{2^{m-2}-1} 
  \alpha_k^{\cS^{i+,j+}, \cH_{m-2}^{i,j}} \phi(X+Y_k)
  +\alpha_k^{\cS^{i-,j-}, \cH_{m-2}^{i,j}} \phi(-X+Y_k)
  $$
  which depends only on $m-1$ independent variables (it does not depend on $Y$), hence even if there is a point of minimum, adding any value of $Y$ also returns a point of minimum. 
  
  Existence and uniqueness of a point of minimum of the 3-dimensional generic convexified cost function  is equivalent to having all the couples 
  $(G_1, G_2)$, $(G_1, G_3)$, $(G_2, G_3)$ leading to a unique point of minimum of the 2-dimensional generic convexified cost function.
  
\item
The proof of the second item of proposition \ref{prop:equivalence-inf} (Equivalence of "no unique minimum for $3$ classifiers" and "one pair of these classifiers at least corresponds to a case of non-uniqueness") consists in the study of all cases of non-uniqueness or existence of the point of minimum for $3$ classifiers.

\begin{enumerate}
\item $6$ non zero elements and $2$ full pairs only. We describe two examples in detail, and all other cases in these two items are listed below  each item without proof.

For example, pick one truth table where the convexified $\phi-$risk has an infimum, the truth table of $G_2, G_3$ leads an infimum too: indeed

$
\begin{array}{ccccccc}
  & \alpha_0 & \alpha_7 & \alpha_3 & \alpha_4 & \alpha_5 & \alpha_1 \\
G_1   \hspace{-.15cm} & -1  & +1  & -1  & +1   & +1   & -1 \\
G_2  \hspace{-.15cm}  & -1  & +1  & +1  & -1   & -1  & -1 \\
G_3  \hspace{-.15cm}  & -1  & +1  & +1  & -1   & +1  & +1 \\
\end{array} \rightarrow
$
$
\begin{array}{ccccc}
  & \alpha_0+\alpha_4 & \alpha_7+\alpha_3 & \alpha_5+ \alpha_1 & 0\\
G_2  \hspace{-.15cm}  & -1  & +1   & -1 & +1\\
G_3  \hspace{-.15cm}  & -1  & +1  & +1 & -1\\
\end{array}
$

and we apply lemma \ref{lemma:eu}.

\item five non zero elements.
\begin{itemize}
\item if two pairs $(i,j)$ are full, one additional element of the form $\alpha_k\phi(X_k)$ one deduces $X_i$ and $X_j$, and $X_k\rightarrow +\infty$ yields an infimum. 
For example, pick one truth table where the convexified $\phi-$risk has an infimum, the truth table of $G_2, G_3$ leads an infimum too: indeed

$
\begin{array}{cccccc}
  & \alpha_0 & \alpha_7 & \alpha_3 & \alpha_4 & \alpha_5\\
G_1   \hspace{-.15cm} & -1  & +1  & -1  & +1   & +1\\
G_2  \hspace{-.15cm}  & -1  & +1  & +1  & -1   & -1 \\
G_3  \hspace{-.15cm}  & -1  & +1  & +1  & -1   & +1 \\
\end{array} \rightarrow
$
$
\begin{array}{ccccc}
  & \alpha_0+\alpha_4 & \alpha_7 + \alpha_3 & \alpha_5 & 0\\
G_2  \hspace{-.15cm}  & -1  & +1   & -1 & +1\\
G_3  \hspace{-.15cm}  & -1  & +1  & +1  & -1\\
\end{array}
$

and we apply lemma \ref{lemma:eu}.
\item if one pair $i$ is full, the three other pairs contain, each, only one element. We have cases of minimum and cases of infimum
\begin{enumerate}
\item Consider the following truth table, where there is a unique point of minimum
$
\begin{array}{cccccc}
  & \alpha_0 & \alpha_7 & \alpha_4 & \alpha_2 & \alpha_1\\
G_1   \hspace{-.15cm} & -1  & +1  & +1  & -1   & -1\\
G_2  \hspace{-.15cm}  & -1  & +1  & -1  & +1   & -1 \\
G_3  \hspace{-.15cm}  & -1  & +1  & -1  & -1   & +1 \\
\end{array}
$.
The three couples $(G_1, G_2)$, $(G_1, G_3)$, $(G_2, G_3)$ each yield an unique point of minimum for the convexified cost function.

\item Consider the truth table, where the convexified cost function has an infimum:
$
\begin{array}{cccccc}
  & \alpha_0 & \alpha_7 & \alpha_4 & \alpha_2 & \alpha_6\\
G_1   \hspace{-.15cm} & -1  & +1  & +1  & -1   & +1\\
G_2  \hspace{-.15cm}  & -1  & +1  & -1  & +1   & +1 \\
G_3  \hspace{-.15cm}  & -1  & +1  & -1  & -1   & -1 \\
\end{array}
$.
In this case both couples $(G_1, G_3)$ and $(G_2, G_3)$ yield an infimum.
\end{enumerate}
\end{itemize}
\item four entries non zero
\begin{itemize}
\item two pairs full: existence of multiple points of minima
\item In the case of one pair full and two non zero entries in two of the three others: the convexified cost function has an infimum. Consider the following example of truth table:
$
\begin{array}{ccccc}
  & \alpha_0 & \alpha_7 & \alpha_4 & \alpha_2\\
G_1   \hspace{-.15cm} & -1  & +1  & +1  & -1\\
G_2  \hspace{-.15cm}  & -1  & +1  & -1  & +1\\
G_3  \hspace{-.15cm}  & -1  & +1  & -1  & -1 \\
\end{array}
$.

The two couples $(G_1, G_3)$ and $(G_2, G_3)$ yield an infimum.
\item In the case where each pair contains a non zero element, we can have either an infimum or a minimum.
\begin{enumerate}
\item Truth table leading to a minimum:
$
\begin{array}{ccccc}
  & \alpha_7 & \alpha_4 & \alpha_2&\alpha_1\\
G_1   \hspace{-.15cm} & +1  & +1  & -1 &-1\\
G_2  \hspace{-.15cm}   & +1  & -1  & +1&-1\\
G_3  \hspace{-.15cm}  & +1  & -1  & -1&+1 \\
\end{array}
$.

The three couples $(G_1, G_2)$, $(G_1, G_3)$, $(G_2, G_3)$ each yield an unique point of minimum for the convexified cost function.

\item Truth table leading to an infimum:
$
\begin{array}{ccccc}
  & \alpha_7 & \alpha_4 & \alpha_2&\alpha_6\\
G_1   \hspace{-.15cm} & +1  & +1  & -1 &+1\\
G_2  \hspace{-.15cm}   & +1  & -1  & +1&+1\\
G_3  \hspace{-.15cm}  & +1  & -1  & -1&-1 \\
\end{array}
$.

The three couples $(G_1, G_2)$, $(G_1, G_3)$, $(G_2, G_3)$ each yield an infimum for the convexified cost function.
\end{enumerate}
\end{itemize}
\item Three entries are non zero: always a point of infimum.
\item Two entries are non zero: if it is in a pair, this yields a non unique point of minimum, if it is not in a pair, it yields an infimum equal to zero.
\item One entry is non zero: it yields an infimum equal to zero.

In the last three cases, at least one column is empty for all pairs of classifiers.

This ends the proof of the second item of proposition \ref{prop:equivalence-inf}.
\begin{flushright}
  $\square$ 
  \end{flushright}
\end{enumerate}

\end{enumerate}

\section{Proof of theorem \ref{equations-minimum-frontiers:lb} and of proposition \ref{equations-frontiers:lb}}
\label{proof:equations-minimum-frontiers:lb}
\begin{itemize}
  \item In the case of the boost loss function, the system of Euler equations that one has to solve is
$$\bs\ba{l}-\alpha_0e^{-X_0}+\alpha_7e^{X_0}-\alpha_1e^{-X_1}+\alpha_6e^{X_1}=0\\
-\alpha_0e^{-X_0}+\alpha_7e^{X_0}-\alpha_2e^{-X_2}+\alpha_5e^{X_2}=0\\
\alpha_0e^{-X_0}-\alpha_7e^{X_0}-\alpha_3e^{-X_3}+\alpha_4e^{X_3}=0\\
e^{X_0+X_3}=e^{X_1+X_2}.\ea\es$$
Denote by $x= \alpha_0e^{-X_0}-\alpha_7e^{X_0}$. This system writes

$\bs\ba{l}-\alpha_1e^{-X_1}+\alpha_6e^{X_1}=x\\
-\alpha_2e^{-X_2}+\alpha_5e^{X_2}=x\\
-\alpha_3e^{-X_3}+\alpha_4e^{X_3}=-x\\
\alpha_0e^{-X_0}-\alpha_7e^{X_0}=x\\
e^{X_0+X_3}=e^{X_1+X_2}\ea\es \leftrightarrow 
\bs\ba{l}\alpha_6e^{X_1}=\frac{x}{2}+\sqrt{(\frac{x}{2})^2+\alpha_1\alpha_6}\\
\alpha_5e^{X_2}=\frac{x}{2}+\sqrt{(\frac{x}{2})^2+\alpha_2\alpha_5}\\
\alpha_4e^{X_3}=-\frac{x}{2}+\sqrt{(\frac{x}{2})^2+\alpha_3\alpha_4}\\
\alpha_7e^{X_0}=-\frac{x}{2}+\sqrt{(\frac{x}{2})^2+\alpha_0\alpha_7}\\
e^{X_0+X_3}=e^{X_1+X_2}.\ea\es$

Plugging the four first relations in the last one, one deduces
$(\frac{x}{2}+\sqrt{(\frac{x}{2})^2+\alpha_1\alpha_6})(\frac{x}{2}+\sqrt{(\frac{x}{2})^2+\alpha_2\alpha_2})\alpha_4\alpha_7=(-\frac{x}{2}+\sqrt{(\frac{x}{2})^2+\alpha_3\alpha_4})(-\frac{x}{2}+\sqrt{(\frac{x}{2})^2+\alpha_0\alpha_7})\alpha_6\alpha_5,$

that is, multiplying by the conjugate expressions on the right hand side
$$\prod_{j=0}^4(\frac{x}{2}+\sqrt{\left(\frac{x}{2}\right)^2+\alpha_j\alpha_{7-j}})= \alpha_0\alpha_3\alpha_5\alpha_6.$$
The equation of the frontier $X_0=0$ yields $a=x$, hence the required relation. The equation of the frontier $X_1=0$ yields $b=-x$, the equation of the frontier $X_2=0$ yields $c=-x$, the equation of the frontier $X_3=0$ yields $d=x$, hence the results.

\item In the case of logit, consider $a_0=\ln2(\alpha_0\phi'(X_0)-\alpha_7\phi'(-X_0))= \frac{\alpha_7e^{X_0}-\alpha_0}{1+e^{X_0}}$, that is $e^{X_0}=\frac{\alpha_0+a_0}{\alpha_7-a_0}$. The equations for the point of minimum $(X_1, X_2, X_3)$ are

$$
\bs\begin{array}{l} \frac{\alpha_6e^{X_1}-\alpha_1}{1+e^{X_1}}=-a_0\\
\frac{\alpha_5e^{X_2}-\alpha_2}{1+e^{X_2}}=-a_0\\
\frac{\alpha_4e^{X_3}-\alpha_3}{1+e^{X_3}}=a_0\\
e^{X_1}e^{X_2}= e^{X_0}e^{X_3}.\end{array}\es
\Leftrightarrow
\bs\begin{array}{l}e^{X_1}=\frac{\alpha_1-a_0}{\alpha_6+a_0}\\
e^{X_2}=\frac{\alpha_2-a_0}{\alpha_5+a_0}\\
e^{X_3}=\frac{\alpha_3+a_0}{\alpha_4-a_0}\\
e^{X_1}e^{X_2}= e^{X_0}e^{X_3}\end{array}\es
\Leftrightarrow
\bs
\begin{array}{l}e^{X_1}=\frac{\alpha_1-a_0}{\alpha_6+a_0}\\
e^{X_2}=\frac{\alpha_2-a_0}{\alpha_5+a_0}\\
e^{X_3}=\frac{\alpha_3+a_0}{\alpha_4-a_0}\\
\frac{\alpha_1-a_0}{\alpha_6+a_0}\frac{\alpha_2-a_0}{\alpha_5+a_0}\frac{\alpha_4-a_0}{\alpha_3+a_0}= \frac{\alpha_0+a_0}{\alpha_7-a_0}.\end{array}\es$$
The equation that one has to solve is thus
\begin{equation}
-(\alpha_1-a_0)(\alpha_2-a_0)(\alpha_4-a_0)(\alpha_7-a_0)+(\alpha_0+a_0)(\alpha_3+a_0)(\alpha_5+a_0)(\alpha_6+a_0)=0.
\end{equation}

Using $\sum_j\alpha_j=1$, one checks that this equation writes
\begin{equation}
  a_0^3+A a_0^2+ B a_0+ C=0.
\end{equation}
Replacing $a_0$ by $-a/2$ for $X_0=0$ or by $b/2$ for $X_1=0$ or by $c/2$ for $X_2=0$ or by $-d/2$ for $X_3=0$, one obtains the equations of the frontiers $X_j=0$.
\end{itemize}

\end{document}